\theoremstyle{plain}
\newtheorem{theorem}{Theorem}[section]
\newtheorem{lemma}[theorem]{Lemma}
\newtheorem{corollary}[theorem]{Corollary}
\theoremstyle{definition}
\newtheorem{definition}[theorem]{Definition}
\newtheorem{assumption}[theorem]{Assumption}
\theoremstyle{remark}
\newcommand{\kl}{\;\|\;}
\newcommand{\f}{\; ; \;}
\newcommand{\cn}{{\rm cn}}
\newcommand{\mathleft}{\@fleqntrue\@mathmargin0pt}
\newcommand{\mathcenter}{\@fleqnfalse}
\begin{document}

%

%

\twocolumn[

\aistatstitle{Causal Bandits with General Causal Models and Interventions}

\aistatsauthor{Zirui Yan$^* \quad $  Dennis Wei$^\dagger \quad$ Dmitriy Katz-Rogozhnikov$^\dagger \quad$ Prasanna Sattigeri$^\dagger \quad $ Ali Tajer$^*$ }

\aistatsaddress{$^*$Rensselaer Polytechnic Institute \And $^\dagger$IBM Research} ]

\begin{abstract}
This paper considers causal bandits (CBs) for the sequential design of interventions in a causal system. The objective is to optimize a reward function via minimizing a measure of cumulative regret with respect to the best sequence of interventions in hindsight. The paper advances the results on CBs in three directions. First, the structural causal models (SCMs) are assumed to be \emph{unknown} and drawn arbitrarily from a general class $\mcF$ of Lipschitz-continuous functions. Existing results are often focused on (generalized) linear SCMs. Second, the interventions are assumed to be \emph{generalized soft} with any desired level of granularity, resulting in an infinite number of possible interventions. The existing literature, in contrast, generally adopts \emph{atomic} and \emph{hard} interventions. Third, we provide general upper \emph{and} lower bounds on regret. The upper bounds subsume (and improve) known bounds for special cases. The lower bounds are generally hitherto unknown. These bounds are characterized as functions of the (i)~graph parameters, (ii)~eluder dimension of the space of SCMs, denoted by $\operatorname{dim}(\mcF)$, and (iii)~the covering number of the function space, denoted by $\cn(\mcF)$. 
Specifically, the cumulative achievable regret over horizon $T$ is $\mcO(K d^{L-1}\sqrt{T\operatorname{dim}(\mcF) \log(\cn(\mcF))})$, where $K$ is related to the Lipschitz constants, $d$ is the graph’s maximum in-degree, and $L$ is the length of the longest causal path. The upper bound is further refined for special classes of SCMs (neural network, polynomial, and linear), and their corresponding lower bounds are provided.
\end{abstract}


\section{INTRODUCTION}
Causal Bayesian networks are probabilistic graphical models that formalize the cause-effect relationships within a network of interacting entities. The graph nodes represent the entities, and directed edges encode their statistical cause-effect dependencies. Structural causal models (SCMs) further specify structural equations that quantify the cause-effect relationships. In the analysis of causal networks, \emph{interventions} are the key mechanism for uncovering cause-effect relationships. Interventions are external treatments that induce variations in the Bayesian network's structure or its probabilistic models. Consequently, experimental design in causal networks naturally hinges on designing interventions, which
involves two dimensions: (i) selecting the graph nodes for intervening and (ii) specifying the appropriate interventions (changes) on the selected nodes. Intervening at a node can be as simple as forcing the random variables associated with it to take specified fixed values (hard intervention) or as complex as modifying the statistical model between the node and its causal parents in the graph (soft interventions).

\emph{Causal bandits}, introduced by~\citet{lattimore2016causal}, provide an effective framework for formalizing and analyzing the sequential design of experiments in causal networks. The objective of causal bandits is to design a sequence of interventions that can optimize an inferential task over the network. Causal bandits are so-called because the interventions are modeled as bandit \emph{arms} and the inference quality is modeled as the bandit \emph{reward}. Causal bandits have a wide range of applications in  robotics~\citep{ahmed2021causalworld}, gene interaction networks~\citep{badsha2019learning} , drug discovery~\citep{liu2020reinforcement}, advertising platforms~\citep{zhao2022mitigating}, and the broader economic domain.

\begin{table*}[htb]
\label{tab:bounds1}
    \centering
    \begin{tabular}{|c|l|l|l|}
        \hline
         SCM class & Upper bounds & Class effect & Lower bounds  \\ \hline \hline
         General & $  \mcO\left(K d^{L-1} \sqrt{T\log T\; \varpi_\mcF} \right)$ & $\varpi_\mcF = \operatorname{dim}(\mcF)\left[1+\frac{\log N+\log \cn(\mcF)}{\log T}\right] $
                &  -- 
                \\ \hline
         Linear & $   \mcO\Big(Kd^{L}\sqrt{T\log T\; \varpi_{\rm L}} )\Big)$ & $\varpi_{\rm L} = \log T \left[1+\frac{\log N}{d\log T} \right]$
                & $\Omega\left(Kd^{\frac{L}{2}-1}\sqrt{T}\right)$\\ \hline 
         Polynomial& $  \mcO \Big(K d^{L+1}\sqrt{T\log T\; \varpi_{\rm P}}\Big)$ for $p=2$ & $\varpi_{\rm P} = \log T \left[1+\frac{\log N}{d\log T} \right]$ 
         & $\Omega\left(K \sqrt{T}\right)$
         \\ \hline
         \makecell{Neural network\\(width $s\geq d$)} & $  \mcO\Big(\bar Kd^{L-1}\sqrt{T\log T\; \varpi_{\rm N}}\Big)$ & $\varpi_{\rm N} = 
         s \log T \left[1+\frac{\log N}{d\log T} \right]$
                & $\Omega\left(K d^{\frac{L}{2}-1}\sqrt{T}\right)$ \\        \hline 
    \end{tabular}
    \caption{Comparison of upper and lower bounds on regret for general SCMs and special cases.}
    \vspace{0.1 in}
\end{table*}

\noindent\textbf{Contributions.} We consider a given causal graph with (i) \emph{unknown} SCMs, (ii) \emph{general soft} intervention models on the graph nodes, and SCMs with an arbitrary level of granularity. Following the causal bandit framework, the objective is the sequential and data-adaptive design of interventions to optimize an \emph{unknown} reward function. We propose the general causal bandit (GCB) algorithm to design interventions in this general setting. Lower and upper bounds are established for the cumulative regret of the algorithm, that is, the regret accumulated over time with respect to the best intervention in hindsight. The regret results are subsequently specialized and refined for three main classes of SCMs (linear, polynomial, and neural networks).

\noindent\textbf{Key observations.} The SCMs are assumed to belong to general classes of functions $\mcF$ (with mild regularity conditions). We provide an overview below of how the regret bounds are affected by graph topology, SCMs, and intervention models. The regret bounds are summarized in Table~\ref{tab:bounds1}. \vspace{-.1 in}
\begin{itemize}
    \item \textbf{SCM class:} The achievable regret depends on two intrinsic measures associated with the SCM class $\mcF$, namely the eluder dimension of $\mcF$ ($\operatorname{dim}(\mcF)$) and the covering number of $\mcF$ ($\cn(\mcF)$). 
    \vspace{-0.15 in}
    \item \textbf{Graph topology:} The regret lower and upper bounds have diminishing dependence on the graph size $N$ and instead depend on the graph through two key parameters: the maximum in-degree ($d$) and the length of the longest causal path ($L$).  
    \vspace{-0.05 in}
    \item \textbf{Intervention granularity:} The regret lower and upper bounds are \emph{independent} of the cardinality of the parameters specifying the interventions. 
    \vspace{-0.05 in}
    \item \textbf{Lower bound:} A general minimax lower bound on regret is given without restrictions on $\mcF$. This lower bound is further refined for special classes of functions.
 \item \textbf{Dependence on horizon:} The regret lower and upper bounds both scale with the time horizon $T$ with the rate $\tilde \mcO(\sqrt{T})$.
    \vspace{-0.05 in}
\end{itemize}

\noindent\textbf{Relevant literature.} The existing literature on causal bandits can be categorized along four dimensions: (i)~whether the causal graph is known or unknown, (ii)~whether the pre- and post-intervention probability distributions of the causal graph are known or unknown, (iii)~the class of SCMs adopted, and  (iv)~the nature of interventions (e.g., granularity). Most existing studies focus on the first two dimensions, with the last two aspects highly under-investigated. Specifically, based on the availability of the causal graph and the interventional distributions, there are broadly four distinct categories of studies. The initial study by \citet{lattimore2016causal} assumes knowledge of both the causal graph and all the interventional distributions. Similar assumptions are adopted by \citet{bareinboim2015bandits,sen17,lu2020regret,nair2021budgeted}. There has been progress towards relaxing these assumptions. Specifically, the study by \citet{bilodeau2022adaptively} assumes that the causal graph structure is unknown while the interventional distributions are known.
In contrast, settings in which the causal graph is known but the interventional distributions are unknown are studied by \citet{yabe18causal,maiti2022causal,xiong2022pure,feng2023combinatorial,sawarni2023learning}. Finally, the case of unknown graph structure and unknown interventional distributions is investigated by \citet{lu2021causal,de2022causal,bilodeau2022adaptively,feng2023combinatorialwioutgraph,konobeev2023causal,malek2023additive}.

These studies, generally, focus on idealized models that involve (i)~ideal \emph{do} interventions, which completely remove parental causal effects and can be restrictive in many applications, and (ii)~simple classes of structural causal models. The SCMs investigated include linear  \citep{maiti2022causal} or binary generalized linear functions \citep{xiong2022pure,jaber2020causal}. 

Two recent studies by~\ \citet{varici2022causal,sussex2022model} consider $soft$ interventions, which, instead of cutting off parental causal effects, only modify the conditional distribution of the nodes. The study by \citet{varici2022causal} assumes a \emph{linear} SCM with a \emph{binary} intervention model per node. In contrast, we assume \emph{general} SCMs with an arbitrary \emph{continuum} of interventions. Furthermore, our general model can be specialized to linear SCMs and in this case, our achievable regret bound improves on that of~\citep{varici2022causal}.

The study by~\citet{sussex2022model} investigates the class of \emph{Gaussian processes} within a reproducing kernel Hilbert space (RKHS), assumes controllable variables for specifying interventions, and characterizes achievable regret bounds. In contrast, we focus on general SCMs, and our achievable regret has two significant distinctions. First, the achievable regret of~\citet{sussex2022model} scales \emph{linearly} with the graph size $N$. In contrast, in the most general form, our regret scales with $\sqrt{\log N}$, and more importantly, for most special cases, it scales with $\sqrt{\frac{\log N}{\log T}}$, implying a diminishing dependence on $N$ as the horizon $T$ grows. Furthermore, the regret upper bound in~\citet{sussex2022model} relies on the maximum information gain~\citep{srinivas2009gaussian}. This renders an achievable regret in which all model parameters grow exponentially with $L$. As a result, the associated cumulative regret bound loses its sublinear growth guarantee in $T$ in some models, e.g., Gaussian processes under bounded parameter RKHS with polynomial spectral decay  (see Appendix~\ref{sec:compareGP} for more details). Furthermore, besides the achievable bound, we also characterize a general minimax lower bound and its refined counterparts for special cases. 

\noindent\textbf{Notations.}
For a positive integer $N\in \N$, we define $[N]\triangleq \{1,\dots,N\}$. The Euclidean norm of vector $X\in\R^d$ is denoted by $\norm{X}$. For a function $f$ and a set of its arguments $\mcZ$,  we denote the empirical quadratic error of $f$ with respect to the set $\mcZ$ by $\norm{f}_{\mcZ}^2 \triangleq  \sum_{z\in\mcZ} f^2(z)$. The infinity norm of function $f:\mcZ\to\R$ is denoted by $\norm{f}_{\infty} \triangleq \sup_{z\in\mcZ}|f(z)|$. For a set $\mcA$, we use $|\mcA|$ to denote the cardinality of the set.

\section{GCB MODEL AND OBJECTIVE}
\label{sec:GCB}
\subsection{Causal Graph with General SCM} 
Consider a directed acyclic graph (DAG) $\mathcal{G}(\mathcal{V},\mathcal{E})$, in which $\mathcal{V}=[N]$ is the set of nodes, and $\mathcal{E}$ denotes the set of directed edges. The ordered tuple $(i,j)\in\mathcal{E}$ indicates a directed edge from node $i\in [N]$ to node $j\in [N]$.  We refer to the parents of node $i\in [N]$ by $\Pa(i)$. We define the \emph{causal depth} of node $i$ as the length of the longest directed causal path that ends at node $i\in[N]$ and denote it by $L_i$. We define $d_i\triangleq |\Pa(i)|$ as the in-degree of node $i$ and the maximum degree in $\mcG$ is denoted by $d\triangleq \max_{i\in[N]}d_i$, and the length of the longest directed causal path in $\mcG$ is denoted by 
$L=\max_{i\in[N]}L_i$. We define $\bX\triangleq (X_1,\dots, X_n)^\top$ as the vector of random variables associated with the nodes. We consider the following general SCM:
\begin{equation}
\label{eq:SEM_1}
X_{i}  \; =\; f_i(X_{\Pa(i)})\; + \; \epsilon_i \ , \quad \forall i\in[N] \ ,
\end{equation}
where functions $\{f_i: i \in[N]\}$ are \textbf{unknown} and $f_i$ belongs to a specified class of functions denoted by $\mcF_i$ (e.g., linear, polynomial, or neural network). Furthermore, our analysis is sometimes specialized to \emph{parameterized} (i.e., finite-dimensional) families of distributions. In such cases, we denote the parameters of model $f_i$ by~$\btheta_i$. The terms $\{\epsilon_i: i \in[N]\}$ account for additive noise terms, according to which we define the noise vector $\bepsilon\triangleq (\epsilon_1,\epsilon_2,\cdots, \epsilon_N)^\top$. Noise terms are independent across nodes.  Our study centers on the causal bandit problem over a \emph{known} DAG.

\subsection{Generalised Soft Interventions}
The conditional distribution of $X_i$ given its parents, i.e., $\P(X_i\med X_{\Pa (i)})$, is assumed to be \textbf{unknown}. A soft intervention on node $i$ changes this conditional distribution. We consider a \emph{generalized} soft intervention mechanism such that intervening on node $i$ changes $\P(X_i\med X_{\Pa (i)})$ to a \emph{class of} distinct distributions. Interventions on node $i$ are parameterized by $a_i\in\mcA_i\subseteq \R$, and the intervention spaces $\{\mcA_i:i\in[N]\}$ can be discrete or a continuum of models.  This generalized model subsumes the widely used atomic intervention models. To formalize the effect of interventions, we incorporate their impact as changes on the functions $\{f_i:i\in[N]\}$. Specifically, under an interventional on node $i$ parameterized by $a_i$, function $f_i\in\mcF_i$ changes to another function in the same class $\mcF_i$, which we denote by $f_i(\cdot ; a_i)$. Hence, the interventional counterpart of the SEMs in~\eqref{eq:SEM_1} under an intervention parameterized by $a_i\in\mcA_i$ becomes
\begin{equation}
\label{eq:SCM}
X_{i} \; = \; f_i(X_{\Pa(i)} \f a_i)+\epsilon_i \ , \quad \forall i\in[N] \ .
\end{equation}
To unify the observational and interventional model in~\eqref{eq:SEM_1}~and~\eqref{eq:SCM}, without loss of generality, we use the convention that setting $a_i=0$ in~\eqref{eq:SCM} generates the observational model.
We allow simultaneous intervention at any desired set of nodes. Hence, an intervention is specified by the intervention vector $\ba \triangleq (a_1,\cdots, a_N)^\top$, generated from the intervention space  $\mcA\triangleq \prod_{i=1}^N \mcA_i$. Under intervention $\ba\in\mcA$, we denote the \emph{unknown} probability distribution of $\bX$ by $\P_\ba$. Finally, we define function vector $\bef \triangleq (f_1,\cdots, f_N)^\top$ and function space $\mcF\triangleq \prod_{i=1}^N \mcF_i$.

\subsection{Problem Statement}
The objective in causal bandits is designing a sequence of intervention vectors to optimize a reward measure. Following the convention of the causal bandit literature, we designate $N$ as the reward node. Hence, the expected reward under intervention $\ba\in\mcA$ is given by
\begin{equation}
    \mu_\ba \triangleq \E_\ba \left[ X_{N} \right] \ ,
\end{equation}
where $\E_\ba$ denotes expectation under measure $\P_\ba$. The optimal intervention $\ba^*\in\mcA$ is defined as the intervention that maximizes the reward, i.e.,
\begin{equation}
\label{eq:Opt_Int}
    \ba^* \triangleq \argmax_{\ba\in\mcA}\mu_\ba \ .
\end{equation}
To identify $\ba^*$, the learner sequentially interacts with the underlying causal system. The sequence of interventions over time is denoted by $\{\ba(t)\in\mcA: t\in\N\}$. Upon intervention $\ba(t)$ at time $t\in\N$, the learner observes the graph data $\bX(t)=(X_1(t),\dots, X_N(t))^\top$. We denote the filtration generated by the choices of interventions and associated values by
\begin{equation}
    \mcH_t \triangleq \sigma\left(\ba(1),\bX(1),\dots,  \ba(t),\bX(t), \ba(t+1)\right)\ .
\end{equation}
The learner's objective is to minimize the cumulative regret incurred over time with respect to the best intervention $\ba^*$ in hindsight. We define $r(t) \triangleq \mu_{\ba^*} -\mu_{\ba(t)} $ as the average regret incurred at time $t$ and denote the associated average cumulative regret over horizon $T$ by $R(T)=\sum_{t=1}^T r(t)$, We consider the following two standard cumulative regret measures. 

{\textbf{Frequentist regret:}} We assess the expected cumulative regret denoted by 
    \begin{equation}\label{eq:frequentist_regret}
    \E[R(T)] \triangleq T\mu_{\ba^*} -\sum_{t=1}^{T} \mu_{\ba(t)} \ . 
\end{equation}
{\textbf{Bayesian regret:}} We evaluate the Bayesian regret as the average cumulative regret over the entire class $\mcF$. For this purpose, we denote the cumulative regret associated with $\bef\in\mcF$ by $R_\bef(T)$ and accordingly define 
    \begin{equation}
       {\rm BR}(T) \triangleq \E_{\mcF} \E[R_\bef(T)] \ .
    \end{equation}
We adopt the following regularity assumptions.
\begin{assumption}[Sub-Gaussian noise]\label{ass:noise}
     We assume the noise terms are conditionally sub-Gaussian, i.e., for all $\lambda \in \R$ we have
     \begin{equation}
         \E \left[ \exp(\lambda \epsilon_i(t))\mid \mcH_{t-1} \right] \leq \exp(\lambda^2/2 )\ , \;\;  \lambda\in\R \ .
    \end{equation}
\end{assumption}
\begin{assumption}[Lipschitz continuity]\label{ass:lipschitz} The functions $f_i$ in class $\mcF_i$ are Lipschitz-continuous with Lipschitz constant $K_i$, i.e., for all $a_i\in\mcA_i$ and $x,y\in \R^{d_i}$, we have 
 \begin{equation}
     \left|f_i(x\; ; \;a_i)-f_i(y\; ; \;a_i)\right|\leq K_i\norm{x-y} \ .
 \end{equation}
 \end{assumption}
\begin{assumption}[Function boundedness]\label{ass:bound} For each class $\mcF_i$ there exists a constant $C_i\in\R$ such that 
\begin{equation}
    |f_i(X_{\Pa(i)}\; ; \; a_i)| \leq C_i \ .
\end{equation}
\end{assumption}
This assumption is natural for some SCMs, e.g., neural network SCMs with sigmoid activation functions. In some other SCMs, this ensures the system's boundedness, a common assumption in causal bandits~\citep[e.g.,][]{varici2022causal,sussex2022model}.

\section{PRELIMINARIES} 
We provide some definitions that are instrumental to the analysis and characterizing the regret.

\subsection{Eluder Dimension and Covering Number}
Our regret analyses show that the achievable regret is a function of the inherent complexity of the function classes $\{\mcF_i :i\in[N]\}$. In this section, we specify two measures of functional complexity that are instrumental in specifying achievable regrets. These measures aim to capture an approximate structure of the function class based on target approximation tolerances. The first measure is the \emph{eluder dimension} of a function class \citep{russo2013eluder}. It measures the degree of dependence among the function class as the worst-case sample complexity required to predict the values of unobserved interventions using the observed samples. We refer to \citet{huang2021short} and \citet{li2022understanding} for more discussion on the eluder dimension.
The second measure is the \emph{covering number}, which provides insight into the function class's propensity towards statistical over-fitting. This complexity measure is widely used in analyzing infinite function spaces~\citep{wainwright2019high}. 

To formalize these definitions, we use an extended definition of $\epsilon$-dependence. This definition is instrumental to characterizing how distinguishable the functions $\bef\in\mcF$ are, which can be used to specify a measure of complexity for $\mcF$. Corresponding to each node~$i$, consider $n$ arbitrary inputs to function $f_i$ as specified in~\eqref{eq:SCM}. Denote these inputs by $\{Z_i(m):m\in[n]\}$ where $Z_i (m)\triangleq (X_{\Pa(i)}(m), a_i(m))$. Denote the space of these inputs by  $\mcZ_i\triangleq \R^{d_i}\times \mcA_i$. Based on these, we define $\epsilon$-dependence among the members of $\mcZ_i$ as follows.
\begin{definition}[$\epsilon$-dependent]\label{def:dependence}
    We say $Z_i\in\mcZ_i$ is $\epsilon$-dependent on $\{Z_i(m):m\in[n]\}$ with respect to $\mcF_i$ if any pair of functions $f_i, \tilde f_i\in\mcF_i$ that satisfy
    \begin{align}
        \sum_{m=1}^n(f_i(Z_i(m))-\tilde{f_i}(Z_i(m)))^2\leq \epsilon^2\ ,        
    \end{align}
     also satisfies $|f(Z_i)-\tilde{f}(Z_i)|\leq \epsilon$. 
\end{definition}
Based on this similarity measure, the $\epsilon$-eluder dimension of the function class $\mcF_i$ is defined as follows.
\begin{definition}[$\epsilon$-eluder dimension] 
\label{def:eluder}
The $\epsilon$-eluder dimension ${\rm dim}(\mcF_i, \epsilon)$ is the largest $n$ such that for some $\epsilon'\geq \epsilon$, there exists an ordered set $\{Z_i(m):m\in[n]\}$ where each element is $\epsilon'$-independent of its predecessors. Based on the individual eluder dimensions of the classes, we define the maximum eluder dimension as
\begin{align}
        \operatorname{dim}(\mcF,\epsilon) & \triangleq \max_{i\in[N]} \operatorname{dim}(\mcF_i, \epsilon)\  .
    \end{align}
\end{definition}

\begin{definition}[Covering number] \label{def:covering} For a given function class $\mcF_i$ and any constant $\alpha>0$, we say that the set $\mcC\subseteq \mcF_i$ is an $\alpha$-cover of $\mcF_i$ if for any function $f_i\in\mcF_i$ we can always find another function $\tilde f_i\in\mcC$ such that $\|f_i-\tilde f_i\|_\infty \leq \alpha$.
The $\alpha$-covering number associated with $\mcF_i$ and $\alpha$, denoted by $\cn_\alpha(\mcF_i)$, is the smallest cardinality $\mcC$ among all possible $\alpha$-cover sets~$\mcC$. Accordingly, for the class $\mcF$ we define
\begin{align}
        \cn_{\alpha}(\mcF)& \triangleq  \max_{i\in[N]} \cn_{\alpha}(\mcF_i)\ .
    \end{align}
\end{definition}

\section{GCB ALGORITHMS}
\label{sec:algorithm}
This section serves a two-fold purpose. First, it provides constructive proofs for regret guarantee results for any arbitrary family of functions $\mcF$.
Secondly, it provides algorithms for parametric families of functions that achieve the regret guarantees. More specifically, some decision rules involve solving \emph{functional} optimization problems for choosing an optimal choice of $f_i\in\mcF_i$. Solving such problems has different natures in the parametric (finite-dimensional) and non-parametric (infinite-dimensional) function spaces $\mcF$. 

In the infinite-dimensional setting, it requires functional optimization. Unless solvable via the calculus of variations, solving such problems is difficult to carry out in practice. Nevertheless, a guaranteed existence of a solution suffices for establishing the regret guarantees for the class. On the other hand, in the parametric class of functions, which are the classes encountered in all practical settings, finding an optimal $f_i$ becomes equivalent to solving a parametric optimization problem, which often can be solved numerically, rendering the parameterized family of distributions amenable to practical implementations. For such cases, in addition to the regret guarantees, we also have algorithms that achieve the regret upper bounds.
We present algorithms for the frequentist and Bayesian settings and provide the associated regret analysis in Section~\ref{sec:regret}. 

\noindent\textbf{Algorithm overview.} The central part of our algorithm for the frequentist setting is designing a procedure for estimating the optimal intervention vector $\ba^*$ specified in~\eqref{eq:Opt_Int}. The estimator is designed such that we can progressively, over time, control the size of the confidence set we form for vector function $\bef$ Given the confidence intervals, we adopt a UCB-based strategy to choose interventions $\ba^*$. We refer to this algorithm as the {\bf G}eneral {\bf C}ausal {\bf B}andit UCB (GCB-UCB). For the Bayesian setting, we find the posterior likelihood of the function vector $\bef\in\mcF$ based on the available data and adopt an approach based on Thompson Sampling (TS) to identify an intervention that maximizes the expected posterior reward. We refer to this algorithm as the {\bf G}eneral {\bf C}ausal {\bf B}andit TS (GCB-TS).

\begin{algorithm}[htb]
\caption{GCB-UCB}
\label{alg:eluder_algorithm}
\begin{algorithmic}[1]
\State \textbf{Input:} Horizon $T$, causal graph $\mcG$, intervention set $\mcA$ and function spaces $\mcF$.
\State \textbf{Initialization:} Set $\beta_0(\mcF_i,\delta,\alpha)$ as in \eqref{equ:beta}. 
\For {$t = 1,2,\ldots,T$}
    \State Compute the least squares estimates $\tilde{f}_{i,t}$ as in \eqref{equ:LS_estimates} for $i \in [N]$.
    \State Construct the confidence set $\mcC_{i,t}$ as in \eqref{equ:confidence_set} for $i \in [N]$.
    \State Select interventions $\ba(t)$ according to \eqref{equ:arm_selection}.
    \State Pull $\ba(t)$, observe $\bX(t) = [X_1(t),\dots, X_N(t)]$. \label{line:play_best_arm}
\EndFor
\end{algorithmic}
\end{algorithm}

\begin{algorithm}[htb]
\caption{GCB-TS}
\label{alg:eluder_algorithm_ts}
\begin{algorithmic}[1]
\State \textbf{Input:} Horizon $T$, causal graph $\mcG$, intervention set $\mcA$, function spaces $\mcF$ and prior distribution $\pi_0$.
\For {$t = 1,2,\ldots,T$}
    \State $\tilde \bef \sim \pi_{t-1}(\cdot \mid \mcZ(t)).$
    \State Select interventions $\ba(t)$ according to \eqref{equ:arm_selection_ts}.
    \State Pull $\ba(t)$, observe $\bX(t) = [X_1(t),\dots, X_N(t)]$. \label{line:play_best_arm_ts}
\EndFor
\end{algorithmic}
\end{algorithm}

To formalize the algorithms, by recalling the definition $Z_i (t)\triangleq (X_{\Pa(i)}(t), a_i(t))$, we define 
\begin{equation}
    \mcZ_i(t)\triangleq \{Z_i (s)\;:\; s\in[t-1]\}\ ,
\end{equation}
which is the set of $t-1$ initial interventions on node $i$ and the data generated by its parents. Similarly, we denote the set of all interventions and data generated in the system up to time $t-1$ by 
\begin{equation}
    \mcZ(t) \triangleq \{(\bX(s),\ba(s))\;:\; s\in[t-1] \}\ .
\end{equation}

\subsection{GCB-UCB Algorithm}
We adopt the ordinary least-squares method to estimate the functions. Based on the choices of interventions and the interventional data up to time $t$ collected in $\mcZ_i(t)$, we form an estimate for $f_i$ at time $t$ as follows.
\begin{equation}
        \label{equ:LS_estimates}
    \tilde{f}_{i,t} \triangleq  \argmin_{f_i\in \mcF_i}\sum_{s=1}^{t-1}\Big[f_i(X_{\Pa(i)}(s) \, ; \, a_i(s))-  \!X_i(s)\Big]^2 .
\end{equation}
Based on this estimate, we construct a confidence interval for the estimated function $\tilde{f}_{i,t}$ as follows.
\begin{equation}
    \label{equ:confidence_set}
    \!\! \mcC_{i,t} \! \triangleq \! \left\{ g \in\mcF_i:\norm{g-\tilde{f}_{i,t}}_{\mcZ_i(t)}\!\! \leq \! \sqrt{\beta_t(\mcF_i,\delta,\alpha_i)} \right\} ,
\end{equation}
in which $\beta_t(\mcF_i,\delta,\alpha_i)$ controls the radius of the confidence interval, and it is specified based on the $\alpha_i$-covering number of the class $\mcF_i$, i.e., $\cn_{\alpha_i}(\mcF_i)$, as follows. 
\begin{align}
     &\beta_t(\mcF_i,\delta,\alpha_i) \triangleq \nonumber  \\
   \label{equ:beta}  & \ \ \  8 \log\left(\frac{\cn_{\alpha_i}(\mcF_i)}{\delta}\right) + 2\alpha_i t\left( \! 8C_i+\sqrt{8\ln\frac{4t^2}{\delta}}\right) .
\end{align}
Besides the covering number, an intrinsic property of the function space, the radius $\beta_t(\mcF_i,\delta,\alpha_i)$ depends on $\alpha_i$ and $\delta$. The role of $\alpha_i$ is controlling the discretization parameter for covering number, which also influences the second summand that is related to the uncertainty of estimators. Furthermore, the role of $\delta$ is controlling the confidence level of the confidence set. Based on this, the confidence radius has two components. One component is a constant (time-invariant) that depends on the covering number $\cn_{\alpha}(\mcF_i)$ and is decreasing with $\delta$. The second component is increasing in time $t$ and is decreasing with $\delta$. Finally, we define the confidence space for the entire causal system as $\mcC_{t}\triangleq \prod_{i=1}^{N} \mcC_{i,t}$. Based on these definitions, corresponding to intervention $\ba\in\mcA$ at time $t$ we define the upper confidence bound as follows.
\begin{equation}
    \operatorname{UCB}_{\ba}(t) \triangleq \max_{\bef\in\mcC_{t}} \E_\ba[X_N\mid \bef ]\ .
\end{equation}
Accordingly, the intervention at time $t$ is specified by
\begin{equation}
\label{equ:arm_selection}
    \ba(t) \triangleq \argmax_{\ba\in \mcA} \operatorname{UCB}_\ba(t) \ ,
\end{equation}
and finally, we define
\begin{align}
\label{eq:barf}
     \bar \bef_t \triangleq \argmax_{\bef\in\mcC_{t}} \operatorname{UCB}_{\ba(t)}(t)\ .
\end{align}

\subsection{GCB-TS Algorithm}
A TS-based algorithm gradually refines the posterior distributions of the reward for each intervention and selects the interventions by sampling from their posterior distributions. The interventions are selected sequentially in a way that they balance the exploitation and the exploration processes. To this end, a TS algorithm must update all arms' posterior distributions. For the GCB-TS algorithm, we assume a prior $\pi_0$ is known in advance, and it is periodically updated over time to construct the posterior $\pi_t$ after gathering the observed samples $\mcZ(t)$. The GC-TS algorithm at time $t$, samples a function $\tilde{\bef}$ from the posterior $\pi_{t-1}$. Subsequently, an intervention is selected such that it maximizes the expected reward under $\tilde{\bef}$, i.e., 
\begin{equation}
\label{equ:arm_selection_ts}
    \ba(t) = \argmax_{\ba\in \mcA} \E_\ba[X_N\mid \tilde \bef ]\ .
\end{equation}

\section{REGRET GUARANTEE}
\label{sec:regret}
This section has two parts. In Section~\ref{sec:regret:bounds}, we establish regret upper and lower bounds for a general class of functions $\mcF$. These bounds explicitly capture the impact of the causal graph structure (through $d$ and $L$) and the function class $\mcF$ (via its eluder dimension, covering number, and Lipschitz constants). In Section~\ref{sec:refinedbound}, we specialize and refine these bounds for three special classes of SCMs: linear, polynomial, and neural network, which cover the prevalent SCMs.

\subsection{Regret Upper Bounds}
\label{sec:regret:bounds}

We first provide upper bounds that capture the performance of the GCB-UCB and GCB-TS algorithms, followed by a minimax lower bound that is independent of the function class $\mcF$. The key step in these analyses is evaluating the impact of the causal structure on the regret. Specifically, once we form an estimate of the function $f_i$ for any node $i\in[N]$, the estimation error for this node propagates through the causal path from node $i$ to node $N$, affecting the estimation errors of all the nodes on the path as well as the final reward node. We refer readers to this as the \emph{compounding} effect of the causal graph.

We begin with recalling a concentration lemma provided in \citet{russo2013eluder}. This lemma addresses the confidence set's precision. Specifically, this lemma shows that the confidence set $\mcC_{i,t}$ contains $f_i$  with a high probability at all times and specifies the precision of the estimates $\{\tilde f_{i,t}: t\in\N\}$ formed by~\eqref{equ:LS_estimates}. 
\begin{lemma}
\label{lemma:concentration} 
For any $i\in[N]$ and $\forall \delta>0$ and $\forall \alpha>0$
\begin{equation}
    \P(f_i\in \mcC_{i,t}) \geq 1-2\delta\ , \qquad \forall t\in[T]\ .
\end{equation}
\end{lemma}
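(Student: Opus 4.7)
The plan is to reproduce the standard self-normalized/covering argument of \citet{russo2013eluder}, adapted to our SCM setting. The basic identity is that the true $f_i$ satisfies $X_i(s) = f_i(Z_i(s)) + \epsilon_i(s)$ with $\epsilon_i(s)$ conditionally sub-Gaussian (Assumption~\ref{ass:noise}). By the defining property of $\tilde{f}_{i,t}$ in \eqref{equ:LS_estimates}, for any $g \in \mcF_i$,
\begin{equation*}
    \sum_{s=1}^{t-1}\bigl(X_i(s) - \tilde{f}_{i,t}(Z_i(s))\bigr)^2 \;\leq\; \sum_{s=1}^{t-1}\bigl(X_i(s) - g(Z_i(s))\bigr)^2 .
\end{equation*}
Setting $g = f_i$, expanding the squares, and regrouping terms, I would derive the key inequality
\begin{equation*}
    \norm{\tilde{f}_{i,t} - f_i}_{\mcZ_i(t)}^2 \;\leq\; 2\sum_{s=1}^{t-1}\epsilon_i(s)\bigl(\tilde{f}_{i,t}(Z_i(s)) - f_i(Z_i(s))\bigr).
\end{equation*}

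The main obstacle is that $\tilde{f}_{i,t}$ is data-dependent and ranges over the possibly infinite class $\mcF_i$, so a direct sub-Gaussian tail bound cannot be applied to the right-hand side. I would handle this by a covering argument. Let $\mcC$ be a minimal $\alpha_i$-cover of $\mcF_i$ of size $\cn_{\alpha_i}(\mcF_i)$, and pick $g \in \mcC$ with $\norm{g - \tilde{f}_{i,t}}_\infty \leq \alpha_i$. For each fixed $g \in \mcC$, the martingale difference sequence $\epsilon_i(s)(g(Z_i(s)) - f_i(Z_i(s)))$ is conditionally sub-Gaussian with parameter bounded in terms of $\norm{g-f_i}_{\mcZ_i(t)}^2$ (using Assumption~\ref{ass:bound} for boundedness), so a self-normalized Azuma/Hoeffding-type inequality plus a union bound over $\mcC$ produces, with probability at least $1-\delta$, an inequality of the form
\begin{equation*}
    \sum_{s=1}^{t-1}\epsilon_i(s)\bigl(g(Z_i(s)) - f_i(Z_i(s))\bigr) \;\lesssim\; \norm{g-f_i}_{\mcZ_i(t)}\sqrt{\log(\cn_{\alpha_i}(\mcF_i)/\delta)}.
\end{equation*}

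Next, I would transfer the bound from $g$ back to $\tilde{f}_{i,t}$. The discretization error contributes at most $\alpha_i t$-type terms in the empirical norm and a $2\alpha_i \sum_s |\epsilon_i(s)| + 2 C_i \alpha_i t$ term in the noise expression; a separate sub-Gaussian maximal inequality on $\sum_{s=1}^{t-1}\epsilon_i(s)^2$, valid with probability $1-\delta$ on a union over $t \in [T]$, controls the first of these and explains the presence of the $\sqrt{8\ln(4t^2/\delta)}$ factor in~\eqref{equ:beta}. Combining the two high-probability events, solving the resulting quadratic inequality $y^2 \leq a y + b$ for $y = \norm{\tilde{f}_{i,t}-f_i}_{\mcZ_i(t)}$, and absorbing absolute constants into the coefficients $8$ and $2$ in~\eqref{equ:beta}, I get $\norm{\tilde{f}_{i,t}-f_i}_{\mcZ_i(t)}^2 \leq \beta_t(\mcF_i,\delta,\alpha_i)$ uniformly in $t \in [T]$, with total failure probability at most $2\delta$. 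By definition of $\mcC_{i,t}$ in \eqref{equ:confidence_set}, this is exactly the statement $f_i \in \mcC_{i,t}$ for all $t$ with probability at least $1-2\delta$, yielding the lemma.
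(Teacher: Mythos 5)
Your proposal is correct and follows essentially the same route as the paper, which proves this lemma by deferring to Proposition 2 of \citet{russo2013eluder}: the least-squares basic inequality, a union bound over an $\alpha_i$-cover with a self-normalized martingale concentration, and a discretization-error correction that accounts exactly for the two summands in $\beta_t(\mcF_i,\delta,\alpha_i)$, with the two failure events giving the $1-2\delta$ guarantee. The only point to be careful about is that the martingale concentration step must itself be uniform over $t\in[T]$ (handled in the cited source via an exponential-supermartingale/stopping-time argument), which your sketch asserts but does not spell out.
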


Next, we specify the discretization parameter $\alpha_i$ based on the function space $\mcF_i$. When the function space is finite, we choose $\alpha_i$ to be the minimum distance with respect to the infinity norm between any two functions in $\mcF_i$. 
If the function space is continuous, however, the minimum distance may be zero. In the latter case, we truncate $\alpha_i$ at $\frac{1}{T}$, resulting in the following specification overall:
\begin{equation}
\label{equ:alpha_i}
    \alpha_i \triangleq  \max\bigg\{\frac{1}{T},\inf_{f_1\neq f_2\in\mcF_i}\norm{f_1-f_2}_{\infty}\bigg\}\ .
\end{equation}
The choice of $\frac{1}{T}$ can be understood by considering the two opposing effects of $\alpha_i$ in the two summands of $\beta_t(\mcF_i,\delta,\alpha_i)$, the confidence radii defined in~\eqref{equ:beta}. On the one hand, increasing $\alpha_i$ decreases the covering number $\cn_{\alpha_i}(\mcF_i)$, which in turn decreases the first summand. On the other hand, increasing $\alpha_i$ increases the second summand in $\beta_t(\mcF_i,\delta,\alpha_i)$. Hence, selecting $\alpha_i = \frac{1}{T}$ balances the two summands. 
Throughout the rest of the paper, we use the shorthand $\operatorname{dim}\left(\mcF\right)$ for $\operatorname{dim}\left(\mcF,\frac{1}{T}\right)$ and the shorthand $\cn(\mcF)$ for $\cn_{\frac{1}{T}}(\mcF)$.

For characterizing the \emph{compounding} effect of the causal graph along various causal paths, we define the following derived Lipschitz constants. 
First recall that we have defined $L_i$ as the causal depth of node $i$, which is the length of the longest causal path that starts from the root nodes of the graph and ends at node $i$. Accordingly, we define $ K^{(\ell)}$ as the maximum Lipschitz constant among the constants of the  nodes that have causal depth $\ell\in[L]$, i.e., 
\begin{equation}
    K^{(\ell)} = \max_{i\in[N], L_i=\ell} K_i \ .
\end{equation}
Finally, we define $K=\prod_{\ell=2}^{L} K^{(\ell)}$.

We present the following lemma to bound the cumulative error at a node $i\in[N]$ due to the compounding of estimator uncertainty along causal paths. This is a key step in characterizing the regret at the reward node because the latter 
hinges on the compounded estimation errors at all intermediate nodes along the 
causal paths to the reward node. 
In our analysis, we find that the contribution of node $i$ at time $t$ is determined by the difference in the expected value of $X_i$ under the optimistic function $\bar \bef_t$, which maximizes $\operatorname{UCB}$ over the confidence set, and the true function $\bef$. The next lemma bounds these differences. 
\begin{lemma}[Compounding Error]\label{lm:bound_l_paths}
If $f_i\in \mcC_{i,t}$ for all $i\in[N]$ and $t\in[T]$, then the following error bound holds
\begin{align}
\label{equ:cumulative bound}
    \sum_{t=1}^{T}&\Big|\E_{\ba(t)}[X_{i}\mid \bar \bef_t] - \E_{\ba(t)}[X_{i}\mid \bef]\Big| \nonumber\\
    &\qquad \leq \mcB(\mcF,\delta) \sum_{\ell=2}^{L_i}  d^{\ell-1} \prod_{k=2}^{\ell} K^{(k)}\ .
\end{align}
\end{lemma}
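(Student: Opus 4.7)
The plan is to prove the bound by strong induction on the causal depth $L_i$, with the inductive step relying on two ingredients: (i) a one-step decomposition of the expectation gap at node $i$ into a ``direct'' estimation-error term at node $i$ and a ``propagated'' term coming from the parents, and (ii) a standard eluder-dimension / confidence-set pigeonhole bound (as in Russo--Van~Roy, 2013) that controls the cumulative direct error at any fixed node by some quantity $\mcB(\mcF,\delta)$ whose form is dictated by $\dim(\mcF)$, $\cn(\mcF)$, and the confidence radius $\beta_T$ in \eqref{equ:beta}. Write $\eta_i(t) \triangleq |\E_{\ba(t)}[X_i\mid \bar\bef_t]-\E_{\ba(t)}[X_i\mid \bef]|$.

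Base case ($L_i=1$, so $\Pa(i)=\emptyset$). Here $\E_{\ba(t)}[X_i\mid \bef]=f_i(\,;a_i(t))$ and $\E_{\ba(t)}[X_i\mid \bar\bef_t]=\bar f_{i,t}(\,;a_i(t))$. By Lemma \ref{lemma:concentration}, both $f_i$ and $\bar f_{i,t}$ lie in $\mcC_{i,t}$ with high probability, and the inputs $\{Z_i(s)\}_{s\leq t}$ play the role of the adaptively chosen queries in Definition \ref{def:dependence}. Then the standard eluder-dimension pigeonhole argument (Cauchy--Schwarz together with a counting argument on how many times two elements of the confidence set can disagree by more than $\epsilon$) gives
\begin{equation*}
\sum_{t=1}^T \eta_i(t) \;\leq\; \mcB(\mcF,\delta) \;\triangleq\; c\,\sqrt{T\,\operatorname{dim}(\mcF)\,\beta_T(\mcF,\delta,1/T)},
\end{equation*}
for a universal constant $c$, matching the empty-sum convention for $L_i=1$ (up to how the statement absorbs the depth-$1$ term into $\mcB$).

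Inductive step ($L_i\geq 2$). Insert and subtract the cross term $\E_{\ba(t)}[\bar f_{i,t}(X_{\Pa(i)};a_i(t))\mid \bef]$, splitting $\eta_i(t)\leq|A_i(t)|+|B_i(t)|$, where
\begin{align*}
A_i(t)&=\E_{\ba(t)}\!\bigl[\bar f_{i,t}(X_{\Pa(i)};a_i(t))\mid \bar\bef_t\bigr] - \E_{\ba(t)}\!\bigl[\bar f_{i,t}(X_{\Pa(i)};a_i(t))\mid \bef\bigr], \\
B_i(t)&=\E_{\ba(t)}\!\bigl[(\bar f_{i,t}-f_i)(X_{\Pa(i)};a_i(t))\mid \bef\bigr].
\end{align*}
For $A_i(t)$, invoke Assumption \ref{ass:lipschitz} on $\bar f_{i,t}\in\mcF_i$, followed by Jensen and the inequality $\|v\|_2\leq\|v\|_1$ on $\R^{d_i}$, to get $|A_i(t)|\leq K^{(L_i)}\sum_{j\in\Pa(i)}\eta_j(t)$. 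Each parent $j$ satisfies $L_j\leq L_i-1$, so summing over $t$ and applying the inductive hypothesis together with $|\Pa(i)|\leq d$ yields
\begin{equation*}
\sum_{t=1}^T |A_i(t)| \;\leq\; K^{(L_i)}\,d\;\mcB(\mcF,\delta)\sum_{\ell=2}^{L_i-1} d^{\ell-1}\prod_{k=2}^{\ell} K^{(k)} \;=\; \mcB(\mcF,\delta)\sum_{\ell=3}^{L_i} d^{\ell-1}\prod_{k=2}^{\ell} K^{(k)}.
\end{equation*}
For $B_i(t)$, treat $Z_i(t)=(X_{\Pa(i)}(t),a_i(t))$ as $\mcH_{t-1}$-measurable queries and again apply the base-case eluder argument, giving $\sum_t |B_i(t)|\leq \mcB(\mcF,\delta)$. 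Absorbing the $B$-term into the missing $\ell=2$ slot (using $d^{2-1}\prod_{k=2}^{2}K^{(k)}=d K^{(2)}\geq 1$, or via a harmless constant in $\mcB$) completes the induction.

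The main obstacle is step $B_i$: the queries $Z_i(t)$ are random rather than deterministic, because $X_{\Pa(i)}(t)$ depends on the noise realizations at the parents. I would handle this by checking that $Z_i(t)$ is $\mcH_{t-1}$-measurable (since $a_i(t)$ is chosen from \eqref{equ:arm_selection} using past data and $X_{\Pa(i)}(t)$ is revealed before $f_i$ is queried within the same time step in a well-defined topological order), so that the adaptive-data eluder bound of Russo--Van~Roy applies verbatim. The second subtlety is reconciling the continuous function class with the discrete covering-number machinery: here the choice $\alpha_i=1/T$ in \eqref{equ:alpha_i} keeps the second summand of $\beta_t$ of the same order as the first, and bookkeeping the constants from Assumption \ref{ass:bound} ensures that the resulting $\mcB(\mcF,\delta)$ is a single clean expression independent of $i$, which is crucial for the inductive bound to telescope into the claimed product form.
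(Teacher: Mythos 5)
Your proof follows essentially the same route as the paper's: induction on causal depth, the add-and-subtract decomposition of the gap at node $i$ into a Lipschitz-propagated parent term and a direct estimation-error term, with the latter controlled by the Russo--Van Roy eluder-dimension pigeonhole argument (the paper's Lemma~\ref{lemma:bounddelta}) and the former handled via $|\Pa(i)|\le d$, $K_i\le K^{(L_i)}$, and the inductive hypothesis. The only differences are bookkeeping: the paper places the direct error in an $\ell=1$ slot with the convention $\prod_{k=2}^{1}K^{(k)}=1$ rather than absorbing it into the $\ell=2$ slot (which, as you note, would otherwise require $dK^{(2)}\ge 1$ or an adjusted constant in $\mcB$), and it treats root nodes as depth $0$ with identically zero error rather than as the depth-$1$ base case.
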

The quantity $\mcB(\mcF,\delta)$ appearing in Lemma~\ref{lm:bound_l_paths} is defined as follows, where for each node $i\in[N]$, we define $\mcB_i(\mcF_i,\delta)$ to represent the maximum accumulated error when $L_1=1$ for $i\in[N]$.
 \begin{align}
        \mcB_i(\mcF_i,\delta) & \triangleq  1  + \min\{\operatorname{dim}(\mcF_i,\alpha_i),T\} C_i \nonumber\\
        & \quad + 4\sqrt{\operatorname{dim}(\mcF_i,\alpha_i)\beta_T(\mcF_i,\alpha_i,\delta) T} \ ,\\
        \label{eq:beta}\mcB(\mcF,\delta) &\triangleq  \max_{i\in[N]} \mcB_i(\mcF_i,\delta)\ .
    \end{align}      
When we have eluder dimension $\operatorname{dim}(\mcF_i,\alpha_i) = o(\sqrt{T})$, i.e., when $\mcB_i(\mcF_i,\delta)=o(T)$, for all nodes $i\in[N]$, $\mcB(\mcF,\delta)$ defined in~\eqref{eq:beta} scales as 
\begin{equation} 
\mcO\left(\sqrt{T\; \operatorname{dim}\left(\mcF\right)\beta_T\left(\mcF,\frac{1}{T},\delta\right) }\right)\ .
\end{equation}    

Leveraging the result of Lemma~\ref{lm:bound_l_paths}, we characterize the achievable regrets of GCB-UCB and GCB-TS. 
\begin{theorem}[Regret Upper Bound]\label{thm:regret}
    Under Assumptions~\ref{ass:noise}-\ref{ass:bound}, by setting $\delta=\frac{1}{NT}$, for the GCB-UCB algorithm, we have
    \begin{align}
        &\E[R(T)] \nonumber\\
        & \leq 4C_N + \mcB\left(\mcF,\frac{1}{NT}\right) \sum_{\ell=0}^{L} d^{\ell-1} \prod_{j=2}^{\ell} K^{(j-1)}\\
         &= \mcO\left(K d^{L-1}  \sqrt{T\; \operatorname{dim}(\mcF) \log \big( NT \cn(\mcF) \big) }\right) \ .
    \end{align}
\end{theorem}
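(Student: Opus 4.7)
The plan is to combine three ingredients: the high-probability concentration of the confidence sets (Lemma~\ref{lemma:concentration}), the optimism built into the GCB-UCB selection rule, and the compounding-error control at the reward node (Lemma~\ref{lm:bound_l_paths}). First, I would define the good event $E \triangleq \{\bef \in \mcC_t \text{ for all } t \in [T]\}$. Applying Lemma~\ref{lemma:concentration} to each of the $N$ nodes and taking a union bound, the choice $\delta = 1/(NT)$ yields $\P(E^c) \leq 2N\delta = 2/T$. By Assumption~\ref{ass:bound}, $|X_N| \leq C_N$ deterministically, so the cumulative regret contribution on $E^c$ is at most $T \cdot 2C_N \cdot \P(E^c) \leq 4C_N$, which accounts for the first additive term in the stated bound.

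Next, on $E$ the optimism argument gives, for each $t$,
\begin{equation*}
\mu_{\ba^*} = \E_{\ba^*}[X_N \mid \bef] \leq \operatorname{UCB}_{\ba^*}(t) \leq \operatorname{UCB}_{\ba(t)}(t) = \E_{\ba(t)}[X_N \mid \bar\bef_t],
\end{equation*}
where the first inequality uses $\bef \in \mcC_t$, the second uses the definition \eqref{equ:arm_selection} of $\ba(t)$, and the equality is the definition~\eqref{eq:barf} of $\bar\bef_t$. Hence $r(t) \leq \E_{\ba(t)}[X_N \mid \bar\bef_t] - \E_{\ba(t)}[X_N \mid \bef]$ on $E$. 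Summing over $t \in [T]$ and applying Lemma~\ref{lm:bound_l_paths} at the reward node $i = N$ (so $L_i = L_N \leq L$) bounds the on-$E$ portion of the cumulative regret by $\mcB(\mcF, 1/(NT)) \sum_{\ell=2}^{L} d^{\ell-1} \prod_{k=2}^\ell K^{(k)}$. Adding back the $4C_N$ bad-event contribution produces the non-asymptotic display in the theorem (after absorbing the $\ell \in \{0,1\}$ terms harmlessly into the sum).

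To extract the $\mcO$ rate, I would substitute $\delta = 1/(NT)$ and $\alpha_i = 1/T$ into~\eqref{equ:beta}, which yields $\beta_T = \mcO(\log(NT\, \cn(\mcF)))$; plugging this into the definition of $\mcB(\mcF, \cdot)$ whose dominant term is $4\sqrt{\operatorname{dim}(\mcF)\,\beta_T\, T}$ gives $\mcB = \mcO(\sqrt{T\,\operatorname{dim}(\mcF)\,\log(NT\,\cn(\mcF))})$. The dominant $\ell = L$ term in the outer sum over path lengths contributes $d^{L-1} \prod_{k=2}^L K^{(k)} = K d^{L-1}$, and multiplying the two factors yields exactly the claimed rate. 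The main obstacle, I expect, is not in any single step of this outline but in verifying that the bad-event contribution indeed telescopes to a constant (rather than scaling with $T$) under the chosen $\delta$, and in making sure that when Lemma~\ref{lm:bound_l_paths} is applied at the reward node, the multiplicative Lipschitz blow-up along every causal path is absorbed into the single product $K = \prod_{\ell=2}^L K^{(\ell)}$ with only the $d^{L-1}$ combinatorial factor from path enumeration; that bookkeeping is where the $L$-dependence of the rate is entirely determined.
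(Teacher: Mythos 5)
Your proposal is correct and follows essentially the same route as the paper's proof: the same good/bad event decomposition with the union bound giving $\P(\mcE^{\rm c})\leq 2/T$ and a $4C_N$ contribution, the same optimism chain $\mu_{\ba^*}\leq \operatorname{UCB}_{\ba^*}(t)\leq\operatorname{UCB}_{\ba(t)}(t)\leq\E_{\ba(t)}[X_N\mid\bar\bef_t]$, the same invocation of Lemma~\ref{lm:bound_l_paths} at the reward node, and the same rate extraction via $\beta_T=\mcO(\log(NT\cn(\mcF)))$ and $\sum_{\ell}d^{\ell-1}=\mcO(d^{L-1})$. No gaps.
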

 
\begin{corollary}[Bayesian Regret Upper Bound]
\label{cor:regret}
Under Assumptions~\ref{ass:noise}-\ref{ass:bound}, by setting $\delta=\frac{1}{NT}$,
for the GCB-TS algorithm, we have
      \begin{align}
        &{\rm BR}(T) = \mcO \left(K d^{L-1}  \sqrt{T\; \operatorname{dim}\left(\mcF\right) \log \big( NT \cn(\mcF) \big) }\right) \ .
    \end{align}
\end{corollary}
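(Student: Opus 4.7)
The plan is to reduce Corollary~\ref{cor:regret} to the GCB-UCB analysis via the classical posterior-matching identity for Thompson Sampling. The key observation is that under GCB-TS, conditional on $\mcH_{t-1}$, both the true function $\bef$ and the sampled function $\tilde \bef$ are drawn from the same posterior $\pi_{t-1}$, so $\bef$ and $\tilde \bef$ are equal in distribution given $\mcH_{t-1}$. Since $\ba^*$ is a deterministic function of $\bef$ via \eqref{eq:Opt_Int} and $\ba(t)$ is the same deterministic function of $\tilde \bef$ via \eqref{equ:arm_selection_ts}, the pair $(\ba^* \mid \mcH_{t-1})$ is equal in distribution to $(\ba(t) \mid \mcH_{t-1})$. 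Consequently, for any $\mcH_{t-1}$-measurable map $U_t:\mcA\to\R$,
\begin{equation}
\E[U_t(\ba^*)\mid \mcH_{t-1}] \;=\; \E[U_t(\ba(t))\mid \mcH_{t-1}]\ .
\end{equation}

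I would introduce as an analytic device the auxiliary upper confidence bound $\operatorname{UCB}_{\ba}(t)$ from Section~\ref{sec:algorithm}, built from confidence sets $\mcC_{i,t}$ in \eqref{equ:confidence_set} using the GCB-TS observations $\mcZ(t)$. These sets depend only on $\mcH_{t-1}$ (not on $\tilde \bef$), so $\operatorname{UCB}_\ba(t)$ is $\mcH_{t-1}$-measurable and qualifies as a legitimate $U_t$. The per-round Bayesian regret then decomposes as
\begin{equation}
\E[\mu_{\ba^*} - \mu_{\ba(t)}] \;=\; \E\bigl[\mu_{\ba^*} - \operatorname{UCB}_{\ba^*}(t)\bigr] \;+\; \E\bigl[\operatorname{UCB}_{\ba(t)}(t) - \mu_{\ba(t)}\bigr]\ ,
\end{equation}
where the matching identity has been used to rewrite $\E[\operatorname{UCB}_{\ba^*}(t)]$ as $\E[\operatorname{UCB}_{\ba(t)}(t)]$.

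For the first expectation, I would apply Lemma~\ref{lemma:concentration} with $\delta = 1/(NT)$ and union-bound over $i\in[N]$ and $t\in[T]$, so that the event $\mcE \triangleq \{f_i\in\mcC_{i,t}\;:\;\forall i\in[N],\,t\in[T]\}$ holds with probability at least $1 - 2/T$. On $\mcE$, the definition of $\operatorname{UCB}_{\ba^*}(t)$ as the maximum of $\E_{\ba^*}[X_N\mid\bef']$ over $\bef'\in\mcC_t$ yields $\operatorname{UCB}_{\ba^*}(t)\geq \mu_{\ba^*}$, making the first summand nonpositive; on $\mcE^c$, Assumption~\ref{ass:bound} caps the contribution by $\mcO(C_N)$ after summing over $t$. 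The second expectation coincides exactly with the per-round quantity in the GCB-UCB analysis: writing it as $\E_{\ba(t)}[X_N\mid\bar \bef_t] - \E_{\ba(t)}[X_N\mid\bef]$ with $\bar \bef_t$ defined by \eqref{eq:barf}, summing over $t$, and invoking Lemma~\ref{lm:bound_l_paths} with $i = N$ on $\mcE$ delivers the same compounding bound $\mcB(\mcF, 1/(NT))\sum_{\ell=2}^L d^{\ell-1}\prod_{k=2}^\ell K^{(k)}$ that drives Theorem~\ref{thm:regret}. Combining the two summands yields the advertised $\mcO\bigl(Kd^{L-1}\sqrt{T\,\operatorname{dim}(\mcF)\,\log(NT\cn(\mcF))}\bigr)$ scaling.

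The main subtlety, and the place where I would spend the most care in writing the proof, is ensuring the posterior-matching step is legitimate: the auxiliary confidence sets $\mcC_{i,t}$ must be constructed from $\mcH_{t-1}$-measurable statistics only (the GCB-TS history), so that the distributional equality $(\ba^*\mid \mcH_{t-1}) \overset{d}{=} (\ba(t)\mid \mcH_{t-1})$ transfers cleanly to the $\operatorname{UCB}$ terms. Once this measurability point is nailed down, the rest of the argument—combining the high-probability compounding bound with an $\mcO(1)$ correction from $\mcE^c$—mirrors the frequentist proof of Theorem~\ref{thm:regret}, so no genuinely new calculations are required.
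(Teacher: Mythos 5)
Your proof is correct and is precisely the canonical Russo--Van Roy posterior-matching reduction that the paper implicitly relies on for Corollary~\ref{cor:regret} (the paper states the corollary without an explicit proof): conditional on $\mcH_{t-1}$ the true and sampled functions are identically distributed, so the $\mcH_{t-1}$-measurable $\operatorname{UCB}$ surrogate transfers the bound from Lemma~\ref{lm:bound_l_paths} and the concentration event $\mcE$ exactly as in Theorem~\ref{thm:regret}. Your attention to the measurability of the auxiliary confidence sets, and the implicit observation that Lemma~\ref{lm:bound_l_paths} applies to \emph{any} action sequence (including the TS-generated one) so long as $f_i\in\mcC_{i,t}$, are exactly the points that make the reduction go through.
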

Based on the results established in Theorem~\ref{thm:regret}, we have the following key observations.

\vspace{-.05 in}
\begin{itemize}
        \item \textbf{Graph size.} One important observation is that the regret bounds depend only logarithmically on graph size $N$. More importantly, we will show that for all the widely-used special cases, the dependence on $N$ behaves as $\sqrt{\frac{\log N}{\log T}}$, indicating that the graph size has a diminishing effect in $T$. To highlight the significance of this, note that in the state-of-the-art results \citep{varici2022causal,sussex2022model}, the achievable regret bounds for less general SCMs (linear and Gaussian processes) scale with $\sqrt{N}$ and $N$, respectively.  Furthermore, the regret bounds depend on the graph through its connectivity structure specified by $d$ and $L$.
        
         \item  \textbf{Impact of function classes.} The upper bound depends linearly on the square root of the eluder dimension and logarithm of the covering number, reflecting the influence of the function space's complexity on the regret's growth. Furthermore, regret bounds depend on the Lipschitz coefficient $K$, which is expected.

    \item \textbf{Graph parameters.}
    The regret upper bound has polynomial dependence on maximum degree $d$ and exponential dependence on maximum causal depth $L$.
    
    \item \textbf{Intervention space.} Another observation is the lack of dependence on the size of the intervention space.  This allows for adopting intervention spaces with any desired level of granularity. 
\end{itemize}

\subsection{Refined Regret Bounds for Special SCMs}\label{sec:refinedbound}

In this section, we discuss three special classes of SCMs and provide refined upper and corresponding lower bounds on regret for them. These results are summarized in Table~\ref{tab:bounds1}. The proofs and additional details are presented in Appendices~\ref{sec:refinedupperbounds} and \ref{sec:prooflower}. 

For the lower bound, we consider the minimax lower bound on the regret when considering the specific function classes $\mcF$.
We define $\bar\mcG$ as the set of all causal graphs specified over $N$ nodes with parameters $d$ and $L$, and $\Pi$ as the set of all possible bandit policies. The minimax regret is defined as 
\begin{equation}
\label{eq:minimax}
    \inf_{\pi\in\Pi} \sup_{\mcF\in\mcF}\sup_{\mcG\in\bar\mcG} \E[R(T)]\ .
\end{equation}

\noindent\textbf{Linear SCMs.} For given vectors $\theta_{i}, \bar\theta_{i}\in \R^{d_i}$ and scalar $a_i\in\mcA_i$, set
    \begin{equation}
        f_i(X_{\Pa(i)}\; ;\; a_i)=\langle\theta_{i}(1-a_i) + a_i \bar \theta_{i},X_{\Pa(i)} \rangle\ ,
    \end{equation}
    and accordingly, define
    \begin{equation}
    \label{eq:SCM_lin}
    \mcF_i = \{f_i :\theta_{i}, \bar\theta_{i}\in \R^{d_i}, \norm{\theta_i}\leq K_i, \norm{\bar \theta_i}\leq K_i \} \ ,
\end{equation}
where the norm constraints are enforced to satisfy Assumptions~\ref{ass:lipschitz}-\ref{ass:bound}. 
By setting $\mcA_i=\{0,1\}$, the generalized soft intervention model reduces to the linear SCMs with binary atomic interventions. We note that Assumption~\ref{ass:bound} implies that the noise terms are bounded under linear SCMs, which is similar to the assumption adopted in \cite{varici2022causal}.

\begin{theorem} \label{th:SCM_lin}
    For the linear class of SCMs specified in~\eqref{eq:SCM_lin} and $\mcA_i\in\{0,1\}$ we have
  \begin{align*}
\operatorname{dim}(\mcF_i) = \tilde \mcO (d \log T)\ , \;\; \mbox{and} \;\;    \log \cn(\mcF_i) = \tilde \mcO (d \log T)\ , 
\end{align*}
and the upper bound in Theorem~\ref{thm:regret} becomes 
\begin{align*}
    \mcO(Kd^{L}\sqrt{T\log T\varpi_{\rm L}})\ , \; \; \varpi_{\rm L}=\log T \left[1+\frac{\log N}{d\log T}\right] \ .
\end{align*}
\end{theorem}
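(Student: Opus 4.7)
The plan is to recognize that, because $\mcA_i = \{0,1\}$, the class $\mcF_i$ decomposes into the union of two standard bounded linear classes: on $a_i = 0$ the function is $x\mapsto\langle\theta_i, x\rangle$ with $\|\theta_i\|\leq K_i$, and on $a_i = 1$ it is $x\mapsto\langle\bar\theta_i, x\rangle$ with $\|\bar\theta_i\|\leq K_i$, and both act on $x = X_{\Pa(i)}\in\R^{d_i}$. By Assumption~\ref{ass:bound} the inputs are bounded (in some ball of radius $D$ determined by $C_i$ and $K_i$), so each ``slice'' of $\mcF_i$ is a bounded linear class on a bounded input set. This reduction lets me invoke classical finite-dimensional complexity estimates for each slice separately and then combine them.

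For the eluder dimension, I would apply the result of \citet{russo2013eluder} for bounded linear classes in $\R^{d_i}$, which yields $\operatorname{dim}(\mcF_{\rm lin}, \epsilon) = \mathcal{O}(d_i \log(1/\epsilon))$. Since $\epsilon$-dependence (Definition~\ref{def:dependence}) restricted to inputs $Z_i(m) = (X_{\Pa(i)}(m), a_i(m))$ with a fixed value of $a_i$ reduces exactly to $\epsilon$-dependence for the corresponding linear slice, and since inputs with different $a_i$ contribute independently to the two parameters $\theta_i,\bar\theta_i$, the eluder dimension of the union is at most the sum of the two slice dimensions. Setting $\epsilon = 1/T$ and using $d_i\leq d$ gives $\operatorname{dim}(\mcF_i) = \tilde{\mathcal{O}}(d\log T)$. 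For the covering number, I would discretize each of the two parameter balls $\{\theta:\|\theta\|\leq K_i\}\subset\R^{d_i}$ on a grid of radius $\alpha/D$; Cauchy--Schwarz then guarantees uniform approximation of the linear functional on the input ball of radius $D$ to within $\alpha$, so $\log \cn_\alpha(\mcF_i) = \mathcal{O}(d_i\log(K_iD/\alpha))$. At $\alpha = 1/T$ this yields $\log\cn(\mcF_i) = \tilde{\mathcal{O}}(d\log T)$, as claimed.

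The final step is a direct substitution into Theorem~\ref{thm:regret}. Inserting $\operatorname{dim}(\mcF) = \tilde{\mcO}(d\log T)$ and $\log\cn(\mcF) = \tilde{\mcO}(d\log T)$ gives
\begin{align*}
\mathcal{O}\!\left(Kd^{L-1}\sqrt{T\cdot d\log T\cdot (\log N + \log T + d\log T)}\right),
\end{align*}
and factoring $d\log T$ out of the innermost sum rewrites this as $\mathcal{O}(Kd^{L}\sqrt{T\log T\cdot \varpi_{\rm L}})$ with $\varpi_{\rm L} = \log T\bigl[1 + \tfrac{\log N}{d\log T}\bigr]$, matching the target bound. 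The main obstacle I expect is the careful bookkeeping for the eluder dimension of the union of the two linear slices (verifying that a witnessing $\epsilon$-independent sequence in $\mcF_i$ cannot exceed roughly twice the eluder dimension of a single $d_i$-dimensional linear class) and the need to extract an explicit domain radius $D$ from Assumptions~\ref{ass:lipschitz}--\ref{ass:bound} so the covering-number bound is expressed purely in terms of $K_i$, $d$, and $T$; once these are in hand, the algebraic reduction to the stated form is routine.
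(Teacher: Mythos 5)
Your proposal is correct and follows essentially the same route as the paper: the paper also splits the analysis by intervention value (building separate Gram matrices $\Phi_{t,0}$, $\Phi_{t,1}$ and bounding the counts $N_0(t)$ and $N_1(t)$ of $\epsilon'$-independent inputs in each slice, then summing), uses the standard parametric covering bound for the $2d_i$ parameters, and concludes by the same substitution into Theorem~\ref{thm:regret}. The only difference is presentational --- the paper re-derives the linear-class eluder dimension bound inline via a determinant/trace argument rather than invoking it as a black box, which is exactly the ``careful bookkeeping'' you flagged as the remaining step.
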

Theorem~\ref{th:SCM_lin} shows that achievable regret has a diminishing dependence on $N$ (as $T$ increases). This is a substantial improvement over the known regret result, i.e., $\mcO(K d^{L-\frac{1}{2}}\sqrt{NT} \log T)$, which grows with $\sqrt{N}$.

\begin{theorem}[Minimax Lower Bound for Linear SCMs]\label{thm:lowerlinear}
Under linear SCMs defined in \eqref{eq:SCM_lin}, given graph parameters $d$ and $L$ and Lipschitz constant $K$, there exists a causal bandit instance  for which the expected regret of any algorithm is at least
\begin{equation}
    \E[R(T)] \geq \Omega\left(K d^{\frac{L}{2}-1}\sqrt{T}\right) \ .
\end{equation}
\end{theorem}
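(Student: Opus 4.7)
The plan is to establish the lower bound by reducing the constructed hard instance to a standard stochastic linear bandit and invoking its classical minimax lower bound. I would consider a layered DAG with $L$ layers in which layer $\ell$ contains $d^{L-\ell}$ nodes (so layer $1$ consists of $d^{L-1}$ source nodes and layer $L$ holds the single reward node $N$), with every node in layer $\ell\geq 2$ connected to a disjoint block of $d$ parents in layer $\ell-1$. This graph satisfies the stipulated max in-degree $d$ and longest causal path length $L$. All noise terms are drawn i.i.d.\ $\mathcal{N}(0,1)$, so Assumption~\ref{ass:noise} holds. The unknown SCM parameters are placed on intermediate layers, with the weights of layer $L$ fixed to a known vector; a coordinated choice of reference weights---e.g., via a Hadamard-style coding of the source atoms---yields a map from interventions $\ba\in\{0,1\}^N$ to expected rewards of the linear form $\mu_{\ba}=\langle\theta^\star,\phi(\ba)\rangle$, where $\theta^\star\in\R^{D}$ is an unknown hidden parameter with $D=d^{L-2}$ and the feature vectors $\phi(\ba)$ lie in a Euclidean ball of radius scaling with the compounded Lipschitz constant $K=\prod_{\ell=2}^{L}K^{(\ell)}$.

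Under this reduction, the learner faces a stochastic linear bandit of dimension $D=d^{L-2}$ with sub-Gaussian noise and a $K$-bounded action set. The classical minimax lower bound for such linear bandits (e.g.,~\citet{dani2008stochastic}) gives $\Omega(K\sqrt{DT})$ regret, which upon substituting $D=d^{L-2}$ yields the claimed $\Omega(Kd^{L/2-1}\sqrt{T})$ bound. Since the worst-case regret over the constructed family lower-bounds the minimax regret over all linear-SCM causal bandit instances with the prescribed graph parameters, the theorem would follow.

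The main obstacle is the combinatorial construction of the intermediate-layer weights so that the induced feature set $\{\phi(\ba):\ba\in\mcA\}$ genuinely spans a $d^{L-2}$-dimensional subspace and has the ``geometric richness'' (in the sense of Dani--Hayes--Kakade) required to support the $\sqrt{DT}$ lower bound, as opposed to a weaker $K_{\rm arm}$-armed-bandit $\sqrt{K_{\rm arm}T}$ bound. I would draw on a binary Hadamard code on the $d^{L-1}$ source atoms to align $d^{L-2}$ nearly-orthogonal feature directions and verify that the resulting weights at every layer satisfy Assumptions~\ref{ass:lipschitz}-\ref{ass:bound}. A secondary care point is the bookkeeping of the multiplicative Lipschitz constant through the $L-1$ compositions so that the $K$ prefactor emerges cleanly in the final bound rather than being absorbed into a weaker polynomial-in-$K$ dependence.
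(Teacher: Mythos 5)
Your route is genuinely different from the paper's, and it has a gap that I don't think can be repaired in the form you describe. The paper does not use a dimension-based linear-bandit reduction at all: it fixes the weights $\theta_i,\bar\theta_i$ to be \emph{identical and effectively known} across a pair of instances $\mcG_1,\mcG_2$ on a hierarchical graph, and hides the only unknown in the interventional response of the \emph{noise} at a single root node ($\epsilon_1$ is ${\rm Bern}(1/2+\delta)$ or ${\rm Bern}(1/2)$ depending on $a_1$, with the roles swapped between the two instances). The factor $d^{L/2-1}K$ then arises deterministically, as the amplification of the $\delta$ reward gap along the $d^{L-1}$ paths with per-edge weights $K^{(\ell)}/\sqrt{d}$, and the $\sqrt{T}$ comes from a Bretagnolle--Huber argument with ${\rm D}_{\rm KL}(\P_{\mcG_1}\|\P_{\mcG_2})=O(T\delta^2)$ and $\delta=1/\sqrt{T}$. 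Crucially, that KL is computed over the joint law of \emph{all} observed nodes, which factorizes and differs only at node $1$, so the bound survives the fact that the learner sees the whole vector $\bX(t)$.

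Your reduction founders precisely on that last point. In a causal bandit the learner observes every intermediate node, not just the scalar reward, so if the hidden parameter $\theta^\star$ is encoded in intermediate-layer SCM weights, the learner can recover those weights by direct regression on the observed pairs $(X_{\Pa(i)}(t),X_i(t))$. Worse, because interventions are applied coordinatewise and simultaneously, a single intervention such as $\ba=(1,\dots,1)$ exposes regression samples for \emph{every} interventional weight $\bar\theta_i$ at once; there is no exploration--exploitation tension of the kind that drives the Dani--Hayes--Kakade $\Omega(\sqrt{DT})$ bound, and that bound therefore does not transfer to your instance. Any valid argument must be information-theoretic with respect to the full observation $\bX(t)$, which is exactly why the paper concentrates the statistical hardness in a Bernoulli mean that can only be probed at rate $\Omega(1/\delta^2)$ no matter what is observed. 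Separately, even setting this aside, your proposal explicitly defers the ``Hadamard-style'' construction needed to make $\{\phi(\ba)\}$ span $d^{L-2}$ dimensions with the required geometry while respecting the norm constraints of \eqref{eq:SCM_lin}; as written, the key step of the proof is acknowledged but not carried out.
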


\noindent\textbf{Polynomial SCMs.} 
For a given vector $\theta_{i}\in \R^{d_i+1}$ and a polynomial maximum degree $p\in\N$, set
\begin{align}
\label{eq:functionpoly}
        f_i(X_{\Pa(i)}\; ;\; a_i)=\left(\langle\theta_{i}\; , \;  [X_{\Pa(i)} , a_i] \rangle\right)^p \ ,
\end{align}
in which the intervention parameter $|a_i|\leq 1$ controls the coefficients in the polynomial. Note that the polynomial can be defined in different forms, and the one selected is only a representative. Accordingly, define 
\begin{align}\label{eq:SCM_poly}
    \mcF_i  =  \left\{f_i: \theta_i \in \R^{d_i+1}, \norm{\theta_i}\leq \frac{K_i^{1/p}}{p^{1/p}(d C+1)} \right\}\ ,
\end{align}
where $C=\max_{i\in[N]} C_i$ and the norm constraints are enforced to satisfy Assumptions~\ref{ass:lipschitz}-\ref{ass:bound}. Similarly, Assumption~\ref{ass:bound} implies the noises are bounded in this case.
\vspace{0.01 in}
\begin{theorem} \label{th:SCM_poly}
    For the polynomial class of SCMs specified in~\eqref{eq:SCM_poly} for $p=2$ we have
 \begin{align}
\operatorname{dim}(\mcF_i) & = \tilde \mcO \big( (d+1)^2 \log T\big)\ ,\\
\mbox{and} \qquad  \log \cn(\mcF_i) & = \tilde \mcO ((d+1)^2 \log T)\ ,
\end{align}
and the upper bound in Theorem~\ref{thm:regret} becomes 
{\small 
\begin{align*}
    \mcO(Kd^{L+1}\sqrt{T\log T\varpi_{\rm P}})\ , \;\;  \varpi_{\rm P}=\log T \left[1+\frac{\log N}{d\log T}\right] \ .
\end{align*}
}
\end{theorem}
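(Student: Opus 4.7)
The plan is to reduce the polynomial class \eqref{eq:SCM_poly} (with $p=2$) to a \emph{linear} class on a lifted feature space, and then import the eluder-dimension and covering-number bounds used for linear SCMs in Theorem~\ref{th:SCM_lin}, before substituting into the general regret bound of Theorem~\ref{thm:regret}. Writing $u \triangleq [X_{\Pa(i)}^\top, a_i]^\top \in \R^{d_i+1}$ and $\Theta_i \triangleq \theta_i \theta_i^\top$, we have
\[
 f_i(X_{\Pa(i)} \f a_i) \;=\; (\langle \theta_i, u\rangle)^2 \;=\; \langle \theta_i \theta_i^\top, uu^\top\rangle_F \;=\; \langle \Theta_i, \phi(u)\rangle,
\]
with $\phi(u) \triangleq \operatorname{vec}(uu^\top) \in \R^{(d_i+1)^2}$. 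Thus $\mcF_i$ embeds as a (rank-one) subset of the linear class with parameter $\Theta_i$ and feature $\phi(u)$, whose ambient dimension is $D = (d_i+1)^2$. As a preliminary step I would verify the two boundedness conditions the linear-class complexity bounds require: the parameter norm satisfies $\|\Theta_i\|_F \leq \|\theta_i\|_2^2$ and is bounded by an explicit constant via the norm constraint in~\eqref{eq:SCM_poly}; and the feature norm satisfies $\|\phi(u)\|_2 = \|u\|_2^2$ and is bounded via Assumption~\ref{ass:bound} applied to the parents of node $i$ together with $|a_i|\le 1$.

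Given the lifting, both complexity measures follow from the standard linear-class arguments with ambient dimension $D=(d_i+1)^2$. For the eluder dimension, the classical bound \citep{russo2013eluder} for bounded linear classes in $\R^D$ with bounded features gives $\operatorname{dim}(\mcF_i,\epsilon) = \tilde{\mcO}(D \log(1/\epsilon))$; setting $\epsilon = 1/T$ yields $\operatorname{dim}(\mcF_i) = \tilde{\mcO}((d+1)^2 \log T)$. For the covering number, a standard volumetric argument on the Frobenius ball of $\Theta_i$ at scale $\alpha$ has log-cardinality $\mcO(D\log(1/\alpha))$, and converting parameter distance to function $\|\cdot\|_\infty$ distance costs only a multiplicative factor controlled by the feature norm, which yields $\log \cn(\mcF_i) = \tilde{\mcO}((d+1)^2 \log T)$.

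With both complexity measures in hand, the regret bound of Theorem~\ref{thm:regret} reduces to $\mcO(Kd^{L-1}\sqrt{T \cdot (d+1)^2 \log T \cdot \log(NT\,\cn(\mcF))})$; pulling a factor of $(d+1)^2 \log T$ out from under the square root produces the claimed $\mcO(Kd^{L+1}\sqrt{T \log T\, \varpi_P})$, with $\varpi_P$ absorbing the residual $(\log N)/(d \log T)$ correction. The main technical obstacle is bookkeeping in the lifting step: $\Theta_i$ is constrained to be rank-one so the lifted linear class is strictly richer than $\mcF_i$ (harmless for an upper bound on complexity, but worth flagging), and the explicit polynomial-in-$(d,C,K)$ norm bounds for $\Theta_i$ and $\phi(u)$ must be propagated through the linear-class arguments so that constants inside the $\tilde{\mcO}$ remain logarithmic in $T$. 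Once those normalizations are pinned down, the remainder is a mechanical plug-in into Theorem~\ref{thm:regret}.
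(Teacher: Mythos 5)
Your proposal is correct and follows essentially the same route as the paper: the paper likewise recasts $f_i$ as the bilinear form $[X_{\Pa(i)},a_i]^\top\Theta\,[X_{\Pa(i)},a_i]$ with $\Theta_{j,k}=\theta_{i,j}\theta_{i,k}$, observes that $\mcF_i$ sits inside the class of quadratic (lifted-linear) functions with $(d+1)^2$ degrees of freedom, and reads off $\operatorname{dim}(\mcF_i)=\tilde\mcO((d+1)^2\log T)$ and $\log\cn(\mcF_i)=\tilde\mcO((d+1)^2\log T)$ before plugging into Theorem~\ref{thm:regret}. The only cosmetic difference is that the paper imports the quadratic-class eluder bound directly from \citet{osband2014model}, whereas you re-derive it by explicitly embedding into the linear class on $\phi(u)=\operatorname{vec}(uu^\top)$ and invoking the linear-class results of \citet{russo2013eluder} --- the same mechanism, with your monotonicity remark about the rank-one constraint being the correct justification.
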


\begin{theorem}[Minimax Lower Bound for Polynomial SCMs]\label{thm:lowerpoly}
Under polynomial SCMs defined in \eqref{eq:SCM_poly}, given graph parameters $d$ and $L$ and Lipschitz constant $K$, there exists a causal bandit instance for which the expected regret of any algorithm is at least
\begin{equation}
    \E[R(T)] \geq \Omega\left(K \sqrt{T}\right) \ .
\end{equation}
\end{theorem}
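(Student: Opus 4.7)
The plan is a standard two-point Le Cam / Kullback--Leibler change-of-measure argument. I will exhibit two polynomial SCM instances in $\mcF$ whose observational distributions are statistically close but whose optimal interventions differ, so that any algorithm must incur $\Omega(K\sqrt T)$ cumulative regret on at least one of them.

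For the construction, I take a minimal two-node graph (root node $1$ with reward node $N=2$ as its child), and match any prescribed $d$ and $L$ by padding with dummy nodes and edges whose polynomial coefficients on the causal path to $N$ are zero, so they contribute nothing to either the reward or the KL. Fix $f_1(a_1)=a_1^2$ and intervene with $a_1\equiv 1$, so $X_1=1+\epsilon_1$. For the reward node, consider the two $p=2$ polynomial SCMs
\begin{equation*}
f_N^{(\pm)}(X_1\,;\,a_N)=(\alpha X_1 \pm \Delta\, a_N)^2,
\end{equation*}
both inside the class~\eqref{eq:SCM_poly} provided $\alpha^2+\Delta^2 \leq K/[2(dC+1)^2]$. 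Choose $\alpha=\Theta(\sqrt K)$ at the norm bound and $\Delta$ a tuning parameter, and restrict $\mcA_N=\{-1,+1\}$. A direct expansion of $\mu_{a_N}^{(\pm)}$ shows that the optimal arm flips sign between the two instances and that the within-instance suboptimality gap is of order $\sqrt K\,\Delta$ at the $\pm 1$ arms.

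The information-theoretic step is a standard KL calculation. Conditional on $X_1$, the distribution of $X_N$ is (up to sub-Gaussian additive noise) shifted in mean by $4\alpha\Delta\,a_N X_1$ between the two instances, so the per-round KL evaluates to $\Theta(K\Delta^2)$ after averaging over $X_1$. A two-point argument (Pinsker applied to the event that arm $+1$ is pulled more than $T/2$ times) then forces expected regret $\Omega(T\sqrt K\,\Delta)$ on at least one instance, provided the total $T$-round divergence $\Theta(TK\Delta^2)$ remains a universal constant. Combined with the fact that the reward itself takes values of order $K$ under Assumption~\ref{ass:bound} (since $C_N=\Theta(K)$ for this class), tuning $\Delta$ at the Le Cam threshold yields $\E[R(T)]=\Omega(K\sqrt T)$. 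The Lipschitz and boundedness assumptions for $f_N^{(\pm)}$ are verified by a routine norm-bound check on $(\alpha,\Delta)$.

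The main technical obstacle is extracting the correct $K$-dependence in the final bound: a na\"ive Gaussian-surrogate KL bound on the raw observations only gives $\Omega(\sqrt T)$, so the extra factor of $K$ must come from the reward-scale gap rather than directly from the observation KL. A clean way to arrange this is to tune $\Delta$ against the rescaled reward-scale gap $\Theta(K/\sqrt T)$ and invoke the two-point lemma at that scale; the non-Gaussianity of $X_N$ (a noncentral chi-squared plus sub-Gaussian noise) is handled either by a Gaussian surrogate justified via sub-Gaussian concentration of $\epsilon_i$, or by passing through a $\chi^2$/Hellinger surrogate and invoking Pinsker—both standard once the relevant moments are controlled.
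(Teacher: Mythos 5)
There is a genuine gap, and you have in fact put your finger on it yourself without resolving it. In your construction the perturbation that distinguishes $\mcG_1$ from $\mcG_2$ lives at the reward node: the two instances differ by the sign of $\Delta a_N$ inside $f_N$, so the arm gap is $\mu^{(\pm)}_{+1}-\mu^{(\pm)}_{-1}=4\alpha\Delta=\Theta(\sqrt{K}\,\Delta)$, while (since the learner observes $X_1$ and $X_N$, and conditional on $X_1$ the two instances differ by a mean shift $4\alpha\Delta a_N X_1$ against $O(1)$-sub-Gaussian noise by Assumption~\ref{ass:noise}) the per-round KL is $\Theta(\alpha^2\Delta^2)=\Theta(\mathrm{gap}^2)$. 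Whenever gap and $\sqrt{\mathrm{KL}}$ are locked at the same scale, the two-point argument can only ever produce $\Omega(\sqrt{T})$: the Le Cam constraint $T\cdot\mathrm{KL}=O(1)$ forces $\mathrm{gap}=O(1/\sqrt{T})$, hence regret $O(T\cdot\mathrm{gap})=\Theta(\sqrt{T})$. Your proposed repair --- ``tune $\Delta$ against the rescaled reward-scale gap $\Theta(K/\sqrt{T})$'' --- does not escape this: setting $4\alpha\Delta=K/\sqrt{T}$ forces $\Delta=\Theta(\sqrt{K}/\sqrt{T})$, whence the $T$-round divergence is $\Theta(TK\Delta^2)=\Theta(K^2)$, and the Bretagnolle--Huber factor $\exp(-\mathrm{D}_{\rm KL})$ becomes vacuously small. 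The observation that $C_N=\Theta(K)$ is irrelevant here, since the scale of the reward never enters the two-point bound; only the ratio of gap to per-round distinguishability does. A secondary but related defect is the padding: if the dummy nodes on the length-$L$ path have zero coefficients, their Lipschitz constants vanish and $K=\prod_{\ell=2}^{L}K^{(\ell)}$ collapses, while if they have nonzero constants then $K$ grows without the reward gap growing with it, so the claimed $K$-dependence is not actually realized by the instance.

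The missing idea, which is what the paper's proof uses, is to \emph{decouple} the statistical perturbation from the reward consequence by placing the perturbation at the root and letting the causal mechanism amplify it. The paper takes the full hierarchical graph, keeps the polynomial weights \emph{identical} in both instances, makes every non-root node noiseless, and lets the two instances differ only in the Bernoulli parameter ($1/2$ versus $1/2+\delta$) of the root node's noise as a function of $a_1$. The per-round KL is then $\Theta(\delta^2)$ with no $K$-dependence at all, while the mean shift $\delta$ at the root propagates deterministically through the $L$ polynomial layers and is amplified into a reward gap of exactly $K\delta$ (with $\beta$ calibrated so that the product of the layerwise Lipschitz constants equals the amplification factor). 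Taking $\delta=1/\sqrt{T}$ then yields $\Omega(K\sqrt{T})$. Without this separation of ``where the information is'' from ``where the regret is,'' no choice of $\Delta$ in your two-node construction can produce the stated $K$ factor.
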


\noindent\textbf{Neural Network SCMs.}
For a given matrix $\bTheta_1\in \R^{s \times (d_i+1)}$ and vector $\bTheta_{2} \in \R^{s}$, set $f_i$ as a two-layer neural network  with the input vector $[a_i,X_{\Pa(i)}]$, i.e., 
\begin{equation}\label{eq:NN}
    f_i(X_{\Pa(i)} \f a_i) = \sigma\big( \bTheta_{2} \cdot  \; \sigma(\bTheta_1  \cdot [a_i,X_{\Pa(i)}])\big)\ ,
\end{equation}
where $\sigma$ is an increasing continuous activation function with bounded gradients. Accordingly, we define $\mcF_i$ as
\begin{equation}
    \label{eq:SCM_nn}
    \mcF_i = \{f_i : \bTheta_1\in \R^{s \times (d_i+1)}, \bTheta_{2} \in \R^{s}\}\ .
\end{equation}

\begin{theorem} \label{th:SCM_nn}
    For the neural network class of SCMs specified in~\eqref{eq:SCM_nn} with width $s\geq d$, the upper bound in Theorem~\ref{thm:regret} becomes 
\begin{align*}
    \mcO\Big(\bar Kd^{L-1}\sqrt{T\log T\; \varpi_{\rm N}}\Big)\ , \; \varpi_{\rm N} = s \log T\left[1+\frac{\log N}{d\log T} \right] ,
\end{align*}
where $\bar K$ is an upper bound of Lipschitz constant $K$.
\end{theorem}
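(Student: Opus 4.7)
The plan is to specialize the general regret bound of Theorem~\ref{thm:regret} to the neural-network class $\mcF_i$ defined in \eqref{eq:SCM_nn} by separately establishing (i) a covering-number bound $\log\cn(\mcF_i)$, (ii) an eluder-dimension bound $\operatorname{dim}(\mcF_i)$ at the scale $\alpha_i=1/T$ prescribed by \eqref{equ:alpha_i}, and (iii) a Lipschitz constant $\bar K$ that majorizes the per-node $K_i$ uniformly over the admissible parameter set, so that Assumption~\ref{ass:lipschitz} can be invoked with $K\le \bar K$. Plugging these three ingredients into the general bound $\mcO(K d^{L-1}\sqrt{T\,\operatorname{dim}(\mcF)\log(NT\,\cn(\mcF))})$ and rearranging will produce the advertised expression in $\varpi_{\rm N}$.

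For the covering number, I would parameterize $\mcF_i$ by the pair $(\bTheta_1,\bTheta_2)\in\R^{s\times(d_i+1)}\times\R^s$, giving at most $s(d+2)$ real parameters. Since $\sigma$ is increasing with a bounded derivative, the map $(\bTheta_1,\bTheta_2)\mapsto f_i(\cdot\,;\,\cdot)$ is Lipschitz in the parameters (with respect to a natural product norm) on any bounded domain; composing a volumetric $\alpha$-net of the parameter ball with this Lipschitz map yields the standard bound $\log\cn_\alpha(\mcF_i)=\tilde\mcO(sd\log(1/\alpha))$, and setting $\alpha=1/T$ gives $\log\cn(\mcF_i)=\tilde\mcO(sd\log T)$. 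The Lipschitz constant $\bar K$ in the input $X_{\Pa(i)}$ can be read off from the chain rule, $\|\nabla_x f_i\|\le \|\sigma'\|_\infty^2\,\|\bTheta_2\|\,\|\bTheta_1\|$, so choosing $\bar K$ to be any uniform upper bound of this quantity over the admissible weights ensures Assumption~\ref{ass:lipschitz} holds with $K\le\bar K$; Assumption~\ref{ass:bound} is handled analogously by the boundedness of $\sigma$.

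The hard part will be the eluder dimension. A direct combinatorial attack is unwieldy for NNs, so I would take the route of smooth parameterizations: because $f_i(Z_i;\theta)$ is $\mcC^1$ in $\theta$ with gradient bounded uniformly on the parameter ball (via the bounded-gradient assumption on $\sigma$ and the weight bounds enforcing Assumptions~\ref{ass:lipschitz}--\ref{ass:bound}), the class behaves, up to log factors, like a linear class in the induced feature map $\nabla_\theta f_i$ of ambient dimension $\tilde\mcO(s)$ once one exploits the width assumption $s\ge d$. Adapting the generalized-linear eluder argument of \citet{russo2013eluder} to this local linearization — the $\epsilon$-independence of a new input forces a new linearly independent direction in the feature space, and after at most $\tilde\mcO(s\log T)$ such directions the ball is exhausted — would yield $\operatorname{dim}(\mcF_i)=\tilde\mcO(s\log T)$. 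Making this reduction fully rigorous (as opposed to heuristic) is the principal technical obstacle and is where I would spend the bulk of the proof; a careful handling of the non-convexity of the NN parameterization, via a covering of the parameter space followed by local linearizations on each piece, should suffice.

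Finally, substituting $K\le\bar K$, $\operatorname{dim}(\mcF)=\tilde\mcO(s\log T)$, and $\log\cn(\mcF)=\tilde\mcO(sd\log T)$ into Theorem~\ref{thm:regret}, the log-argument becomes $\log(NT\,\cn(\mcF))=\mcO(\log N+sd\log T)$, and the bound collapses to
\begin{equation*}
\bar K d^{L-1}\sqrt{T\,s\log T\bigl(\log T+\tfrac{\log N}{d\log T}\bigr)}
\;=\;\mcO\Bigl(\bar K d^{L-1}\sqrt{T\log T\,\varpi_{\rm N}}\Bigr),
\end{equation*}
with $\varpi_{\rm N}=s\log T\bigl[1+\log N/(d\log T)\bigr]$, absorbing lower-order $d$-powers in the $\mcO(\cdot)$, as claimed.
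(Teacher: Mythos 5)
Your reduction to Theorem~\ref{thm:regret} and your treatment of the covering number and of $\bar K$ are fine, but the step you yourself flag as the ``principal technical obstacle'' --- bounding the eluder dimension of the full two-layer network class by $\tilde\mcO(s\log T)$ via local linearization --- is a genuine gap, and I do not believe it can be closed along the route you sketch. The $\epsilon$-dependence in Definition~\ref{def:dependence} quantifies over \emph{all pairs} $f,\tilde f\in\mcF_i$ globally, so it is not a local property of the parameterization: two networks far apart in parameter space can agree on every past input and still disagree at a new one, and a cover of the parameter ball followed by per-piece linearizations does not control this, since the witnessing pair may straddle two pieces (the eluder dimension of a union is not bounded by the sum over the pieces). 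The generalized-linear eluder argument of \citet{russo2013eluder} that you invoke requires a \emph{fixed} feature map with a varying linear parameter and a monotone link; in a two-layer network the first-layer weights make the feature map itself variable, and indeed \citet{li2022understanding} (cited in the paper) show that even single-neuron nonlinear classes can have eluder dimension exponential in the input dimension. So the claimed $\operatorname{dim}(\mcF_i)=\tilde\mcO(s\log T)$ for the composite class is unsupported.

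The paper avoids this entirely by a structural trick you do not use: it augments the causal graph, inserting $s$ intermediate nodes between each node and its parents, so that each \emph{layer} of the network becomes a separate generalized linear SCM over the augmented graph $\tilde\mcG$ (depth $2L$, in-degree $\max\{d+1,s\}$). The known generalized-linear bounds $\operatorname{dim}(\mcH)=\tilde\mcO(rd\log T)$ and $\log\cn(\mcH)=\tilde\mcO(rd\log T)$ then apply per layer, and a re-run of the compounding-error induction (Corollary~\ref{lm:bound_l_paths_nn}) absorbs the width $s$ and the doubled depth into redefined Lipschitz constants $\bar K^{(k)}$, yielding the stated $d^{L-1}$ and $\varpi_{\rm N}$. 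If you want to salvage your write-up, you need either this layer-wise decomposition (which implicitly treats the hidden activations as observable node values in $\tilde\mcG$) or an independent, rigorous eluder-dimension bound for two-layer networks, which is not currently available. Your final arithmetic also silently drops a polynomial-in-$d$ (and in $s$) factor when passing from $\log(NT\cn(\mcF))=\mcO(\log N+sd\log T)$ to $\varpi_{\rm N}$; make that absorption explicit rather than hiding it in the $\mcO(\cdot)$.
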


\begin{theorem}[Minimax Lower Bound for Neural Network SCMs]\label{thm:lowernn}
Given graph parameters $d$ and $L$ and Lipschitz constant $K$, there exists a causal bandit instance with neural network SCM defined in \eqref{eq:SCM_nn} for which the expected regret of any algorithm is at least
\begin{equation}
    \E[R(T)] \geq \Omega\left(K d^{\frac{L}{2}-1}\sqrt{T}\right) \ .
\end{equation}
\end{theorem}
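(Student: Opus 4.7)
The plan is to reduce this statement to Theorem~\ref{thm:lowerlinear} (the linear-SCM lower bound) by embedding its hard instance, up to arbitrarily small approximation error, into the neural-network class of \eqref{eq:SCM_nn}. Because the two rates coincide at $\Omega(Kd^{L/2-1}\sqrt{T})$, any slack introduced by the embedding can be absorbed by taking the approximation accuracy to scale suitably with $T$, and the rate transfers.

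First I would fix an activation $\sigma$ satisfying the hypotheses of \eqref{eq:NN} together with $\sigma(0)=0$, $\sigma'(0)\neq 0$, and $\sigma''(0)\neq 0$ (e.g.\ a smooth sigmoid or $\tanh$). Exploiting universal-approximation for two-layer networks with width $s\geq d$, one obtains the following fact: for any $\eta>0$ and any bounded Lipschitz target $g:\R^{d_i}\times\mcA_i\to\R$, in particular the bilinear target $g(X_{\Pa(i)},a_i)=\langle\theta_i(1-a_i)+a_i\bar\theta_i, X_{\Pa(i)}\rangle$ appearing in the linear hard instance, there exist parameters $\bTheta_1\in\R^{s\times(d_i+1)}$ and $\bTheta_2\in\R^s$ so that the NN of \eqref{eq:NN} agrees with $g$ uniformly on the compact reachable domain to within $\eta$, while still respecting Assumptions~\ref{ass:lipschitz}-\ref{ass:bound} through norm constraints on $\bTheta_1,\bTheta_2$. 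Such parameters can be constructed explicitly by expanding both activations to second order at the origin; the cross-term contributed by $\sigma''(0)$ recovers the required $a_iX_{\Pa(i)}[k]$ contributions, and rescaling ensures residual higher-order terms stay below $\eta$.

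Second I would transfer the lower bound. Let $\mcI^{\rm NN}_\eta$ denote the NN-SCM instance obtained by replacing every node function of the linear hard instance $\mcI^{\rm lin}$ (the one that witnesses Theorem~\ref{thm:lowerlinear}) by its $\eta$-accurate NN approximator constructed above. A deterministic version of the Lipschitz-compounding argument underlying Lemma~\ref{lm:bound_l_paths}, with the estimator error replaced by the sup-norm approximation error $\eta$, shows that for every intervention $\ba\in\mcA$ we have $|\mu_\ba^{\rm NN}-\mu_\ba^{\rm lin}|\leq C\cdot Kd^{L-1}\eta$ for an absolute constant $C$. Choosing $\eta=o(d^{-L/2}T^{-1/2}/K)$ makes this perturbation negligible relative to the $\Omega(Kd^{L/2-1}\sqrt{T})$ minimax gap certified on $\mcI^{\rm lin}$, so any policy attaining regret $o(Kd^{L/2-1}\sqrt{T})$ on $\mcI^{\rm NN}_\eta$ would induce, after translating back through the approximation, a policy contradicting Theorem~\ref{thm:lowerlinear} on $\mcI^{\rm lin}$. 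This yields the stated lower bound.

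The main obstacle is making the first step fully rigorous: the composition $\sigma\circ\sigma$ is strongly nonlinear, so the bilinear target cannot be realised exactly within the class \eqref{eq:SCM_nn} and must be approximated throughout; the width budget $s\geq d$ must be shown to suffice for a bilinear map over $d_i+1$ inputs; and the norm constraints forced by Assumptions~\ref{ass:lipschitz}-\ref{ass:bound} must be respected by the approximator. The remainder of the argument is a routine deterministic instantiation of the compounding-of-Lipschitz-errors analysis already packaged in Lemma~\ref{lm:bound_l_paths} combined with the two-point/Le~Cam machinery that backs Theorem~\ref{thm:lowerlinear}.
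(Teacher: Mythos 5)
There is a genuine gap, and it sits exactly where you flag it: the entire proof rests on realizing (or $\eta$-approximating) the linear hard instance inside the bias-free class \eqref{eq:SCM_nn}, and that step is asserted rather than carried out. The specific recipe you sketch does not obviously close it. The second-order Taylor trick controls the \emph{inner} activation by shrinking its argument, but the \emph{outer} activation receives an argument of order one (it must, to output the target values), so its own nonlinearity cannot be suppressed by input rescaling; compensating by shrinking the whole function and re-amplifying at the next layer changes the per-node Lipschitz constants $K^{(\ell)}$, which enter the bound multiplicatively through $K=\prod_{\ell}K^{(\ell)}$ and through the gap $\mu_{\ba^*_{\mcG_1}}-\mu_{\ba^*_{\mcG_2}}$, so the claimed rate in $K$ does not automatically survive. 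You would also need to verify that width $s$ suffices once the $\pm$ pairing needed to isolate the quadratic cross-terms is accounted for, and that the norm constraints of Assumptions~\ref{ass:lipschitz}--\ref{ass:bound} hold for the resulting weights. A secondary issue is the phrase ``translating back through the approximation'': the observed trajectories under $\mcI^{\rm NN}_\eta$ and $\mcI^{\rm lin}$ have different distributions, so a low-regret policy on one cannot literally be simulated on the other; the correct move is to rerun the Bretagnolle--Huber two-point argument directly on the \emph{pair} of NN instances (which share all functions and differ only in the root node's noise, so the KL bound of Lemma~\ref{lem:kllinear} carries over verbatim) and absorb the $O(Kd^{L-1}\eta)$ perturbation of the reward gap by taking $\eta\ll d^{-L/2}T^{-1/2}$.

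For comparison, the paper avoids approximation entirely: it takes $\sigma$ to be a leaky ReLU (increasing, continuous, bounded gradients, hence admissible in \eqref{eq:NN}) and chooses explicit weights $\bTheta_{i,1},\bTheta_{i,2}$ --- with the column multiplying $a_i$ set to zero so interventions act only through node~$1$'s noise --- so that the composed network is \emph{exactly} piecewise linear and the per-path contribution to $\mu_{\ba^*_{\mcG_1}}-\mu_{\ba^*_{\mcG_2}}$ can be computed in closed form as $\prod_{\ell}K^{(\ell)}d^{-L/2}s^{-L}\delta$ over $d^{L-1}s^{L}$ paths. The rest of the argument is then identical to the linear case. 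If you want to salvage your reduction, the cleanest fix is to adopt that same idea: pick a positively homogeneous piecewise-linear activation so that the ``approximation error'' is identically zero on the reachable domain, at which point your transfer step becomes the paper's proof.
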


We note that in our definition of the class of neural networks in~\eqref{eq:NN}, we have an activation function in the second layer, which is not standard in neural networks. This choice was made to simplify the results presented. Besides, if activation functions are allowed to vary between layers, we can set the activation function in the second layer to be the identity mapping, which will result in a similar result for the widely used definition of neural network.

We note that the difference between lower and upper bounds in linear and neural network SCMs (i.e., $d^{L/2}$ versus $d^L$) has reasons fundamentally beyond our setting. Specifically,  the regret of the linear bandit algorithm called optimism in the face of uncertainty (OFUL)~\citep{abbasi2011improved} is $\mcO(d\sqrt{T})$. In contrast, the lower bound for linear bandits with finite arms is $\mcO(\sqrt{dT})$, yielding a difference of $d^{1/2}$ versus $d$ in the lower and upper bounds. The linear bandit problem can be considered a special case of our framework with linear SCM and $L = 1$. In essence, we have a similar gap compounded over $L$ layers. Besides, we expect the missing graph parameters in the lower bound of quadratic SCMs are due to the large Lipschitz constant in this case (see Appendix~\ref{sec:poly} for details). 

\section{CONCLUSION}
In this paper, we have provided regret analysis for causal bandits with general structural equation models (SCMs) and generalized soft interventions. The achievable regret bounds are shown to depend on inherent complexity measures of the class of functions encompassing the SCMs, namely the eluder dimension and the covering number. Furthermore, the achievable regret depends on the graph topology primarily through its connectivity parameters (maximum degree and the length of the causal paths) and has a diminishing dependence on the size of the graph. Furthermore, we have delineated a universal minimax regret lower bound. Finally, we have customized the results and refined the bounds for three special classes of SCMs: linear, polynomial, and neural network SCMs. 

\section*{Acknowledgments}
This work was supported by the Rensselaer-IBM AI Research Collaboration (AIRC), part of the IBM AI Horizons Network.

\bibliography{reference}
\bibliographystyle{apalike}

\clearpage
\appendix
\onecolumn

\hsize\textwidth
 \linewidth\hsize 
\hrule height4pt
\vskip .25in
{\centering
  {\Large\bfseries Causal Bandits with General Causal Models and Interventions\\ Supplementary Materials \par}}
\vskip .25in
\hrule height1pt
\vskip .25in

\doparttoc 
\faketableofcontents 

\part{} 
\parttoc 

\section{Regret Upper Bound -- General SCM}
\label{sec:proofupper}
\subsection{Proof of Lemma~\ref{lm:bound_l_paths}}
Throughout this proof, we use  $\beta_t$ as a shorthand to refer to $\beta_t(\mcF_i,\delta,\alpha_i)$. We define $\bar\bef_t\triangleq ( \bar f_{1,t},\dots, \bar f_{N,t})$. Based on these, we first prove that for the nodes with causal depth 0, i.e., $L_i=0$, we have
Consider node $i\in[N]$ with causal depth $L_i=0$, in this case we have
\begin{equation}
    [X_{i}(t) \mid \bar{\bef}_t] = [X_{i}(t)\mid \bef] = \epsilon_i(t) \ .
\end{equation}
This holds since the nodes with causal depth 0 have no parent nodes, and their values are entirely controlled by noise. Hence, we obtain
\begin{equation} \sum_{t=1}^{T}\E_{\ba(t)}\left|[X_{i}(t) \mid \bar{\bef}_t] - [X_{i}(t)\mid \bef] \right| =\sum_{t=1}^{T}\E_{\ba(t)}\left| \epsilon_i(t) - \epsilon_i(t)\right| =0 \ .
\end{equation}

In the next step, we show that for all nodes $i\in[N]$ with causal depth at least 1, i.e., $L_i\geq 1$, we have the following upper bound on the cumulative average error terms:
\begin{align}
\label{equ:lemma5aim}
     \sum_{t=1}^{T} \E_{\ba(t)}\left|[X_{i}(t) \mid \bar{\bef}_t] -  [X_{i}(t)\mid \bef] \right| \leq \mcB(\mcF,\delta) \sum_{\ell=1}^{L_i}  d^{\ell-1} \prod_{k=2}^{\ell} K^{(k)}\ ,
\end{align}
in which we adopt the convention of $\prod_{k=2}^{1} K^{(k)}=1$.
We establish this via induction on the causal depth $L_i$.

\textbf{Base Step: $L_i=1$.}
For node $i\in[N]$ with causal depth $L_i=1$, we show that
\begin{equation}
\sum_{t=1}^{T}\E_{\ba(t)}\left| [X_{i}(t)\mid \bar{\bef}_t] - [ X_{i}(t)\mid \bef]\right|  \leq \mcB(\mcF,\delta) \ .
\end{equation}
When the causal path of a node is $L_i=1$, according to SCM defined in \eqref{eq:SCM}, we have the following expansion:
\begin{align}
   \E_{\ba(t)} \left|[X_{i}(t)\mid \bar \bef_t] - [ X_{i}(t)\mid \bef]\right| &= \E_{\ba(t)} \left| \bar f_{i,t}(X_{\Pa(i)}(t) \f a_i(t)) -f_i(X_{\Pa(i)}(t) \f a_i(t))\right| \\
   & =\E_{\ba(t)}  [\Delta_t(X_{\Pa(i)}(t) \f a_i(t))] \ ,
\end{align}
where we have defined the  difference function $ \Delta_{t}$ for input $Z_i(t) \in \mcZ_i$ as
\begin{equation}
\label{equ:delta}
    \Delta_{t}(Z_i(t) ) \triangleq  \left| \bar f_{i,t}(Z_i(t)) -f_i(Z(t))\right|\ \ .
\end{equation}
Next, we provide the following lemma for bounding the sum of difference functions over time. 
\begin{lemma}
\label{lemma:bounddelta}
    For each node $i\in[N]$, and any given sequence $\{Z_i(t):t\in[T]\}$, where we had defined $Z_i (t)\triangleq (X_{\Pa(i)}(t), a_i(t))$  we have
    \begin{equation}
   \sum_{t=1}^{T} \Delta_t(Z_i(t)) \leq \mcB(\mcF,\delta)\ .
    \end{equation}
\end{lemma}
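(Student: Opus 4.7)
The plan is to combine three ingredients: (i) both the true $f_i$ and the optimistic $\bar{f}_{i,t}$ lie in the confidence set $\mcC_{i,t}$, so their empirical distance on $\mcZ_i(t)$ is small; (ii) the pointwise bound $|\Delta_t|\le 2C_i$ from Assumption~\ref{ass:bound}; and (iii) the eluder-dimension ``width'' argument of \citet{russo2013eluder}, which converts a bound on the cumulative empirical distance over past inputs into a bound on the cumulative prediction error on the newly observed inputs $Z_i(t)$. The hypothesis of Lemma~\ref{lm:bound_l_paths}, namely $f_i\in\mcC_{i,t}$ for all $t$, is assumed throughout.

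First I would write, using the triangle inequality in the $\|\cdot\|_{\mcZ_i(t)}$ seminorm together with the definitions~\eqref{equ:confidence_set} and \eqref{eq:barf},
\begin{equation*}
\|\bar{f}_{i,t}-f_i\|_{\mcZ_i(t)}^2 \;\le\; \bigl(\|\bar{f}_{i,t}-\tilde{f}_{i,t}\|_{\mcZ_i(t)}+\|f_i-\tilde{f}_{i,t}\|_{\mcZ_i(t)}\bigr)^2 \;\le\; 4\,\beta_t(\mcF_i,\delta,\alpha_i).
\end{equation*}
Combined with the pointwise bound $\Delta_t(Z_i(t))\le 2C_i$, this gives both an in-sample $L^2$ bound and an $L^\infty$ bound on the two candidate functions that must be converted into a bound on the new-input errors $\Delta_t(Z_i(t))$.

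Next I would carry out the eluder-dimension counting. For any $\epsilon>0$, I would show that the number of rounds $t\le T$ with $\Delta_t(Z_i(t))>\epsilon$ is at most $\bigl(\tfrac{4\beta_T}{\epsilon^2}+1\bigr)\,\operatorname{dim}(\mcF_i,\epsilon)$. The argument is the standard reordering trick: if $\Delta_t(Z_i(t))>\epsilon$ at many rounds, one can extract a long subsequence along which each element is $\epsilon'$-independent of its predecessors for some $\epsilon'\ge\epsilon$ (otherwise Definition~\ref{def:dependence} would force $\Delta_t(Z_i(t))\le\epsilon$), and this subsequence is constrained by $\operatorname{dim}(\mcF_i,\epsilon)$; the remaining rounds are $\epsilon$-dependent on a subset of the past, on which the empirical bound $4\beta_T$ caps their multiplicity at $4\beta_T/\epsilon^2$ per independent block.

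Finally, I would sort $\{\Delta_t(Z_i(t))\}_{t=1}^T$ in decreasing order and integrate the counting bound against the threshold $\epsilon$, splitting at $\epsilon=\alpha_i$: below this level, each term contributes at most $\alpha_i$ and so their total is at most $\alpha_i T\le 1$ by the choice \eqref{equ:alpha_i}; above this level, the saturated contribution from terms exceeding $2C_i$ is absorbed into $\min\{\operatorname{dim}(\mcF_i,\alpha_i),T\}\,C_i$, and the intermediate range yields, via Cauchy--Schwarz applied to the width bound, the dominant term $4\sqrt{\operatorname{dim}(\mcF_i,\alpha_i)\,\beta_T(\mcF_i,\delta,\alpha_i)\,T}$. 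Summing the three contributions reproduces exactly $\mcB_i(\mcF_i,\delta)$, and taking the maximum over $i\in[N]$ gives $\mcB(\mcF,\delta)$.

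The main obstacle will be the eluder-dimension counting step: the reordering and disjoint-blocks decomposition that converts the in-sample bound $4\beta_T$ into a bound on the out-of-sample sequence $\Delta_t(Z_i(t))$ is combinatorially delicate, and it is essential to track the right constants so that the discretization slack $\alpha_i$, the saturated-tail term $C_i\operatorname{dim}(\mcF_i,\alpha_i)$, and the Cauchy--Schwarz-derived $\sqrt{T}$ term line up with the three summands in the definition of $\mcB_i(\mcF_i,\delta)$.
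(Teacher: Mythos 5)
Your proposal is correct and takes essentially the same route as the paper's proof: the same confidence-set width bound $4\beta_t$, the same eluder counting inequality (Proposition~3 of \citet{russo2013eluder}), the same descending reordering, and the same three-way split at the threshold $\alpha_i$ producing exactly the three summands of $\mcB_i(\mcF_i,\delta)$. The only quibble is that your bound $\alpha_i T\le 1$ for the below-threshold terms holds only when $\alpha_i=\tfrac{1}{T}$; in the other branch of \eqref{equ:alpha_i}, where $\alpha_i$ equals the positive minimum inter-function gap, the paper instead argues that the event $\{\Delta_{i_t}\le\alpha_i\}$ contributes nothing, so the conclusion is unchanged.
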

\begin{proof}
We first leverage the following inequality
\begin{equation}
\label{eq:bounddeltabyZ}
    \Delta_t(Z_i(t)) \leq \sup _{f, \bar{f} \in \mcC_{i,t}} \bar{f}(Z_i(t))-f(Z_i(t)) \ .
\end{equation}
Based on this inequality, we show the following inequality, which specifies the number of times that the error exceeds a certain threshold for any realization of noise vectors $\mcE = \{\bepsilon(t), t\in[T]\}$ and actions vectors $\{\ba(t), t\in[T]\}$. Specifically, we show that for the confidence sets $\mcC_{i,t}$ defined in \eqref{equ:confidence_set},  for any inputs sequence $Z_i(t)\in\mcZ_i$, for any horizon $T\in\N$, and any $\epsilon\in\R^+$, we almost surely have
    \begin{equation}\label{eq:VR_ineq}
        \sum_{t=1}^{T} \mathds{1}\left(\Delta_t(Z_i(t))>\epsilon\right)
        \leq \left(\frac{4\beta_T}{\epsilon^2}+1\right) \operatorname{dim}(\mcF_i,\epsilon) \ .
    \end{equation}
This inequality is a result of~ \citep[Proposition 3]{russo2013eluder} and \eqref{eq:bounddeltabyZ}. Next, we reorder the sequence   $\{\Delta_t(Z_i(t))\;:\; t\in[T]\}$ in the descending order denoted by  $\Delta_{i_1}\geq \Delta_{i_2} \geq \cdots \geq \Delta_{i_T}$. Subsequently, for any given $\alpha_i$ we have the following expansion:
\begin{equation}\label{eq:summands}
    \sum_{t=1}^{T} \Delta_t(Z_i(t)) =  \sum_{t=1}^{T} \Delta_{i_t} \mathds{1}\left\{\Delta_{i_t}\leq \alpha_i\right\} + \sum_{t=1}^{T} \Delta_{i_t} \mathds{1}\{\Delta_{i_t} > \alpha_i\} \ .
\end{equation}
Next, we provide upper bounds for the two summands of~\eqref{eq:summands}. For the first one, if $\alpha_i=\frac{1}{T}$, we have
\begin{equation}\label{eq:delta_bound1}
    \sum_{t=1}^{T} \Delta_{i_t} \mathds{1}\left\{\Delta_{i_t} \leq \alpha_i\right\}  \leq \sum_{t=1}^{T} \frac{1}{T} = 1 \ .
\end{equation}
On the other hand, when $\alpha_i>\frac{1}{T}$, it means that the minimum distance of any two functions is larger than $\frac{1}{T}$. This, in turn, indicates that event $\{\Delta_{i_t} \leq \alpha_i\}$ never occurs since the minimum gap between two functions is less than $\alpha_i$. Hence, for  $\alpha_i>\frac{1}{T}$ we have
\begin{equation}\label{eq:delta_bound2}
    \sum_{t=1}^{T} \Delta_{i_t} \mathds{1}\left\{\Delta_{i_t} \leq \alpha_i\right\}  =0\  .
\end{equation}
As a result, from~\eqref{eq:delta_bound1}~and~\eqref{eq:delta_bound2} we have
\begin{equation}\label{eq:delta_bound4}
    \sum_{t=1}^{T} \Delta_{i_t} \mathds{1}\left\{\Delta_{i_t} \leq \alpha_i\right\}  \leq  1 \ .
\end{equation}
Next, we bound the second summand in~\eqref{eq:summands}. Based on Assumption~\ref{ass:bound} we know that  
\begin{equation}
    \label{eq:bounddeltaC}
    \Delta_{i_t}\leq C_i \ .
\end{equation}
Furthermore, note that by the definition of  $\Delta_{i_t}$  for any $\gamma>0$ we have
\begin{align}\label{eq:delta_bound3}
    \sum_{t=1}^{T} \mathds{1}\{\Delta_{i_t}> \Delta_{i_s} -\gamma\} \geq s\ , \qquad s\in[T]\ .
\end{align}
Combining the observations in~\eqref{eq:VR_ineq}~and~\eqref{eq:delta_bound3} we have
\begin{equation}
    s\leq \left(\frac{4\beta_T}{(\Delta_{i_s} -\gamma)^2}+1\right) \operatorname{dim} (\mcF_i, \Delta_{i_s} -\gamma)\ , \qquad s\in[T]\ .
\end{equation}
Next, note that the eluder dimension is a non-increasing function of the discretization parameter. Thus,  when $\Delta_{i_s}- \gamma \geq \alpha_i$ we have $\operatorname{dim} (\mcF_i, \Delta_{i_s} -\gamma) \leq \operatorname{dim} (\mcF_i, \alpha_i)$, resulting in
\begin{equation}
    s\leq \left(\frac{4\beta_T}{(\Delta_{i_s} -\gamma)^2}+1\right) \operatorname{dim} (\mcF_i, \alpha_i) \ , \qquad s\in[T]\ .
\end{equation}
By rearranging the terms in the above inequality, when $s>\operatorname{dim}(\mcF_i,\alpha_i)$, we find that for any $\gamma>0$ have 
\begin{equation}
    \label{eq:bounddelta}
    \Delta_{i_s} - \gamma < \sqrt{\frac{4\beta_T\operatorname{dim}(\mcF_i,\alpha_i)}{s-\operatorname{dim}(\mcF_i,\alpha_i)}}\ ,  \qquad s\in[T]\ .
\end{equation}
By taking supremum of the left-hand side over $\gamma$ in \eqref{eq:bounddelta}, we conclude that when $\Delta_{i_s}>\alpha_i$ and $s>\operatorname{dim}(\mcF_i,\alpha_i)$ we have 
\begin{equation}
 \label{eq:bounddeltafinal}
        \Delta_{i_s} \leq  \sqrt{\frac{4\beta_T\operatorname{dim}(\mcF_i,\alpha_i)}{s-\operatorname{dim}(\mcF_i,\alpha_i)}} \ , \qquad s\in[T]\ .
\end{equation}

Hence, we can expand and upper bound the second summand of \eqref{eq:summands} as follows.
\begin{align}
    \sum_{t=1}^{T} \Delta_{i_{t}} \mathds{1}\{\Delta_{i_t} > \alpha_i\} &= \sum_{t=1}^{\operatorname{dim}(\mcF_i,\alpha_i)} \Delta_{i_{t}} \mathds{1}\{\Delta_{i_t} > \alpha_i\}+\sum_{t=\operatorname{dim}(\mcF_i,\alpha_i)+1}^{T} \Delta_{i_{t}} \mathds{1}\{\Delta_{i_t} > \alpha_i\}\\    
    & \overset{\eqref{eq:bounddeltaC},\eqref{eq:bounddeltafinal}}{\leq} \min\{\operatorname{dim}(\mcF_i,\alpha_i)\;,\; T\}C_i + \sum_{t=\operatorname{dim}(\mcF_i,\alpha_i)+1}^{T} \sqrt{\frac{4 \beta_T \operatorname{dim}(\mcF_i,\alpha_i)}{t-\operatorname{dim}(\mcF_i,\alpha_i)}} \\
    & \leq \min\{\operatorname{dim}(\mcF_i,\alpha_i)\;,\; T\}C_i + 2\sqrt{\beta_T\operatorname{dim}(\mcF_i,\alpha_i)}\sum_{t=\operatorname{dim}(\mcF_i,\alpha_i)+1}^{T} \frac{1}{\sqrt{t} }\\
     &\leq \min\{\operatorname{dim}(\mcF_i,\alpha_i)\;,\; T\}C_i + 2\sqrt{\beta_T\operatorname{dim}(\mcF_i,\alpha_i)}\int_{t=0}^T \frac{1}{\sqrt{t}} {\rm d} t\\
   \label{eq:delta_bound5}  &\leq \min\{\operatorname{dim}(\mcF_i,\alpha_i)\;,\; T\}C_i + 4\sqrt{\operatorname{dim}(\mcF_i,\alpha_i)\beta_T T} \ .
\end{align}
Thus, in conclusion, based on~\eqref{eq:summands}, \eqref{eq:delta_bound4}, and \eqref{eq:delta_bound5} we have 
\begin{align}
    \sum_{t=1}^{T} \Delta_t(Z_i(t))  
    &\leq 1 + \min\{\operatorname{dim}(\mcF,\alpha_i),T\}C_i + 4\sqrt{\operatorname{dim}(\mcF,\alpha_i)\beta_T T}\\
    &\leq \mcB(\mcF,\delta)\ .
\end{align}
\end{proof}
Subsequently,
\begin{align}
     \sum_{t=1}^{T} \E_{\ba(t)}\left|[X_{i}(t) \mid \bar{\bef}_t] -  [X_{i}(t)\mid \bef] \right|& =  \sum_{t=1}^{T} \E_{\ba(t)}  [\Delta_t(Z(t))] \leq \mcB(\mcF,\delta)\ ,
\end{align}
which proves the result for $L_i=1$.

\textbf{Induction Step.} Assume that the property holds true for causal depths up to $L_i=m$. We show that it will also hold for $L_i=m+1$. For this purpose, we start with the following expansion and apply the triangular inequality to find an upper bound for it.
\begin{align}
\sum_{t=1}^{T}\E_{\ba(t)} & \left|[X_{i}(t)\mid \bar \bef_t] - [X_{i}(t) \mid \bef] \right| \\
&= \sum_{t=1}^{T}  \E_{\ba(t)} \left| \bar  f_{i,t}\left( [X_{\Pa(i)}(t)\mid \bar \bef_t] \f a_i(t)\right) - f_i\left( [X_{\Pa(i)}(t)\mid \bef] \f a_i(t) \right) \right|\\
    \nonumber &\leq \sum_{t=1}^{T} \E_{\ba(t)}  \left| \bar f_{i,t}\left( [X_{\Pa(i)} (t) \mid \bar \bef_t]  \f a_i(t) \right) - \bar  f_{i,t}\left( [X_{\Pa(i)}(t)\mid \bef] \f a_i(t) \right) \right|\\
    \label{equ:mid_of_lemma1} & \hspace{1 in} + \sum_{t=1}^{T} \E_{\ba(t)} \left| \bar  f_{i,t}\left( [X_{\Pa(i)}(t) \mid \bef] \f a_i(t) \right) - f_i\left( [X_{\Pa(i)}(t)\mid\bef] \f a_i(t) \right) \right|\\
    &\leq K_i \sum_{t=1}^{T}  \E_{\ba(t)} \norm{[X_{\Pa(i)}(t) \mid  \bar \bef_t] - [X_{\Pa(i)}(t)\mid \bef]} + \mcB(\mcF,\delta) \ \label{equ:mid_of_lemma} \\
    & \leq K_i \sum_{j\in\Pa(i)} \sum_{t=1}^{T}  \E_{\ba(t)} \left|[X_{j}(t)\mid \bar \bef_t] - [X_{j}(t)\mid \bef]\right| + \mcB(\mcF,\delta) \ .\label{equ:lemmasum2_mid}
\end{align}
where the transition to \eqref{equ:mid_of_lemma1} holds due to the triangular inequality via adding and subtracting terms $\bar  f_{i,t}\left( [X_{\Pa(i)}(t)\mid \bef] \f a_i(t) \right)$; \eqref{equ:mid_of_lemma} holds due to Lipschitz-continuity in conjunction with Lemma~\ref{lemma:bounddelta}; and \eqref{equ:lemmasum2_mid} holds since the triangle inequality of $L_2$ norm.

Next, we find an upper bound on the first summand in \eqref{equ:lemmasum2_mid}. We notice that the summation is taken over all parents of node $i$, thus, we aim to find an upper bound for the error bound for each parent. Based on the induction assumption, for each node $j\in\Pa(i)$, we have 
\begin{align}
    \sum_{t=1}^{T}\E_{\ba(t)} \left|[X_{j}(t) \mid \bar \bef_t] - [X_{j}(t) \mid \bef]\right| & \;  \leq \; \mcB(\mcF,\delta)\sum_{\ell=1}^{L_j}  d^{\ell-1} \prod_{k=2}^{\ell} K^{(k)} \\
    &     \label{equ:inductioncondition} \;  \leq \;  \mcB(\mcF,\delta)  \sum_{\ell=1}^{m} d^{\ell-1} \prod_{k=2}^{\ell} K^{(k)}  \ ,
\end{align}
where \eqref{equ:inductioncondition} holds due to the causal depth of parent nodes is less than that of node $i$, i.e., $L_j\leq L_i-1 = m$ for all $j\in\Pa(i)$. Subsequently, plugging \eqref{equ:inductioncondition} into \eqref{equ:lemmasum2_mid}, we obtain
\begin{align}
    K_i \sum_{j\in\Pa(i)}  \sum_{t=1}^{T} \E_{\ba(t)} \left|[X_{j}(t)\mid \bar \bef_t] - [X_{j}(t)\mid \bef]\right|  & \;  \leq \; 
    K_i\; \sum_{j\in\Pa(i)} \mcB(\mcF,\delta)  \sum_{\ell=1}^{m}  d^{\ell-1} \prod_{k=2}^{\ell} K^{(k)} \\
    & \;  \leq \;  d K_i\; \mcB(\mcF,\delta)  \sum_{\ell=1}^{m}  d^{\ell-1} \prod_{k=2}^{\ell} K^{(k)} \\
    &\leq \mcB(\mcF,\delta)\sum_{\ell=1}^{m}  d^{\ell} \prod_{k=2}^{\ell+1} K^{(k)}\label{equ:lemma5.2sum2}\ ,
\end{align}
where the inequality in \eqref{equ:lemma5.2sum2} follows from $K_i\leq K^{(m+1)}$. Combining  the results in~\eqref{equ:lemmasum2_mid}
and~\eqref{equ:lemma5.2sum2}, we conclude
\begin{align}
\sum_{t=1}^{T}\E_{\ba(t)}\left|[X_{i}(t) \mid \bar {\bef}_t] - [X_{i}(t) \mid \bef]\right|  &\leq \mcB(\mcF,\delta)  \sum_{\ell=1}^{m+1} d^{\ell-1} \prod_{k=2}^{\ell} K^{(k)} \ ,
\end{align}
which establishes the desired property in~\eqref{equ:lemma5aim}. Finally, by applying Jensen's inequality we find
\begin{align}
    \sum_{t=1}^{T}\left|\E_{\ba(t)}[X_{i}(t) \mid \bar {\bef}_t] -  \E_{\ba(t)}[X_{i}(t)\mid \bef] \right| &  \leq \E_{\ba(t)} \left| \sum_{t=1}^{T} [X_{i}(t) \mid \bar{\bef}_t] -  [X_{i}(t)\mid \bef] \right| \\
    & \leq \mcB(\mcF,\delta) \sum_{\ell=1}^{L_i}  d^{\ell-1} \prod_{k=2}^{\ell} K^{(k)} \ ,
\end{align}
which completes the proof.

\subsection{Proof of Theorem~\ref{thm:regret}}
We start by setting $\delta=\frac{1}{NT}$ and defining the event in which over $T$ rounds, all confidence sets $\mcC_{i,t}$ contain the ground truth function $f_i$. Specifically, 
\begin{equation}
    \mcE_i \triangleq \left\{f_i \in \mcC_{i,t}\ , \;\; \forall t\in[T]\right\}\ .
\end{equation}
Accordingly, define the event
\begin{align}
    \mcE \triangleq \bigcap_{i=1}^N \mcE_i\ .
\end{align}
According to Lemma~\ref{lemma:concentration},  we know $\P(\mcE^{\rm c}_i)\leq \frac{2}{NT}$. By invoking the union bound on probability, we have
\begin{equation}
    \P(\mcE^{\rm c}) \;  \leq \; \sum_{i=1}^{N} \P(\mcE^{\rm c}_i) \; \leq \; \sum_{i=1}^{N} \frac{2}{NT} \; = \; \frac{2}{T} \ .
\end{equation}
Next, we decompose the regret defined in \eqref{eq:frequentist_regret}  under the events $\mcE$ and $\mcE^{\rm c}$.
\begin{align}
    \E[R(T)] &= \sum_{t=1}^{T} \left[\mu_{\ba^*}-\mu_{\ba(t)}\right]\\
    & =  \sum_{t=1}^{T} \E_{\ba^*}[X_{N}\mid  \bef] - \E_{\ba(t)}[X_{N}\mid \bef] \\
    \nonumber & =  \sum_{t=1}^{T} \E \Bigg[ \mathds{1}\{\mcE^{\rm c}\} \underset{\leq 2C_N}{\underbrace{\bigg(\E_{\ba^*}[X_{N}\mid  \bef] - \E_{\ba(t)}[X_{N}\mid \bef]\bigg)}}\Bigg] + \sum_{t=1}^{T}  \E \Bigg[ \mathds{1}\{\mcE\}\;\bigg(\underset{\leq \operatorname{UCB}_{\ba^*}(t)}{\underbrace{\E_{\ba^*}[X_{N}\mid  \bef]}} - \E_{\ba(t)}[X_{N}\mid \bef]\bigg)\Bigg]\\
    & \leq 2C_N \sum_{t=1}^{T} \underset{\leq \frac{2}{T}}{\underbrace{\P (\mcE^{\rm c})}}
      + \sum_{t=1}^{T}  \E \Bigg[ \operatorname{UCB}_{\ba(t)}(t) - \E_{\ba(t)}[X_{N}\mid \bef]\Bigg]\\
    & \leq 4C_N + \sum_{t=1}^{T} \E \bigg[ \E_{\ba(t)}[X_{N}\mid \bar \bef_t] - \E_{\ba(t)}[X_{N}\mid \bef]\bigg]\label{eq:upperbound_plugin}\ ,
\end{align}
where we have used the set of inequalities 
\begin{equation}
\E_{\ba^*}[X_N \mid \bef ] \leq \operatorname{UCB}_{\ba^*}(t)\leq \operatorname{UCB}_{\ba(t)}(t_)\leq \E_{\ba(t)}[X_N \mid \bar \bef_t] \ ,
\end{equation}
in which $\bar \bef_t$ is a function that achieves the upper bound $\operatorname{UCB}_{\ba(t)}(t)$, i.e., 
\begin{equation}
    \bar{\bef}_t \in \argmax_{\bef\in \mcC_t} \operatorname{UCB}_{\ba(t)}(t) \ .
\end{equation}
 Then using the lemma~\ref{lm:bound_l_paths}, we have 
\begin{align}
    \E[R(T)]  &\leq 4C_N+\mcB(\mcF,\delta) \sum_{\ell=1}^{L} d^{\ell-1}\prod_{k=2}^{\ell} K^{(k)}\\
    & = \mcO\Big(\prod_{\ell=2}^{L} K^{(\ell)}d^{L-1} \sqrt{T\; \operatorname{dim}(\mcF) \log\left(NT\cn(\mcF)\right)}\Big)\label{eq:boundregret_1} \\
    & = \mcO\Big(Kd^{L-1} \sqrt{T\; \operatorname{dim}(\mcF) \log\left(NT\cn(\mcF)\right)}\Big)\label{eq:boundregret_2}\ ,
\end{align}
where the~\eqref{eq:boundregret_1} is due to the fact that we have $\alpha_i\leq \frac{1}{T}$ and the fact that for $d\geq 2$
\begin{equation}
    \sum_{\ell=1}^{L} d^{\ell-1}=\frac{d^{L}-1}{d-1}   \leq 2d^{L-1} = \tilde{\mcO} (d^{L-1})\ ,
\end{equation}
and \eqref{eq:boundregret_2} holds due to the definition $K=\prod_{\ell=2}^{L} K^{(\ell)}$.


\section{Regret Upper Bounds for Special SCMs}

We start by providing an equivalent definition of eluder dimension $\operatorname{dim}(\mcF_i,\epsilon)$, which follows the definitions of $\epsilon$-dependence and $\epsilon$-eluder dimension in Definition~\ref{def:dependence} and Definition~\ref{def:eluder}.  For a class of functions $\mcF_i$ with input space $\mcZ_i$, an ordered sequence of $m$ inputs $\mcZ_i(n) = \{Z_i(m)\in\mcZ_i:m\in[n]\}$, and $\epsilon\in\R^+$, for some $\epsilon'\geq \epsilon$ we define
\begin{equation}\label{equ:def:w_k}
    W_s(\mcF_i , \mcZ_i(n))\triangleq \sup\left\{\left|f(Z_i(s))-\tilde f(Z_i(s))\right| ~\Bigg|~ f,\tilde f\in\mcF_i, \; \sqrt{\sum_{m=1}^{s-1}\big(f(Z_i(m))-\tilde f(Z_i(m))\big)^2}\leq \epsilon'\right\}\ .
\end{equation}
The eluder dimension $\operatorname{dim}(\mcF_i,\epsilon)$ is the length of the longest ordered sequence $\mcZ_i(n)$ such that we have $ W_s(\mcF_i , \mcZ_i(n)) \geq \epsilon'$ for all $t\in[n]$. Throughout the rest of this section, when it is clear from the context, we suppress the dependence of $ W_s(\mcF_i , \mcZ_i(n)))$ on $\mcZ_i(n)$ and $\mcF_i$, and use the shorthand $W_s$.

\label{sec:refinedupperbounds}
\subsection{Proof of Theorem~\ref{th:SCM_lin} (Linear SCMs)}

\begin{lemma}
\label{lem:linearSEM_dim}
    Consider the class of linear SCMs $\{\mcF_i:i\in[N]\}$ specified in~\eqref{eq:SCM_lin} with the binary intervention space $\mcA_i=\{0,1\}$. The eluder dimension of $\mcF_i$ satisfies 
  \begin{equation}
    \operatorname{dim}(\mcF_i)  = \tilde{\mcO}(d_i\log(T))\ .
\end{equation}
\end{lemma}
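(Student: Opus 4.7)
The plan is to reduce the eluder dimension of $\mcF_i$ to that of a standard bounded linear class in $\R^{d_i}$, exploiting the fact that on any input $Z_i = (X_{\Pa(i)}, a_i)$ with $a_i\in\{0,1\}$ only one of the two parameter vectors $\theta_i$, $\bar\theta_i$ is active.

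First I would split any candidate eluder sequence by intervention value. Concretely, let $\mcZ_i(n) = \{Z_i(1),\dots,Z_i(n)\}$ be an ordered sequence on which $W_s(\mcF_i,\mcZ_i(n))\geq \epsilon'$ for every $s\in[n]$ and some $\epsilon'\geq \epsilon$. Partition $[n]$ into $S_0 = \{s:a_i(s)=0\}$ and $S_1 = \{s:a_i(s)=1\}$. For $f=(\theta,\bar\theta)$ and $\tilde f=(\theta',\bar\theta')$ in $\mcF_i$, a direct computation gives
\begin{equation}
f(Z_i(m))-\tilde f(Z_i(m)) \;=\; \begin{cases} \langle \theta-\theta',\,X_{\Pa(i)}(m)\rangle, & m\in S_0,\\ \langle \bar\theta-\bar\theta',\,X_{\Pa(i)}(m)\rangle, & m\in S_1.\end{cases}
\end{equation}
Thus the two difference components are completely decoupled: one depends only on $\theta-\theta'$ and the other only on $\bar\theta-\bar\theta'$.

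Next I would use this decoupling to show that the restrictions $\mcZ_i(n)|_{S_0}$ and $\mcZ_i(n)|_{S_1}$ (keeping the induced order) are themselves eluder sequences for the bounded linear class $\mcF^{\rm lin}_{d_i}\triangleq\{X\mapsto \langle \theta,X\rangle:\|\theta\|\leq K_i\}$ on $\R^{d_i}$. Indeed, if $s\in S_0$, then for any witnesses $f,\tilde f\in\mcF_i$ of $W_s(\mcF_i,\mcZ_i(n))\geq \epsilon'$ we may set $\bar\theta=\bar\theta'$ without affecting the constraint $|f(Z_i(s))-\tilde f(Z_i(s))|>\epsilon'$, which kills the $S_1$-part of the predecessor sum. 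The remaining inequalities $\sum_{m<s,\,m\in S_0}\langle\theta-\theta',X_{\Pa(i)}(m)\rangle^2\leq (\epsilon')^2$ and $|\langle \theta-\theta',X_{\Pa(i)}(s)\rangle|>\epsilon'$ are precisely the conditions for $s$ being eluder-independent in $\mcF^{\rm lin}_{d_i}$. The symmetric argument handles $s\in S_1$. Hence
\begin{equation}
\operatorname{dim}(\mcF_i,\epsilon) \;\leq\; 2\,\operatorname{dim}(\mcF^{\rm lin}_{d_i},\epsilon).
\end{equation}

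Finally I would invoke the classical bound on the eluder dimension of bounded linear function classes \citep{russo2013eluder}, which, together with Assumption~\ref{ass:bound} ensuring $\|X_{\Pa(i)}\|$ is bounded, yields $\operatorname{dim}(\mcF^{\rm lin}_{d_i},\epsilon) = \tilde{\mcO}(d_i\log(1/\epsilon))$. Setting $\epsilon=\alpha_i=1/T$ as in~\eqref{equ:alpha_i} delivers $\operatorname{dim}(\mcF_i) = \tilde{\mcO}(d_i\log T)$, as claimed. The main subtlety is the decoupling step: one must carefully choose the witnesses $(f,\tilde f)$ on the subsequence so that the "inactive" parameter does not inflate the predecessor residual, and confirm that the norm constraint $\|\theta\|\leq K_i$ (respectively $\|\bar\theta\|\leq K_i$) on the retained component is exactly the constraint defining $\mcF^{\rm lin}_{d_i}$.
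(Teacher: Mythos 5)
Your proposal is correct and follows essentially the same route as the paper: the paper's Step~1 performs exactly your decoupling (dropping the inactive parameter's quadratic term because the corresponding Gram matrix is positive semidefinite), and its Steps~2--3 simply re-derive inline, via the determinant/elliptical-potential argument and the counts $N_0(t)+N_1(t)$, the linear eluder-dimension bound $\tilde{\mcO}(d_i\log(1/\epsilon))$ that you invoke as a black box from \citet{russo2013eluder}. The resulting bound $\operatorname{dim}(\mcF_i,\epsilon)\leq 2\operatorname{dim}(\mcF^{\rm lin}_{d_i},\epsilon)$ with $\epsilon=1/T$ matches the paper's conclusion.
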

\label{eluder:linear}
\begin{proof} 
For the selected class, we have
\begin{equation}
    f_i(X_{\Pa(i)}\; ;\; a_i)=\langle \theta_i(1-a_i)+a_i\bar\theta_i\; , \;  X_{\Pa(i)}\rangle\ .
\end{equation}
Consider two functions $f_i,\tilde f_i\in\mcF_i$ specified by model parameters $(\theta_i, \bar \theta_i)$ and $(\psi_i, \bar \psi_i)$. Accordingly, define
\begin{align}
    \Delta = \theta_i - \psi_i\ , \qquad \mbox{and} \qquad \bar \Delta = \bar \theta_i - \bar \psi_i\ .
\end{align}
Leveraging the norm bound on the model parameters specified in~\eqref{eq:SCM_lin}, we have $\norm{\Delta}\leq 2K_i$ and $\norm{\bar\Delta}\leq 2K_i$. We also define the following Gram matrices:
\begin{align}
\label{eq:Phi_0} \Phi_{s,0}\triangleq  \sum_{m=1}^{s-1}\mathds{1}\{a_i(m)=0\}X_{\Pa(i)}(m) X_{\Pa(i)}^{\top}(m)\ ,\\
\label{eq:Phi_1} \Phi_{s,1}\triangleq \sum_{m=1}^{s-1}\mathds{1}\{a_i(m)=1\}X_{\Pa(i)}(m) X_{\Pa(i)}^{\top}(m)\ .
\end{align}
By setting
\begin{align}
    \lambda\triangleq \Big(\frac{\epsilon'}{2K_i}\Big)^2\ ,
\end{align}
we also define the $d_i\times d_i$ matrices
\begin{align}\label{eq:V}
V_{t,0}\triangleq \Phi_{t,0}+\lambda I\ , \qquad \mbox{and} \qquad V_{t,1} \triangleq \Phi_{t,1}+\lambda I\ .
\end{align}
Based on \eqref{eq:Phi_0} and \eqref{eq:Phi_1} we also have
\begin{align}
    \sum_{m=1}^{s-1} \left(f_i(X_{\Pa(i)}(m) \f a_i(m))-\tilde f_i(X_{\Pa(N)}(m) \f a_i(m) )\right)^2=\Delta^{\top}\Phi_{s,0} \Delta + \bar \Delta^{\top}\Phi_{s,1}\bar \Delta\ .
\end{align}
The proof involves bounding the number of times that the event $W_s>\epsilon'$ occurs for any ordered sequence $\{Z_i(m)\in\mcZ_i:m\in[n]\}$, and it consists of three steps.

\textbf{Step 1:} We start by showing that if for a given $t\in \N$, the event  $W_t\geq \epsilon'$ implies that 
\begin{equation}
    X_{\Pa(i)}^{\top}(t) V_{t,a_i(t)}^{-1}X_{\Pa(i)}(t)\geq \frac{1}{2} \ .
\end{equation}
We provide the proof for $a_i(t)=0$, and that of $a_i(t)=1$ follows the same steps. We first find an upper bound on $W_t$ as follows.

    \begin{align}
        W_t& = \sup\left\{\left|\langle \Delta(1-a_i(t))+a_i(t) \bar\Delta , X_{\Pa(i)}(t)\rangle\right| ~\Bigg|~ \|\Delta\|\leq 2K_i, \|\bar\Delta\|\leq 2K_i , \;  \Delta^{\top}\Phi_{t,0}\Delta +   {\bar \Delta}^{\top}\Phi_{t,1}\bar \Delta \leq (\epsilon')^2\right\}\ .
        \label{equ:dimension_mid1}\\
        &= \sup\{|\langle \Delta, X_{\Pa(i)}(t)\rangle|~|~ \|\Delta\|\leq 2K_i, \|\bar\Delta\|\leq 2K_i , \Delta^{\top}\Phi_{t,0}\Delta +   {\bar \Delta}^{\top}\Phi_{t,1}\bar \Delta \leq (\epsilon')^2\} \label{equ:dimension_mid2}\\
        &\leq \sup\{|\langle \Delta, X_{\Pa(i)}(t)\rangle|~|~\Delta^{\top}\Phi_{t,0}\Delta \leq (\epsilon')^2\} \label{equ:dimension_mid2_1}\\
        &\leq \sup\{|\langle \Delta, X_{\Pa(i)}(t)\rangle|~|~\Delta^{\top}V_{t,0}\Delta \leq 2(\epsilon')^2\} \label{equ:dimension_mid3}\\
        & = \sqrt{2(\epsilon')^2}\|X_{\Pa(i)}(t)\|_{V_{t,0}^{-1}} \ ,
    \end{align}
    where we use the definition of $w_t$ in \eqref{equ:dimension_mid1}, we set $a_i(t)=1$ in \eqref{equ:dimension_mid2}, and we relax the conditions in  \eqref{equ:dimension_mid2_1} by dropping the norm constraints and noting that $\Phi_{t,1}$ is positive semidefinite.  The last equality in ~\eqref{equ:dimension_mid3} holds due to $\Delta^{\top} \lambda I \Delta \leq (\epsilon')^2$ and we relax the condition. Hence, the event $W_t\geq \epsilon'$ implies that
    \begin{equation}
        \|X_{\Pa(i)}(t)\|_{V_{t,0}^{-1}}\geq \frac{1}{2} \ .
    \end{equation}
    
\textbf{Step 2:} In this step, we provide upper and lower bounds for the determinants of matrices $V_{t,0}$ and $V_{t,1}$. For this purpose, let us define the counting numbers $N_0(t)$ and $N_1(t)$ as the numbers that count the number of interventions $0$ and $1$ chosen up to time, i.e., 
$t-1$, respectively.
\begin{align}
    N_0(t) = \sum_{s=1}^{t-1}\mathds{1}\{a_i(s)=0\} \ , \qquad  \mbox{and} \qquad 
    N_1(t) = \sum_{s=1}^{t-1} \mathds{1}\{a_i(s)=1\} \ .
\end{align}
Then, if $W_s\geq \epsilon'$ holds for all $s\in[t]$ we have the following inequalities.
\begin{align}
    \operatorname{det}(V_{t,0})\geq \lambda^{d_i} \left(\frac{3}{2}\right)^{N_0(t)} , \qquad \mbox{and} \qquad 
    \operatorname{det}(V_{t,1})\geq \lambda^{d_i} \left(\frac{3}{2}\right)^{N_1(t)} \ .
\end{align}
From the definition of $V_{t,0}$ in \eqref{eq:V}, for all $s\in[t]$ we have 
\begin{equation}
    V_{s,0} = V_{s-1,0}+\mathds{1}\{a_i(s-1) = 0\} X_{\Pa(i)}(s-1) X_{\Pa(i)}^{\top}(s-1)\ .
\end{equation} By using the matrix determinant lemma, we obtain
\begin{align}\label{eq:det_rec}
\operatorname{det} (V_{s,0}) &= \left\{ \begin{aligned}
    &\operatorname{det} (V_{s-1,0})\left(1+ \|X_{\Pa(i)}(s-1)\|_{V_{s-1,0}^{-1}}\right) \geq  \frac{3}{2} \operatorname{det}(V_{s,0}) &\text{ if } a_i(s-1) = 0 \ \\
    &\operatorname{det} (V_{s-1,0}) &\text{ if } a_i(s-1) = 1 \ 
\end{aligned}\right. \ ,
\end{align}
where for the case of $a_i(s-1)=0$ we use the results in {Step 1} stating  that $\|X_{\Pa(i)}(s)\|_{V_{s,0}^{-1}}\geq \frac{1}{2}$ for all $s\in[t]$. By noting that for the initial value of $s=0$ we have $\operatorname{det}(V_{0,0})=\operatorname{det}(\lambda I) = \lambda^{d_i}$, by recursively applying \eqref{eq:det_rec} we have
\begin{align}\label{eq:det_LB}
    \operatorname{det}(V_{t,0})\geq \lambda^{d_i} \left(\frac{3}{2}\right)^{N_0(t)}  \ .
\end{align}
The results for $V_{t,1}$ follow the same line of arguments. Since matrices $V_{t,0}$ and $V_{t,1}$ are symmetric and positive-definite, we have the following arithmetic-geometric inequality and leveraging~\eqref{eq:V} we have
\begin{equation}\label{eq:det_UB}
\operatorname{det}(V_{t,0}) \leq \left(\frac{\operatorname{tr}(V_{t,0})}{d_i}\right)^{d_i} \leq\left(\frac{N_0(t)}{d_i}c+\lambda\right)^{d_i} \ , \quad \mbox{and} \quad \operatorname{det}(V_{t,1}) \leq \left(\frac{\operatorname{tr}(V_{t,1})}{d_i}\right)^{d_i} \leq\left(\frac{ N_1(t)}{d_i}c+\lambda\right)^{d_i} \ .
\end{equation}
where we have set $c=\sup_{X_{\Pa(i)}}\norm{X_{\Pa(i)}}^2$.
Next, recall that $\operatorname{det}(V_{t,0})$ is the product of the eigenvalues of $V_{t,0}$, whereas the trace $\operatorname{tr}(V_{t,0})$ is the summation of the eigenvalues. As  $\operatorname{det}(V_{s,0})$ is positive definite, the determinant is maximized when all eigenvalues are equal

\textbf{Step 3:} Based on the determinant bounds in~\eqref{eq:det_LB} and \eqref{eq:det_UB}, we can find the following upper bound on $N_0(t)$:
\begin{equation}
\label{eq:eluderlin1}
    \left(\frac{3}{2}\right)^{\frac{N_0(t)}{d_i}}\leq  \frac{N_0(t)}{d_i}\bar c  +1 \ ,
\end{equation}
where we have defined $\bar c=c/\lambda$.
To find an upper bound on $N_0(t)$, we find the largest $N_0(t)$ that satisfies \eqref{eq:eluderlin1}. For this purpose, we define the function 
\begin{equation}
    B(x) \triangleq  \max\{y \geq 1 \; : \; (1+x)^y\leq \bar c y +1\}\ ,
\end{equation}
based on which we have  $N_0(t)\leq d_i B(\frac{1}{2})$. Next, we find an explicit upper bound on $B(x)$ for $x\leq 1$. By noting that $\log(1+x)\geq \frac{x}{1+x}$ for $x\leq 1$, from the definition of $B(x)$ we have 
\begin{align}
     B(x)\cdot \frac{x}{1+x } &\leq B(x)\log(1+x) \\
     &\leq \log( \bar c\bar B(x)+1)\\
    &\leq \log(\bar c B(x) + \bar B(x))\\
    &= \log B(x) + \log(1+\bar c)\ \\
    &= \log \left(B(x)\cdot \frac{x}{1+x }\right) +\log\frac{1+x}{x} + \log(1+\bar c)\\
    &\leq B(x)\cdot \frac{x}{1+x }\cdot\frac{1}{e} +\log\frac{1+x}{x} + \log(1+\bar c)\ .
\end{align}
By rearranging the terms, we have
\begin{align}
     B(x)\cdot \frac{x}{1+x } \left(1-\frac{1}{e}\right) \log\frac{1+x}{x} + \log(1+\bar c)\ ,
\end{align}
or equivalently
\begin{equation}
    B(x)\leq \frac{1+x}{x}\cdot \frac{e}{e-1}\left(  \log\frac{1+x}{x} + \log(1+\bar c)\right) \ ,
\end{equation}
which specifies an upper bound on $B(x)$ in terms of $x$. By setting $x=\frac{1}{2}$ we obtain 
\begin{equation}
    N_0(t) \leq d_iB\left(\frac{1}{2}\right) \leq 4.8\Big(1.1+\log (1+\bar c)\Big) d_i\ .
\end{equation}
Following the same steps, we can find a similar bound for $N_1(t)$, based on which by recalling the definitions $\bar c=c/\lambda$, $\lambda=(\epsilon'/2K_i)^2$, and by setting $\epsilon'=\frac{1}{T}$ we obtain 
\begin{equation}
    \operatorname{dim}(\mcF_i, \epsilon) \leq N_0(t)+ N_1(t) \leq 9.6\Big(1.1+\log (1+\bar c)\Big)d_i = \tilde{\mcO}(d_i\log(T))\ .
\end{equation}
\end{proof}

\begin{lemma}[Covering number.]
\label{lem:linearSEM_cn}
    Consider the class of linear SCMs $\{\mcF_i:i\in[N]\}$ specified in~\eqref{eq:SCM_lin} with the binary intervention space $\mcA_i=\{0,1\}$. The covering number of $\mcF_i$ satisfies 
  \begin{equation}
    \log \cn(\mcF_i)  = \tilde{\mcO}(d\log(T))\ .
\end{equation}
\end{lemma}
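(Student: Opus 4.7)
The function class $\mcF_i$ is parameterized by the pair $(\theta_i,\bar\theta_i)\in B_{K_i}\times B_{K_i}$, where $B_{K_i}\subset\R^{d_i}$ is the Euclidean ball of radius $K_i$. My plan is to reduce covering $\mcF_i$ in the infinity norm to covering its parameter space in the Euclidean norm, then apply the standard volumetric bound $\cn_\delta(B_{K_i})\leq (1+2K_i/\delta)^{d_i}$ for each factor.

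First, I would derive a parameter-to-function Lipschitz relation. For any two parameter pairs $(\theta,\bar\theta)$ and $(\theta',\bar\theta')$, and any input $Z_i=(X_{\Pa(i)},a_i)\in\mcZ_i$ with $a_i\in\{0,1\}$,
\begin{align*}
\bigl|f_i(X_{\Pa(i)}\f a_i)-f_i'(X_{\Pa(i)}\f a_i)\bigr|
&=\bigl|\langle(1-a_i)(\theta-\theta')+a_i(\bar\theta-\bar\theta'),X_{\Pa(i)}\rangle\bigr|\\
&\leq \|X_{\Pa(i)}\|\cdot \max\{\|\theta-\theta'\|,\|\bar\theta-\bar\theta'\|\}\,.
\end{align*}
Under Assumption~\ref{ass:bound} (which, as noted in the paper, forces the noise and hence each $X_j$ to be bounded in the linear regime), there exists a deterministic constant $R>0$ with $\sup_{Z_i\in\mcZ_i}\|X_{\Pa(i)}\|\leq R$. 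Therefore $\|f_i-f_i'\|_\infty\leq R\max\{\|\theta-\theta'\|,\|\bar\theta-\bar\theta'\|\}$.

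Next, I would construct a cover by taking an $(\alpha_i/R)$-cover $\mcN$ of $B_{K_i}$ in Euclidean norm, with $|\mcN|\leq (1+2K_iR/\alpha_i)^{d_i}$. The product $\mcN\times\mcN$ induces, via the parameterization, an $\alpha_i$-cover of $\mcF_i$ in infinity norm of cardinality at most $(1+2K_iR/\alpha_i)^{2d_i}$. Setting $\alpha_i=1/T$ and taking logarithms,
\begin{equation*}
\log \cn(\mcF_i)\;\leq\; 2d_i\log\!\bigl(1+2K_iRT\bigr)\;=\;\tilde\mcO(d_i\log T)\;\leq\;\tilde\mcO(d\log T)\,.
\end{equation*}

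The only subtle point I anticipate is justifying the uniform bound $R$ on $\|X_{\Pa(i)}\|$, since Assumption~\ref{ass:bound} controls the values of $f_i$ rather than the coordinates of $X$ directly. This is handled by the paper's remark following Assumption~\ref{ass:bound}: under the linear SCM with the norm constraints in \eqref{eq:SCM_lin}, boundedness of $f_i$ together with sub-Gaussian noise (which in this setting must be taken bounded to be consistent with the functional bound) yields a deterministic bound on each $X_j$, and hence on $\|X_{\Pa(i)}\|$ by the triangle inequality. Everything else is a routine volume argument, so no further obstacle is expected.
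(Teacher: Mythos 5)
Your proposal is correct and follows essentially the same route as the paper: the paper invokes a lemma of Russo and Van Roy bounding the covering number of a linearly parameterized class via a volumetric cover of the parameter hypercube $[-K_i,K_i]^{2d_i}$ together with the bound $c=\sup\|X_{\Pa(i)}\|^2$ on the inputs, which is exactly the parameter-to-function Lipschitz argument you carry out explicitly (your use of the Euclidean ball rather than the hypercube only changes constants). Your handling of the uniform bound $R$ on $\|X_{\Pa(i)}\|$ also matches the paper's remark that Assumption~\ref{ass:bound} forces bounded noise, hence bounded node values, in the linear case.
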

\begin{proof}
    We adopt the following lemma (described in terms of the parameters in this paper) to find an upper bound on the covering number for linear SCMs. 
\begin{lemma}\citep{russo2013eluder} \label{lem:coveringnumber}
Suppose the parameters of the Linear SCM class $\mcF_i$ belong to the hypercube $[-A,A]^{2d_i}$. Then $\cn_{\alpha}\left(\mcF_i\right) \leq(1 + 2Ac/\alpha)^{2d_i}$. 
\end{lemma}

By setting $A=K_i$, our hyper-sphere parameter space $\|\theta_i\|\leq K_i$ and $\|\bar \theta_i\|\leq K_i$ will be confined within the hypercube $[\theta_i,\bar \theta_i] \subset[-K_i,K_i]^{2d_i}$. Subsequently, the associate covering number will be upper-bounded by $(1 + 2cK_i/\alpha)^{2d_i}$. By setting $\alpha=\frac{1}{T}$ we have
\begin{equation}
    \ln \cn_{\alpha}(\mcF_i) \leq 2d_i\ln \left(1+2cK_iT\right) =  \tilde{\mcO}(d\log(T))\ .
\end{equation}
\end{proof}
Based on the results of Lemma~\ref{lem:linearSEM_dim} and Lemma~\ref{lem:linearSEM_cn}, we the regret bound in Theorem~\ref{thm:regret} becomes
\begin{align}
    \E[R(T)] &= \mcO\left(K d^{L-1}  \sqrt{T\; \operatorname{dim}(\mcF) \log \big( NT \cn(\mcF) \big) }\right) \\
    &=\mcO\left(K d^{L-1}  \sqrt{T\; d \log T \Big(\log NT + d \log T\Big)  }\right) \\
    & = \mcO(Kd^{L}\sqrt{T\log T\varpi_{\rm L}})\ ,
\end{align}
where 
\begin{align}\label{eq:varpi_d}
    \varpi_{\rm L}=\log T \left[1+\frac{\log N}{d\log T}\right]\ .
\end{align}
Note that the term $d$ in \eqref{eq:varpi_d} can be refined to $d+1$, but that will not change the scaling behavior of the regret with respect to $T$.

The proofs presented for Lemma~\ref{lem:linearSEM_dim} and Lemma~\ref{lem:linearSEM_cn} can be readily extended to discrete and finite spaces. The main change in the proof pertains to repeating the arguments we presented for bounding $N_0(t)$ and $N_1(1)$, for all other interventions as well. This results in bounding the eluder dimension as  
\begin{equation}
    \operatorname{dim}(\mcF_i)=\sum_{j=1}^{|\mcA_i|} N_{j}(t) \leq |\mcA_i| 4.8\Big(1.1+\ln (1+\bar c)\Big) d_i= \tilde{\mcO}(d_i\log(T))\ .
\end{equation}
Since the $\mcA_i$ is finite, we can find a constant $c_a$ such that $|a_{i,j}|\leq c_a$  covering number is minor by replacing $c$ with $c+c_a$, which yields
\begin{equation}
    \ln \cn_{\alpha}(\mcF_i) \leq 2d_i\ln \left(1+2(c+c_a)K_iT\right) =  \tilde{\mcO}(d\log(T))\ .
\end{equation}

\subsection{Proof of Theorem~\ref{th:SCM_poly} (Polynomial SCMs)}
\label{eluder:polynomial}
For $p=2$ for the functions specified by \eqref{eq:functionpoly} we have
\begin{align}
        f_i(X_{\Pa(i)}\; ;\; a_i)=\left(\langle\theta_{i}\; , \;  [X_{\Pa(i)} , a_i] \rangle\right)^2 \ ,
\end{align}
which can be recast as 
\begin{equation}
    f_i(X_{\Pa(i)}\; ;\; a_i) = [X_{\Pa(i)} , a_i]^{\top} \Theta [X_{\Pa(i)} , a_i] \ ,
\end{equation}
where $\Theta\in\R^{(d+1)\times (d+1)}$ is a matrix with element $\Theta_{j,k} = \theta_{i,j}\theta_{i,k}$ with $\theta_{i,j}$ is $j$-th element of $\theta_i$. Hence, the associated class $\mcF_i$ specified by \eqref{eq:SCM_poly} is a subspace of the space of quadratic function. Hence, the eluder dimension of $\mcF_i$ can be upper bounded based on the following lemmas~\citep{osband2014model}.
\begin{lemma}
\label{lem:qua}
    For the class of polynomial SCMs defined in \eqref{eq:SCM_poly} with $p=2$ and bounded intervention space such that $|a_i|\leq 1$ for all $a_i\in\mcA$, the eluder dimension satisfies
    \begin{equation}
        \operatorname{dim} (\mcF_i,\epsilon) = \tilde{\mcO}\big((d+1)^2\log(1/\epsilon)\big) \ .
    \end{equation}
\end{lemma}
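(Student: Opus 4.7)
\textbf{Proof proposal for Lemma~\ref{lem:qua}.}
The plan is to reduce the eluder dimension bound for quadratic SCMs to the linear case by a feature lift, so that the argument of Lemma~\ref{lem:linearSEM_dim} can be invoked with $d_i$ replaced by $(d+1)^2$. The critical observation, already made in the discussion preceding the lemma, is that every $f_i\in\mcF_i$ can be written as
\begin{equation*}
    f_i(X_{\Pa(i)}\f a_i)=\langle \Theta,\phi(X_{\Pa(i)},a_i)\rangle \ ,
\end{equation*}
where $\phi(x,a)\triangleq\operatorname{vec}\bigl([x,a]\,[x,a]^{\top}\bigr)\in\R^{(d+1)^2}$ and $\Theta=\operatorname{vec}(\theta_i\theta_i^{\top})\in\R^{(d+1)^2}$. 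The norm constraint $\|\theta_i\|^2\leq K_i/(2(dC+1)^2)$ in \eqref{eq:SCM_poly} translates into a Euclidean-ball constraint on $\Theta$. Since $\mcF_i$ embeds into the larger class of \emph{all} linear functions on this feature space with bounded coefficient vector, and the eluder dimension is monotone under class inclusion, it suffices to bound the eluder dimension of this lifted linear class.

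With the lift in place, the three-step argument in the proof of Lemma~\ref{lem:linearSEM_dim} transfers essentially verbatim. For any ordered sequence $\{Z_i(m)\}_{m=1}^{n}$ satisfying $W_s\geq\epsilon'$ for all $s\in[n]$, define the regularized Gram matrix
\begin{equation*}
    V_s\triangleq \sum_{m=1}^{s-1}\phi(Z_i(m))\phi(Z_i(m))^{\top}+\lambda I\ ,
\end{equation*}
with $\lambda$ chosen proportional to $(\epsilon')^2$ divided by the squared parameter-norm bound. Repeating the Cauchy--Schwarz style computation in Step~1 of the linear proof shows that $W_t\geq\epsilon'$ forces $\phi(Z_i(t))^{\top}V_t^{-1}\phi(Z_i(t))\geq 1/2$. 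The matrix determinant lemma then yields $\operatorname{det}(V_n)\geq \lambda^{(d+1)^2}(3/2)^{n}$, while the arithmetic--geometric mean inequality combined with the feature-norm bound $\|\phi(x,a)\|^2\leq (\|x\|^2+a^2)^2$, which is bounded by Assumption~\ref{ass:bound} together with the hypothesis $|a_i|\leq 1$, gives $\operatorname{det}(V_n)\leq (\bar{c}\, n/(d+1)^2+\lambda)^{(d+1)^2}$ for some absolute constant $\bar c$ depending only on $C$.

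Combining the lower and upper bounds on $\operatorname{det}(V_n)$ and solving for $n$ via the auxiliary function $B(\cdot)$ introduced in Step~3 of the linear proof produces
\begin{equation*}
    n \leq \tilde{\mcO}\bigl((d+1)^2\log(1/\lambda)\bigr) = \tilde{\mcO}\bigl((d+1)^2\log(1/\epsilon)\bigr)\ ,
\end{equation*}
which, since $n$ was an arbitrary length of a valid $\epsilon'$-independent sequence, yields the claimed bound on $\operatorname{dim}(\mcF_i,\epsilon)$. The only step that requires genuine care (as opposed to copying the linear argument) is the feature-norm bound used in the arithmetic--geometric mean step: here one must exploit $|a_i|\leq 1$ and the uniform bound on $\|X_{\Pa(i)}\|$ from Assumption~\ref{ass:bound} to obtain a $\bar{c}$ that does not scale with $T$. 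Beyond this bookkeeping there is no conceptual obstacle.
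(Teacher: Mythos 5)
Your proposal is correct, and it rests on the same key idea as the paper: rewrite $f_i(X_{\Pa(i)}\f a_i)=(\langle\theta_i,[X_{\Pa(i)},a_i]\rangle)^2$ as a quadratic form $[X_{\Pa(i)},a_i]^{\top}\Theta\,[X_{\Pa(i)},a_i]$ with $\Theta=\theta_i\theta_i^{\top}$, observe that $\mcF_i$ embeds into the class of bounded quadratic (equivalently, lifted linear) functions in $(d+1)^2$ parameters, and use monotonicity of the eluder dimension under class inclusion. The only real difference is in how the final bound is obtained: the paper stops at the embedding and cites \citet{osband2014model} for the eluder dimension of quadratic function classes, whereas you re-derive that bound from scratch by transplanting the three-step elliptical-potential argument of Lemma~\ref{lem:linearSEM_dim} to the lifted feature space $\phi(x,a)=\operatorname{vec}([x,a][x,a]^{\top})$. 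Your version is more self-contained and makes explicit the two points the citation hides — that the feature norm $\|\phi(x,a)\|^2=(\|x\|^2+a^2)^2$ is bounded by a $T$-independent constant thanks to Assumption~\ref{ass:bound} and $|a_i|\leq 1$, and that the regularization $\lambda\propto(\epsilon')^2$ is what produces the $\log(1/\epsilon)$ factor — at the cost of repeating machinery the cited reference already supplies; both routes yield the same $\tilde{\mcO}((d+1)^2\log(1/\epsilon))$ bound.
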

By noting that the set of parameters captured by $\Theta$ has $(d+1)^2$ degrees of freedom, thus we have
\begin{lemma}
    For the class of polynomial SCMs defined in \eqref{eq:SCM_poly} with $p=2$ and bounded intervention space such that $|a_i|\leq 1$ for all $a_i\in\mcA$, the covering number satisfies 
    \begin{equation}
        \log \cn_{\alpha}(\mcF_i) = \tilde \mcO \big((d+1)^2 \log(1/\epsilon)\big) \ .
    \end{equation}
\end{lemma}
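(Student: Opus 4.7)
The plan is to reduce the problem to a standard covering bound for a Euclidean ball in parameter space, via a Lipschitz argument in the parameters. Recall that every $f_i\in\mcF_i$ is specified by $\theta_i\in\R^{d_i+1}$ with $\|\theta_i\|\leq R$, where $R\triangleq K_i^{1/2}/[\sqrt{2}(dC+1)]$; equivalently, writing $z\triangleq [X_{\Pa(i)}^\top,a_i]^\top$, we have $f_i(z)=z^\top\Theta z$ with $\Theta=\theta_i\theta_i^\top$, a matrix with at most $(d_i+1)^2$ entries. The first step is to establish that the map from parameters to functions is Lipschitz in the sup-norm on $\mcZ_i$.

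Concretely, for $\theta_i,\tilde\theta_i$ with norms bounded by $R$ and the associated functions $f_i,\tilde f_i$, I would use the identity $a^2-b^2=(a-b)(a+b)$ to write
\begin{equation*}
\bigl|f_i(z)-\tilde f_i(z)\bigr| \;=\; \bigl|\langle\theta_i-\tilde\theta_i,z\rangle\bigr|\cdot\bigl|\langle\theta_i+\tilde\theta_i,z\rangle\bigr| \;\leq\; 2R\,\|z\|^2\,\|\theta_i-\tilde\theta_i\|.
\end{equation*}
By Assumption~\ref{ass:bound} the coordinates of $X_{\Pa(i)}$ are bounded by $C$, and by hypothesis $|a_i|\leq 1$, so $\|z\|^2\leq dC^2+1$. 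Taking the supremum over $z$ gives $\|f_i-\tilde f_i\|_\infty\leq L\,\|\theta_i-\tilde\theta_i\|$ with $L\triangleq 2R(dC^2+1)$.

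Next, I would invoke the standard volumetric bound that the $\beta$-covering number of a Euclidean ball of radius $R$ in $\R^n$ is at most $(1+2R/\beta)^n$. Choosing $\beta=\alpha/L$ produces an $(\alpha)$-cover of $\mcF_i$ (after mapping the parameter cover back to functions, which stays inside $\mcF_i$ since the parameter ball is convex) with cardinality at most $\bigl(1+2RL/\alpha\bigr)^{d_i+1}$. Taking logarithms,
\begin{equation*}
\log\cn_\alpha(\mcF_i)\;\leq\;(d_i+1)\log\!\Bigl(1+\tfrac{4R^2(dC^2+1)}{\alpha}\Bigr),
\end{equation*}
and specializing to $\alpha=1/T$ yields the $\tilde\mcO\!\bigl((d+1)\log(1/\alpha)\bigr)$ rate; since $(d+1)\leq(d+1)^2$, this establishes the claimed bound. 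Alternatively, one can present the argument directly over the $\Theta$-parameterization by embedding $\mcF_i$ in the enveloping class of all symmetric $(d+1)\times(d+1)$ matrices of bounded Frobenius norm (using $|z^\top(\Theta-\tilde\Theta)z|\leq\|z\|^2\|\Theta-\tilde\Theta\|_F$), at the cost of the squared dimension $(d+1)^2$ in the exponent.

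The only subtlety is to ensure the covering functions lie in $\mcF_i$: the $\theta_i$-parameterization handles this automatically since a cover of the parameter ball projects to a cover inside $\mcF_i$. If one instead adopts the matrix-parameterization mentioned in the paper to match the stated $(d+1)^2$ rate, a small additional step is needed — either enlarging the class to all bounded symmetric matrices (simplest and gives the claimed rate as an upper bound) or projecting the matrix cover back to the rank-one manifold — which I expect to be the only mildly delicate point in an otherwise routine argument.
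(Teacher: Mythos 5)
Your argument is correct, and it actually proves something slightly stronger than the lemma claims. The paper's own justification is essentially a one-line assertion: it recasts $f_i(z)=(\langle\theta_i,z\rangle)^2$ as the quadratic form $z^\top\Theta z$ with $\Theta=\theta_i\theta_i^\top$, observes that $\Theta$ has $(d+1)^2$ degrees of freedom, and reads off $\log\cn_\alpha(\mcF_i)=\tilde\mcO((d+1)^2\log T)$ from the standard parametric covering bound (the same volumetric lemma it quotes from Russo--Van Roy in the linear case). You instead work directly in the native $(d_i+1)$-dimensional $\theta$-parameterization, establish sup-norm Lipschitzness of $\theta\mapsto f_\theta$ via the factorization $a^2-b^2=(a-b)(a+b)$, and push a $\beta$-net of the parameter ball through this map; this yields $\log\cn_\alpha(\mcF_i)\leq(d_i+1)\log(1+\mcO(R^2(dC^2+1)/\alpha))=\tilde\mcO((d+1)\log(1/\alpha))$, which implies the stated $(d+1)^2$ rate since the lemma is only an upper bound. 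Your route is the more careful one: it keeps the cover inside $\mcF_i$ (as Definition~\ref{def:covering} requires, and which the matrix-enlargement route would technically violate without the projection step you mention), and it exposes that the $(d+1)^2$ in the paper's covering-number statement is an artifact of the redundant matrix parameterization rather than intrinsic — the squared dimension is genuinely needed only for the eluder dimension, not the covering number. The one loose thread is your claim that Assumption~\ref{ass:bound} bounds the coordinates of $X_{\Pa(i)}$ by $C$: that assumption bounds $f_j$, so $X_j=f_j+\epsilon_j$ is bounded only after also invoking the bounded-noise consequence the paper notes for this class; the paper glosses over the same point (it silently introduces $c=\sup\|X_{\Pa(i)}\|^2$ in the linear case), so this is cosmetic rather than a gap.
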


Then with the choice of $\epsilon = \alpha_i=\frac{1}{T}$, we have
\begin{equation}
    \operatorname{dim} (\mcF_i) = \tilde{\mcO}\big((d+1)^2\log T\big) \ , \quad \mbox{and} \log \cn(\mcF_i) = \tilde \mcO \big((d+1)^2 \log T\big)\ .
\end{equation}

Then the regret bound in Theorem~\ref{thm:regret} becomes
\begin{align}
    \E[R(T)] = &\mcO\left(K d^{L-1}  \sqrt{T\; \operatorname{dim}(\mcF) \log \big( NT \cn(\mcF) \big) }\right) \\
    &=\mcO\left(K d^{L-1}  \sqrt{T\; (d+1)^2 \log T \log \big( NT \times T^{(d+1)^2} \big) }\right) \\
    & =\mcO(Kd^{L+1}\sqrt{T\log T\varpi_{\rm P}})\ ,
\end{align}
where
\begin{align}
\varpi_{\rm P}=\log T \left[1+\frac{\log N}{d\log T}\right] \ .    
\end{align}

\subsection{Proof of Theorem~\ref{th:SCM_nn} (Neural Network SCMs)}
Corresponding to the class of neural network SCMs specified by \eqref{eq:SCM_nn}, we augment the causal graph $\mcG$ to generate another causal graph $\tilde\mcG$ as follows. As shown in Figure~\ref{fig:NN}, for each node $i\in[N]$, We insert $s\geq d+1$ nodes between the set of parents of node $i$ and $i$ and add directed edges to all parents of $i$ to the $s$ intermediate nodes and from the $s$ intermediate nodes to node~$i$. Hence, graph $\tilde\mcG$ has $N+s|\{i: \Pa(i)\neq \emptyset\}|$ nodes, its maximum in-degree is~$s$, and its longest causal path has length $2L$. 
\begin{figure}
    \centering
    \includegraphics[height= 2.2in]{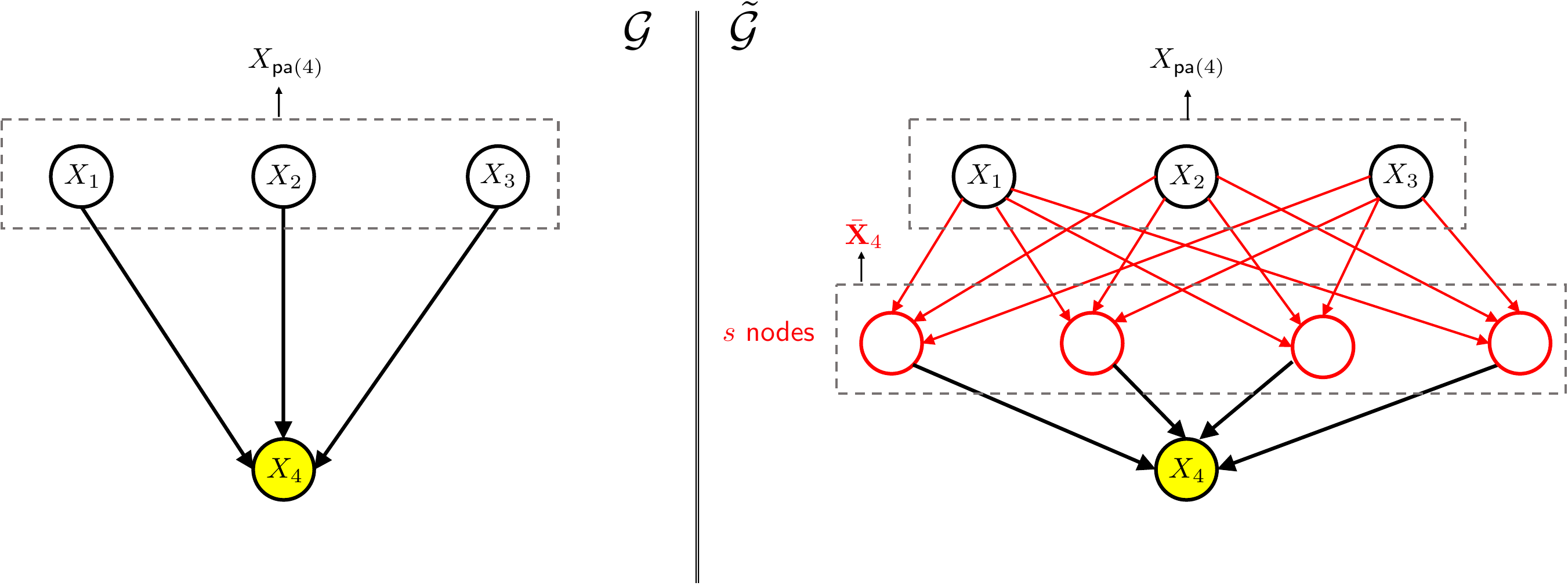}
    \caption{Construction of $\tilde\mcG$ based on $\mcG$ by embedding $s$ nodes between a node (node 4) and its parents (nodes 1,2,3).}
    \label{fig:NN}
\end{figure}

In the augmented graph $\tilde\mcG$, we denote the data of the $s$ intermediate nodes between $\Pa(i)$ and $i$ by $\bar\bX_i\in\R^s$ and for the data of node $i$, we keep the notation $X_i$. The SCM $\tilde\mcG$ follows a generalized linear model. Specifically consider node~$i$ with neural network parameters\footnote{We have different matrices $\bTheta_1$ and $\bTheta_2$ for different nodes, which we can specify by $\bTheta_{i,1}$ and $\bTheta_{i,2}$. However, we are suppressing the implicit dependence on $i$ for notational clarity.} $\bTheta_1$ and $\bTheta_2$. If node $i$ has an even causal depth, its SCM will be specified by $X_{i} = \sigma(\bTheta_{2}  \bar\bX_{i})$, which is a generalized linear model. On the other hand, if node $i$ has an odd causal depth, its SCM will be specified by $\bar \bX_{i}= \sigma(\bTheta_{1}  [a_i,X_{\Pa(i)}])$, which is also a generalized linear SCM. Note that the interventions will impact only the nodes with even causal depths. Their impact, subsequently, will be passed by the intermediate nodes to the nodes with even causal depths, which are the nodes in the original graph $\mcG$. Finally, for any node $i$, if it has an odd causal depth, we denote its Lipschitz constants by $K_{i,\rm o}$ and if it has an even causal depth, we denote its Lipschitz constants by $K_{i,\rm e}$. Accordingly, for even $i$ define
\begin{align}
    \bar K_i = s K_{i,\rm e} \cdot K_{i-1,\rm o}\ , \quad \forall i\in \{2\ell : \ell\in[N]\}\ ,
\end{align}
and
\begin{align}
\label{eq:lipshitznn}
   \bar K^{(k)} &= \max \{\bar K_i\;:\; L_i=2k \ , i \in \{2\ell : \ell\in[N]\}\}\  , \quad \forall k\in [L] \ ,
\end{align}
so that $\bar K^{(k)} \geq K^{(k)}$ is an upper bound of Lipschitz constant.

We can adjust the GCB algorithms to form estimates and confidence sets for all the parameters in the augmented graph $\tilde\mcG$. We denote the class of generalized SCMs that specify the cause-effect relationships in $\tilde\mcG$ by $\mcH$. Based on this, we provide a counterpart of Lemma~\ref{lm:bound_l_paths} for bounding the cumulative error at each node $i$ due to the compounding estimate uncertainties along the causal paths.

\begin{corollary}
[Compounding Error for Neural Networks]\label{lm:bound_l_paths_nn} Consider the neural network SCMs defined in \eqref{eq:SCM_nn}. If $f_i\in \mcC_{i,t}$ for all $i\in[N]$ and $t\in[T]$, then the following error bound holds 
\begin{align}
\label{equ:cumulative bound_nn}
    \sum_{t=1}^{T}&\Big|\E_{\ba(t)}[X_{i}\mid \bar \bef_t] - \E_{\ba(t)}[X_{i}\mid \bef]\Big| \leq \mcB(\mcH,\delta) \sum_{\ell=1}^{\frac{L_i}{2}}  (d+1)^{\ell-1} \prod_{k=2}^{\ell} \bar K^{(k)}\ .
\end{align}
\end{corollary}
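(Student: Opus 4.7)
The plan is to exploit the augmentation $\tilde\mcG$, which decomposes each two-layer neural-network SCM on $\mcG$ into a pair of generalized linear SCMs on $\tilde\mcG$, and to reduce the claim to Lemma~\ref{lm:bound_l_paths} applied on the augmented graph with the generalized-linear function class $\mcH$.

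First, I would verify the setup on $\tilde\mcG$. Each intermediate node computes $\bar\bX_i = \sigma(\bTheta_1 [a_i, X_{\Pa(i)}])$ and each original-graph node computes $X_i = \sigma(\bTheta_2 \bar\bX_i)$, so Assumptions~\ref{ass:noise}--\ref{ass:bound} transfer to $\mcH$ because $\sigma$ has bounded gradient. In $\tilde\mcG$, an intermediate node has in-degree $d+1$ (its $d$ parents plus the intervention $a_i$) and Lipschitz constant $K_{\cdot,\rm o}$; an original node has in-degree $s$ and Lipschitz constant $K_{\cdot,\rm e}$; and interventions enter only through the odd-depth intermediate nodes. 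The estimator and confidence-set machinery used in Lemma~\ref{lm:bound_l_paths} works verbatim on $\tilde\mcG$ because its SCMs are again of the form \eqref{eq:SCM}.

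Second, I would apply Lemma~\ref{lm:bound_l_paths} directly on $\tilde\mcG$. A node $i \in [N]$ sitting at causal depth $L_i^{\mcG}$ in $\mcG$ lies at depth $2L_i^{\mcG}$ in $\tilde\mcG$, so the lemma yields
\begin{align*}
\sum_{t=1}^T \left|\E_{\ba(t)}[X_i \mid \bar\bef_t] - \E_{\ba(t)}[X_i \mid \bef]\right| \;\leq\; \mcB(\mcH,\delta) \sum_{\ell=1}^{2L_i^{\mcG}} \tilde d^{\,\ell-1} \prod_{k=2}^{\ell} \tilde K^{(k)} \ ,
\end{align*}
where $\tilde d$ and $\tilde K^{(k)}$ denote respectively the maximum in-degree of $\tilde\mcG$ and the maximum Lipschitz constant at depth $k$ in $\tilde\mcG$.

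Third, I would collapse the $2L_i^{\mcG}$ summands into $L_i^{\mcG}$ consecutive pairs $(2j-1,2j)$ for $j=1,\dots,L_i^{\mcG}$. Over each pair, the alternating Lipschitz factors $K_{\cdot,\rm o} \cdot K_{\cdot,\rm e}$ combine with the original node's in-degree $s$ into exactly the effective constant $\bar K_i = s K_{i,\rm e} K_{i-1,\rm o}$ from \eqref{eq:lipshitznn}, while the intermediate node's in-degree $d+1$ remains as the sole geometric factor per pair. Summing over $j$ produces
\begin{align*}
\mcB(\mcH,\delta) \sum_{j=1}^{L_i^{\mcG}} (d+1)^{j-1} \prod_{k=2}^{j} \bar K^{(k)} \ ,
\end{align*}
which matches the stated bound once $L_i$ is read as the depth in $\tilde\mcG$ (so $L_i/2 = L_i^{\mcG}$). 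The main obstacle will be the bookkeeping in this third step: carefully aligning the alternating Lipschitz and in-degree factors across adjacent layers of $\tilde\mcG$ and confirming that the $s$-factors from the original-node in-degrees can be absorbed cleanly into the $\bar K^{(j)}$ constants, so that only a clean $(d+1)^{j-1}$ prefactor --- rather than a mixed power of $s$ and $d+1$ --- survives in the final geometric sum.
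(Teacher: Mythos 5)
Your overall strategy --- pass to the augmented graph $\tilde\mcG$, view each neural-network SCM as two stacked generalized-linear SCMs in the class $\mcH$, and run the compounding-error machinery there --- is the same as the paper's. The paper, however, does not invoke Lemma~\ref{lm:bound_l_paths} as a black box on $\tilde\mcG$: it re-runs the induction with each step split into two stages (parents $\to$ intermediate layer, then intermediate layer $\to$ node), so that the stage with in-degree $d+1$ and the stage with in-degree $s$ each contribute their own factor, $(d+1)K_{j,{\rm o}}$ and $sK_{i,{\rm e}}$ respectively; the $s$ is then packaged into $\bar K^{(k)}$ and a single $(d+1)$ is left over per pair of layers.

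The gap is in your second step. Lemma~\ref{lm:bound_l_paths} is stated in terms of the single \emph{global} maximum in-degree of the graph, and for $\tilde\mcG$ (with $s\geq d+1$) that is $\tilde d=s$. A verbatim application therefore yields $\mcB(\mcH,\delta)\sum_{\ell=1}^{2L_i}s^{\ell-1}\prod_{k=2}^{\ell}\tilde K^{(k)}$, whose leading term carries $s^{2L_i-1}$, whereas the target bound's leading term carries $(d+1)^{L_i-1}\prod_{k=2}^{L_i}\bar K^{(k)}$, i.e.\ only $(d+1)^{L_i-1}s^{L_i-1}$ in degree-type factors. When $s>d+1$ the black-box bound is weaker by a factor of order $s\,(s/(d+1))^{L_i-1}$, and no pairwise regrouping of the summands can recover the loss: once the lemma has replaced every layer's fan-in by $s$, the information that the odd-depth layers only fan out to $d+1$ inputs is gone. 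What your third step actually describes (alternating $(d+1)$ and $s$ factors per pair, with the $s$ absorbed into $\bar K$) is the output of a degree-aware induction, not of the black-box lemma, so your steps two and three are mutually inconsistent. The repair is exactly the paper's route: redo the induction of Lemma~\ref{lm:bound_l_paths} on $\tilde\mcG$ keeping the per-layer in-degrees distinct; with that replacement your bookkeeping in step three is correct and the rest of the proposal goes through.
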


\begin{proof}
This proof follows the flow of the one in Lemma~\ref{lm:bound_l_paths}, with the key difference being the addition of an intermediate layer between nodes $i$ and their parents. Consequently, each induction step is now comprised of two stages: firstly, upper bounding the error from the intermediate nodes $\bar\bX_i$ to $X_i$, and secondly, upper bounding the error from $X_{\Pa(i)}$ to the intermediate nodes. The second stage aligns with the procedure in the proof of Lemma~\ref{lm:bound_l_paths}, contributing to $(d+1) K_{j,{\rm o}}$ for node $X_j\in \bar\bX_i$. As for the contribution from $\bar\bX_i$ to $X_i$, it amounts to $s K_{i,{\rm e}}$. This results in the error bound presented in \eqref{equ:cumulative bound_nn}, with the Lipschitz constants $K^{(k)}$ defined in \eqref{eq:lipshitznn}. It is noteworthy that in the special case where $s=d+1$, our results reduce to the one in Lemma~\ref{lm:bound_l_paths} with a maximum causal depth of $2L$.
\end{proof}

Then plug \eqref{equ:cumulative bound_nn} into \eqref{eq:upperbound_plugin} in the proof of Theorem~\ref{thm:regret}, the upper bound in Theorem~\ref{thm:regret} becomes 
\begin{align}
    \E[R(T)]  &\leq 4C_N+\mcB(\mcH,\delta) \sum_{\ell=1}^{L}  (d+1)^{\ell-1}\prod_{k=2}^{\ell} \bar K^{(k)}\\
    & = \mcO\Big(\bar Kd^{L-1}  \sqrt{T\; \operatorname{dim}(\mcH) \log\left(NT\cn(\mcH)\right)}\Big)\label{eq:boundregretnn_2}\ .
\end{align}
Then, the eluder dimension and covering number of the class of functions $\mcH$ upper bounded by the following lemma.
\begin{lemma}~\citep{russo2013eluder}\label{lem:eludernn}
     For the generalized linear SCM class of functions $\mcH$ with degree $d$, we have
     \begin{align}
    \operatorname{dim}(\mcH) & = \tilde \mcO \big( r d \log T\big)\ , \qquad \mbox{and} \qquad   \log \cn(\mcH)  = \tilde \mcO (r d \log T)\ ,
    \end{align}
    where $r$ is defined as the ratio of maximum gradient over minimum derivative of the activation function\footnote{This can be generalized to the maximum and minimum derivative within the input space.}
    \begin{equation}
        r = \frac{\sup \nabla  \sigma(\cdot)}{ \inf  \nabla \sigma(\cdot)}\ .
    \end{equation}
\end{lemma}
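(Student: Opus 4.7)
The plan is to reuse the architecture of the linear SCM proof (Lemmas~\ref{lem:linearSEM_dim} and \ref{lem:linearSEM_cn}), replacing the linear link with the activation $\sigma$ via the mean value theorem. Fix node $i$ and consider two candidates $f(z)=\sigma(\theta^\top z)$ and $\tilde f(z)=\sigma(\tilde\theta^\top z)$ with $\Delta \triangleq \theta-\tilde\theta$. The mean value theorem yields the two-sided bound
\begin{equation*}
\inf\nabla\sigma \cdot |\Delta^\top z| \;\leq\; |f(z)-\tilde f(z)| \;\leq\; \sup\nabla\sigma \cdot |\Delta^\top z|.
\end{equation*}
Hence the $\epsilon'$-independence constraint $\sum_{m=1}^{s-1}(f(z_m)-\tilde f(z_m))^2 \leq (\epsilon')^2$ implies $\Delta^\top \Phi_s \Delta \leq (\epsilon'/\inf\nabla\sigma)^2$, where $\Phi_s$ is the Gram matrix of the previous inputs, while $W_s$ is controlled from above by $\sup\nabla\sigma \cdot |\Delta^\top z_s|$.

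For the eluder dimension, I would proceed in the three-step pattern of Lemma~\ref{lem:linearSEM_dim}. Step one: introduce the regularized matrix $V_s=\Phi_s+\lambda I$ with $\lambda$ tuned so that the norm bound $\|\Delta\|\leq 2K$ contributes on the same scale as the Gram constraint; this yields $W_s \leq \sqrt{2}\epsilon' r \|z_s\|_{V_s^{-1}}$, so that $W_s\geq\epsilon'$ forces $\|z_s\|_{V_s^{-1}}^2 \geq \Omega(1/r^2)$. Step two: the matrix determinant lemma gives the multiplicative growth $\det(V_s)\geq \lambda^d(1+\Omega(1/r^2))^{N(s)}$ after $N(s)$ rounds on which $W\geq\epsilon'$, while the arithmetic-geometric inequality gives $\det(V_s)\leq (cN(s)/d+\lambda)^d$. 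Step three: matching these bounds and inverting, using $\log(1+1/r^2)\approx 1/r^2$, yields $N(s) = \tilde{\mcO}(rd\log(1/\epsilon))$, and taking $\epsilon=1/T$ produces the claim. The precise power of $r$ (linear vs.\ quadratic) is tracked in \citet{russo2013eluder}, so the sharp constant can be appealed to there.

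For the covering number, I would cover the parameter ball $\{\theta:\|\theta\|\leq K\}$ at resolution $\alpha'=\alpha/\sup\nabla\sigma$; the Lipschitz property of $\sigma$ then transfers this to an $\alpha$-cover of $\mcH$. The standard volumetric bound (as in Lemma~\ref{lem:coveringnumber}) gives at most $(1+2K c \sup\nabla\sigma/\alpha)^d$ parameter centers, and with $\alpha=1/T$ the logarithm becomes $\mcO(d\log T)$. The factor $r$ appearing in the statement enters only through constants on the generalization of $\sup\nabla\sigma / \inf\nabla\sigma$ that can arise when the link is normalized to match the eluder-dimension scaling.

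The main obstacle is bookkeeping the exact power of $r$ in the eluder dimension bound: both $\inf\nabla\sigma$ (entering the lower bound on the Gram constraint) and $\sup\nabla\sigma$ (entering the upper bound on $W_s$) must be tracked consistently with the choice of regularization $\lambda$, and the naive calculation above yields $r^2d\log T$ rather than $rd\log T$. Closing this factor-of-$r$ gap requires the more delicate potential function analysis of \citet{russo2013eluder}, which we invoke as a black box; otherwise the proof is a direct structural reuse of the linear case.
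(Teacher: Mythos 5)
The paper offers no proof of this lemma at all: it is stated purely as a citation to \citet{russo2013eluder}, so your reconstruction is strictly more content than what the paper provides. Your route --- sandwiching $|f(z)-\tilde f(z)|$ between $\inf\nabla\sigma\cdot|\Delta^\top z|$ and $\sup\nabla\sigma\cdot|\Delta^\top z|$ via the mean value theorem, then rerunning the three-step Gram-matrix/determinant argument of Lemma~\ref{lem:linearSEM_dim} and the volumetric parameter cover of Lemma~\ref{lem:coveringnumber} --- is exactly the right way to reduce the generalized linear case to the linear one, and is in fact the argument underlying the cited result. The covering-number half of your sketch is complete as written: Lipschitzness of $\sigma$ transfers a parameter-space cover to a function-space cover with only a $\log(\sup\nabla\sigma)$ correction, absorbed by $\tilde\mcO$.

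The one substantive issue is the power of $r$, and you have diagnosed it honestly and correctly: the determinant argument forces $\|z_s\|_{V_s^{-1}}^2 \geq \Omega(1/r^2)$, so the per-round determinant growth is $1+\Omega(1/r^2)$ and the inversion yields $r^2 d\log(1/\epsilon)$, not $r d\log(1/\epsilon)$. This is not a defect of your proof --- the generalized-linear eluder-dimension bound in \citet{russo2013eluder} itself carries an $r^2$ factor, so the lemma as stated in the paper (with $r$ to the first power) is, if anything, optimistic relative to what the citation delivers. Since the factor $r$ (or $r^2$) is treated as a constant in the final regret bound of Theorem~\ref{th:SCM_nn} (it appears multiplicatively in \eqref{eq:boundregretnn_3} and is then suppressed), the discrepancy is harmless downstream, but you are right that closing it to first order in $r$ would require more than the determinant potential argument. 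Invoking the cited proposition as a black box for the sharp constant, as you do, is the same move the paper makes implicitly.
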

Next, since the input dimension is at most $d+1$ for hidden layers and $s$ for the outputs $X_i$ for $i\in[N]$. Applying Lemma~\ref{lem:eludernn} in the result in \eqref{eq:boundregretnn_2}, we conclude the regret is upper bounded by
\begin{align}
    \E[R(T)] &= \mcO\Big( r \bar Kd^{L-1} \sqrt{T\; \operatorname{dim}(\mcH) \log\left(NT\cn(\mcH)\right)}\Big)\label{eq:boundregretnn_3}\\
    &=\mcO\Big(r \bar Kd^{L-1}\sqrt{T\log T\; \varpi_{\rm N}}\Big)\ ,
\end{align}
where 
\begin{equation}
    \varpi_{\rm N} = \log T \max{d,s} \left[1+\frac{\log N}{d\log T} \right] \ .
\end{equation}

\section{Regret Lower Bounds}
\label{sec:prooflower}

\begin{figure}[htb]
        \centering
        \includegraphics[height=3 in]{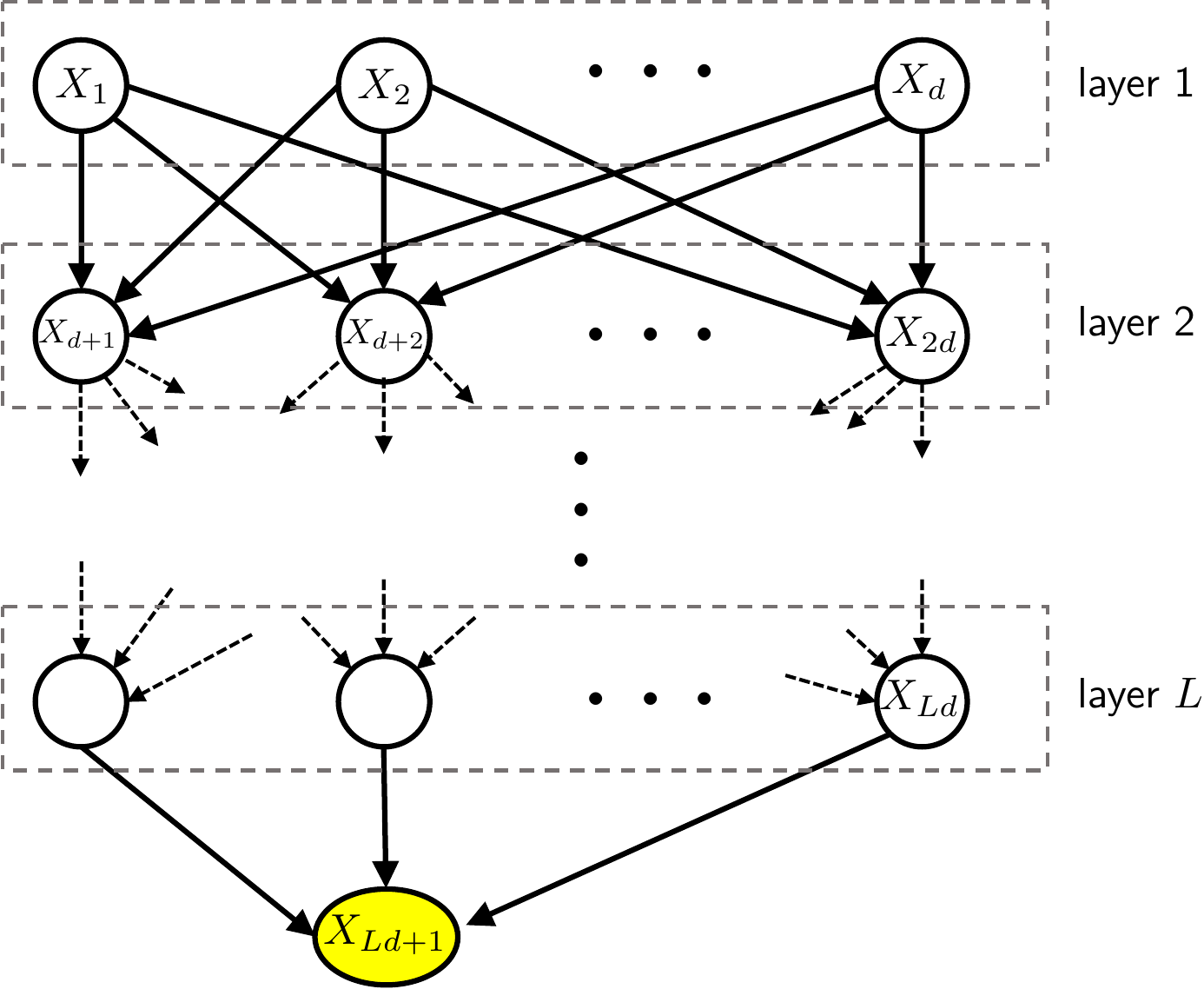}
        \caption{Hierarchical graph with degree $d$ and maximum causal path length $L$.}
        \label{fig:LB}
\end{figure}

{\bf Proof Sketch:} We find lower bounds for the term in~\eqref{eq:minimax} by lower bounding the inner supremum terms $\sup_{\mcG\in\bar\mcG} \E[R(T)]$, which captures the model with the worst-case regret. We can do this by shrinking the graph set $\bar\mcG$ to a set of only two possible graph instances $\{\mcG_1,\mcG_2\}$, for which the regret will not exceed that of the worst case regret over all possible choices in $\bar\mcG$. For this purpose, we choose a hierarchical graph shown in~Figure~\ref{fig:LB}. These two graphs consist of $L+1$ layers. Each of the first $L$ layers has $d$ nodes and the nodes within a layer are not connected. The nodes between two adjacent layers are fully connected. Finally, the last layer consists of one node (reward node) that is fully connected to the nodes in layer $L$. We use the bounded noise to ensure the boundness of the causal system. Subsequently, we provide an information-theoretic analysis that captures the hardness of distinguishing the two models, and facilitates finding a universal minimax lower bound for each class for each special family of models. Throughout the analysis, we will be using the following lemma \citep[Theorem 14.2]{lattimore2020bandit}.

\begin{lemma}[Bretagnolle-Huber Inequality]
\label{th:pinsker}
Let $\P_1$ and $\P_2$ be two probability measures on the same measurable space $(\Omega,\mcF)$ and let $A \in \mcF$ be an arbitrary event. Then,
\begin{align}
    \P_1(A) + \P_2(A^{\C}) \geq \frac{1}{2}\exp(-{\rm D}_{\rm KL}(\P_1 \kl \P_2)) \ ,
\end{align}
where ${\rm D}_{\rm KL}$ denotes the Kullback–Leibler divergence. 
\end{lemma}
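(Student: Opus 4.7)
The plan is to pass to a common dominating measure, recognize $\P_1(A)+\P_2(A^{\C})$ as at least an integral of a pointwise minimum of densities, and then control that integral by KL via Cauchy--Schwarz plus Jensen's inequality. Concretely, let $\mu$ be any $\sigma$-finite measure dominating both $\P_1$ and $\P_2$ (e.g.\ $\mu=\P_1+\P_2$), and write $f=\mathrm{d}\P_1/\mathrm{d}\mu$, $g=\mathrm{d}\P_2/\mathrm{d}\mu$. I can assume $\P_1 \ll \P_2$, since otherwise ${\rm D}_{\rm KL}(\P_1\kl\P_2)=\infty$ and the bound is vacuous. The first step is the elementary observation that
\begin{equation}
    \P_1(A)+\P_2(A^{\C}) = \int_A f\,\mathrm{d}\mu + \int_{A^{\C}} g\,\mathrm{d}\mu \;\geq\; \int \min(f,g)\,\mathrm{d}\mu,
\end{equation}
which holds for every $A$ (indeed, equality holds for $A=\{g\geq f\}$). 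It therefore suffices to prove $\int\min(f,g)\,\mathrm{d}\mu \geq \tfrac{1}{2}\exp(-{\rm D}_{\rm KL}(\P_1\kl\P_2))$.

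Next I would apply the Cauchy--Schwarz inequality to the pair $\sqrt{\min(f,g)}$ and $\sqrt{\max(f,g)}$. Using the pointwise identity $\min(f,g)\cdot\max(f,g)=fg$, this yields
\begin{equation}
    \Bigl(\int \sqrt{fg}\,\mathrm{d}\mu\Bigr)^{\!2} \;\leq\; \Bigl(\int \min(f,g)\,\mathrm{d}\mu\Bigr)\Bigl(\int \max(f,g)\,\mathrm{d}\mu\Bigr),
\end{equation}
and since $\min(f,g)+\max(f,g)=f+g$, the second factor equals $2-\int\min(f,g)\,\mathrm{d}\mu$. For the left-hand side, I would rewrite $\int\sqrt{fg}\,\mathrm{d}\mu = \E_{\P_1}\bigl[\sqrt{g/f}\bigr]$ and apply Jensen's inequality to the concave function $\log$ to obtain
\begin{equation}
    \log\int\sqrt{fg}\,\mathrm{d}\mu \;\geq\; \E_{\P_1}\bigl[\tfrac{1}{2}\log(g/f)\bigr] \;=\; -\tfrac{1}{2}\,{\rm D}_{\rm KL}(\P_1\kl\P_2),
\end{equation}
so that $\int\sqrt{fg}\,\mathrm{d}\mu \geq \exp(-\tfrac{1}{2}{\rm D}_{\rm KL}(\P_1\kl\P_2))$.

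Combining the two displays and writing $a\triangleq\int\min(f,g)\,\mathrm{d}\mu\in[0,1]$, I get $a(2-a)\geq \exp(-{\rm D}_{\rm KL}(\P_1\kl\P_2))$. The trivial bound $a\geq \tfrac{1}{2}a(2-a)$ (equivalent to $a^2\geq 0$) then gives the claim. There is no substantive obstacle: the only points requiring care are the measurability of $\min(f,g)$ and $\max(f,g)$ (immediate) and the degenerate case ${\rm D}_{\rm KL}=\infty$ (handled at the outset). The conceptual content is entirely in turning the Cauchy--Schwarz/Jensen chain into the quadratic inequality $a(2-a)\geq e^{-{\rm D}_{\rm KL}}$ and then slackening it to a clean linear lower bound on $a$.
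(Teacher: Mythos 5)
Your proof is correct, and since the paper imports this lemma from \citet[Theorem 14.2]{lattimore2020bandit} without proving it, there is no in-paper argument to diverge from; your chain $\P_1(A)+\P_2(A^{\C})\geq\int\min(f,g)\,\mathrm{d}\mu$, then Cauchy--Schwarz on $\sqrt{\min}\cdot\sqrt{\max}$ plus Jensen to lower-bound the Bhattacharyya coefficient by $e^{-{\rm D}_{\rm KL}/2}$, is exactly the standard proof in that reference. The only cosmetic difference is that you retain the factor $2-a$ and then slacken via $a\geq\tfrac{1}{2}a(2-a)$, whereas the usual presentation bounds $\int\max(f,g)\,\mathrm{d}\mu\leq 2$ immediately; both are valid.
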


\subsection{Proof of Theorem~\ref{thm:lowerlinear} (Linear SCMs)}
\label{sec:linear}
We consider two linear SCM causal bandit instances $\mcG_1$ and $\mcG_2$ that share the same hierarchical graph specified earlier (Figure~\ref{fig:LB}) with degree $d$ and maximum causal path length $L$. 
Subsequently, our minimax lower bound of interest is lower bounded by 
\begin{equation}
   \inf_{\pi\in\Pi} \sup_{\mcF\in\bar\mcF}\sup_{\mcG\in\bar\mcG} \E[R(T)] \geq \inf_{\pi\in\Pi} \sup_{\mcG\in\{\mcG_1,\mcG_2\}} \E[R(T) ]\ .
\end{equation}
We assume the two bandit instances share same weights $\{\theta_i: i\in[N]\}$ and $\{\bar \theta_i: i\in[N]\}$, which are set to
\begin{align}
    \theta_i  = \bigg[\underbrace{\frac{K^{(L_i)}}{\sqrt{d_i}}, \cdots, \frac{K^{(L_i)}}{\sqrt{d_i}}}_{d_i \text{ times}} \bigg]\ , \quad \mbox{and} \quad
    \bar \theta_i = \bigg[ \underbrace{\frac{K^{(L_i)}}{\sqrt{d_i}} - \delta, \cdots, \frac{K^{(L_i)}}{\sqrt{d_i}} - \delta}_{d_i \text{ times}} \bigg]\ ,
\end{align}
where $\delta>0$ is a small value to be chosen later. The intervention sets for two bandit instances $\mcG_1$ and $\mcG_2$ are set to binary, i.e. $\mcA_i=\{0,1\}$.
The difference between the two bandit instances is the noise distribution in the system. For the noise for the node $1$, in first bandit instance $\mcG_1$, it is defined as
\begin{equation}
    \epsilon_1 = \left\{\begin{aligned}
        &\operatorname{Bern}(1/2+\delta) &\text{ if } a_1=0\\
        &\operatorname{Bern}(1/2) &\text{ if } a_1=1
    \end{aligned}\right. \ ,
\end{equation}
where $\operatorname{Bern}(p)$ denotes the Bernoulli random distribution with probability $p$. The noise is reversed in the second bandit instance $\mcG_2$,  which is
\begin{equation}
    \epsilon_1 = \left\{\begin{aligned}
        &\operatorname{Bern}(1/2) &\text{ if } a_1=0\\
        &\operatorname{Bern}(1/2+\delta) &\text{ if } a_1=1
    \end{aligned}\right. \ .
\end{equation}
For the noise terms associated with the rest of the random variables, we assume that they follow the the following Bernoulli model.
\begin{equation}
    \epsilon_i = \left\{\begin{aligned}
        &-1 &\text{with probability } \frac{1}{2}\\
        &+1 &\text{with probability } \frac{1}{2}\\
    \end{aligned}\right. \ , \qquad \text{ for } i\neq 1 \ .
\end{equation}
Since the weights are non-negative and the noise terms are zero-mean for $i\neq 1$, the optimal action is the one that maximizes each weight and the average value of the node $1$. As a result, the optimal action at node $1$ is the action that maximizes the mean of noise, while the optimal actions at other nodes are the actions that maximize the weights, i.e., $a_i=0$ for $i\neq 1$. Hence, the optimal intervention for bandit instances $\mcG_1$ and $\mcG_2$ are 
\begin{align}
    \ba^*_{\mcG_1} = (0,0,\cdots, 0) \ , \quad \mbox{and} \quad
    \ba^*_{\mcG_2} = (1,0,\cdots, 0) \ .
\end{align}
Next, consider a fixed bandit policy $\pi$ that generates the filtration  $\mcH_t$ over time. The decision of $\pi$ at time $t$ is $\mcH_{t-1}$-measurable. Accordingly, define the $\P_{\mcG_1}$ and $\P_{\mcG_2}$ as the probability measures induced by $\mcH_T$ by $T$ rounds of interactions of $\pi$ with $\mcG_1$ and $\mcG_2$, respectively. We show that for any algorithm $\pi$, it cannot get small regret in both causal bandit instances simultaneously. Since the difference between two bandit instances is on the first node, we are interested in the event in which $a_1=1$ is chosen at least $\frac{T}{2}$ times after $T$ rounds of interactions. We denote this event by
\begin{equation}
    \mcE_{\pi} \triangleq \left\{\sum_{i=1}^{T}\mathds{1}\{a_1(t)=1\}\geq \frac{T}{2}\right\} \ .
\end{equation}
We observe that algorithm $\pi$ will incur a regret when choosing intervention with $a_1=1$ on bandit instance~$\mcG_1$. Furthermore, since the weights in the causal system are positive, we have $\mu_{\ba^*_{\mcG_2}}\geq \mu_{\ba}$ for all intervention $\ba$ with $a_1=1$. Therefore, for $\mcG_1$, the expected value of instantaneous regret is at least $\mu_{\ba^*_{\mcG_1}} - \mu_{\ba^*_{\mcG_2}}$ if $a_i(t)=1$. Besides, since $\epsilon_i$ has a zero mean for $i \geq 2$, only the paths that start at node $1$ and end at the reward node $N$ contribute to the expected regret. Thus, for the gap between $\mu_{\ba^*_{\mcG_1}}$ and $\mu_{\ba^*_{\mcG_2}}$, each path contributes $\prod_{\ell=1}^{L} K^{(\ell)} d^{-\frac{L}{2}} \delta$ and we have $d^{L-1}$ such paths. Therefore, we have
\begin{equation}
\label{eq:diff_mu}
    \mu_{\ba^*_{\mcG_1}} - \mu_{\ba^*_{\mcG_2}} =  d^{\frac{L}{2}-1}   \delta \prod_{\ell=1}^{L} K^{(\ell)}  \ .
\end{equation}
Consequently, we can lower bound the regret associated with $\mcG_1$ by conditioning on the event $\mcE_{\pi}$ being true, i.e., 
\begin{align}
    \E_{\mcG_1}[R(T)] &= \E_{\mcG_1}\left[\sum_{t=1}^{T} \left(\mu^* - \mu_{\ba(t)}\right)\right] \\ 
    &\geq   \E_{\mcG_1}\left[\sum_{t \in [T]: a_1(t)=1} \left(\mu_{\ba^*_{\mcG_1}} - \mu_{\ba(t)}\right)\right] \label{eqlowerbound_mid1}\\
    &\geq \E_{\mcG_1}\left[\sum_{t \in [T]: a_1(t)=1} \mathds{1}\{\mcE_{\pi}\}\left(\mu_{\ba^*_{\mcG_1}} - \mu_{\ba^*_{\mcG_2}}\right)\right] + \E_{\mcG_1}\left[\sum_{t \in [T]: a_1(t)=1} \mathds{1}\{\mcE_{\pi}^\C\}\left(\mu_{\ba^*_{\mcG_1}} - \mu_{\ba^*_{\mcG_2}}\right)\right]\label{eqlowerbound_mid1_1}\\
    & \geq  \P_{\mcG_1}(\mcE_{\pi})\frac{T}{2} \left(\mu_{\ba^*_{\mcG_1}} - \mu_{\ba^*_{\mcG_2}}\right) 
    \label{eqlowerbound_mid2}    \\ 
    &=  \P_{\mcG_1}(\mcE_{\pi}) \frac{T}{2}  d^{\frac{L}{2}-1}   \delta \prod_{\ell=1}^{L} K^{(\ell)} \label{eq:lower_bound_regret_1}\ ,
\end{align}
where \eqref{eqlowerbound_mid1} lower bound the regret by considering the time instance that regret incurred when $a_1(t)=1$; \eqref{eqlowerbound_mid1_1} holds due to $\mu_{\ba^*_{\mcG_2}}\geq \mu_{\ba}$ for all intervention $\ba$ with $a_1=1$ and we split the sum based on whether $\mcE_\pi$ holds; and \eqref{eqlowerbound_mid2} uses the definition of $\mcE_{\pi}$, lower bounding for the occurrence time of regret by $\frac{T}{2}$ for $\mcE_{\pi}$, and by $0$ for the $\mcE_{\pi}^{\C}$. The last equality in \eqref{eq:lower_bound_regret_1} holds due to equation \eqref{eq:diff_mu}.

Similarly, for bandit instance $\mcG_2$, we have $\mu_{\ba^*_{\mcG_1}} \geq \mu_{\ba}$ for all intervention $\ba$ that $a_i =0$. Therefore, for $\mcG_2$, the expected value of instantaneous regret is at least $\mu_{\ba^*_{\mcG_2}} - \mu_{\ba^*_{\mcG_1}}$ if $a_i(t)=0$. Hence, we can similarly show that 
\begin{align}
    \E_{\mcG_2}[R(T)] &\geq   \P_{\mcG_2}(\mcE^{\rm c}_{\pi}) \; \frac{T}{2}   d^{\frac{L}{2}-1} \delta \prod_{\ell=1}^{L} K^{(\ell)}  \ . \label{eq:lower_bound_regret_2}
\end{align}
Combining~\eqref{eq:lower_bound_regret_1} and~\eqref{eq:lower_bound_regret_2}, we have
\begin{equation}
    \E_{\mcG_1}[R(T)] + \E_{\mcG_2}[R(T)] \geq \left(\P_{\mcG_1}(\mcE_{\pi}) + \P_{\mcG_2}(\mcE_{\pi}^{\rm c})\right) \frac{T}{2}  d^{\frac{L}{2}-1} \delta \prod_{\ell=1}^{L} K^{(\ell)}\ ,
\end{equation}
Next, we leverage Lemma~\ref{th:pinsker} and find a lower bound on the term $\P_{\mcG_1}(\mcE_{\pi})+ \P_{\mcG_2}(\mcE_{\pi}^{\C})$, resulting in
\begin{align}
    \E_{\mcG_1}[R(T)] + \E_{\mcG_1}[R(T)] \geq \frac{T}{4}  d^{\frac{L}{2}-1} \delta \exp(-{\rm D}_{\rm KL}(\P_{\mcG_1} \kl  \P_{\mcG_2})) \prod_{\ell=1}^{L} K^{(\ell)} \ . \label{eq:lower_bound_regret_mid3}
\end{align}
It remains to compute the KL-divergence between two bandit instances $\mcG_1$ and $\mcG_2$, which is formalized by the following lemma.
\begin{lemma}
\label{lem:kllinear}
    For the causal bandit instance $\mcG_1$ and $\mcG_2$, the KL-divergence between $\P_{\mcG_1}$ and $\P_{\mcG_2}$ is upper bounded by
    \begin{equation}\label{eq:DL_bound}
        {\rm D}_{\rm KL}(\P_{\mcG_1} \kl \P_{\mcG_2}) \leq T \log\left(\frac{1}{(1+2\delta)(1-2\delta)}\right)\ .
    \end{equation}
\end{lemma}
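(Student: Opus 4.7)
The plan is to apply the standard divergence decomposition for the KL between the joint laws of $\mcH_T$ and then exploit the fact that $\mcG_1$ and $\mcG_2$ differ only in the conditional distribution of $X_1$. Writing the joint density under each instance by factoring along the filtration $\{\mcH_t\}$ and using that the policy $\pi$ is the same under both measures (so the conditional law of $\ba(t)$ given $\mcH_{t-1}$ is identical), the chain rule yields
\begin{equation*}
    {\rm D}_{\rm KL}(\P_{\mcG_1}\kl \P_{\mcG_2}) \;=\; \sum_{t=1}^{T}\E_{\mcG_1}\Big[{\rm D}_{\rm KL}\big(\P_{\mcG_1}(\bX(t)\mid \mcH_{t-1},\ba(t))\kl \P_{\mcG_2}(\bX(t)\mid \mcH_{t-1},\ba(t))\big)\Big].
\end{equation*}

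Next, I would argue that each per-round KL collapses to a single Bernoulli KL. Since the weights $\{\theta_i,\bar\theta_i\}$ and the noise distributions at every node $i\neq 1$ coincide between $\mcG_1$ and $\mcG_2$, the conditional law of $(X_2(t),\dots,X_N(t))$ given $(X_1(t),\ba(t),\mcH_{t-1})$ is identical in both models. Hence, by a further application of the chain rule within a single round, the per-round KL reduces to the KL between the laws of $X_1(t)$ given $a_1(t)$. Since node $1$ is a root, $X_1(t)=\epsilon_1(t)$ is Bernoulli whose parameter flips between the two instances according to whether $a_1(t)=0$ or $a_1(t)=1$.

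It then remains to upper bound, uniformly in $a_1(t)$, the two Bernoulli KLs $\mathrm{D}_{\rm KL}(\operatorname{Bern}(1/2+\delta)\kl \operatorname{Bern}(1/2))$ and $\mathrm{D}_{\rm KL}(\operatorname{Bern}(1/2)\kl \operatorname{Bern}(1/2+\delta))$ by $\log\!\bigl(1/((1+2\delta)(1-2\delta))\bigr)$. The latter direction gives exactly $(1/2)\log(1/(1-4\delta^2))$ by direct computation, which is below the claimed bound. For the former, a short calculation shows that the sum of both KLs equals $\delta\log\!\bigl((1+2\delta)/(1-2\delta)\bigr)$, from which the required one-sided bound is obtained by rearranging and using $(1+\delta)\log(1+2\delta)+(1-\delta)\log(1-2\delta)\le 0$ on the relevant range of $\delta$. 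Summing the per-round bound over $t=1,\dots,T$ yields \eqref{eq:DL_bound}.

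The main (mild) obstacle is the final Bernoulli KL inequality in the $\operatorname{Bern}(1/2+\delta)\kl \operatorname{Bern}(1/2)$ direction; the probabilistic reduction above is routine once one carefully writes out the factorization and invokes the identical conditional law of $(X_2,\dots,X_N)$ given $X_1$ and $\ba(t)$ under both measures.
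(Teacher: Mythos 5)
Your proposal is correct and follows essentially the same route as the paper: decompose the KL along the interaction (reducing to node $1$, the only mechanism that differs), express the per-round contribution as one of the two Bernoulli KLs depending on $a_1(t)$, and close with the elementary inequality $(1+\delta)\log(1+2\delta)+(1-\delta)\log(1-2\delta)\le 0$, which is exactly the inequality the paper invokes. The only cosmetic difference is that you bound each per-round term uniformly by $\log\bigl(1/((1+2\delta)(1-2\delta))\bigr)$ rather than tracking the counts of $a_1(t)=0$ versus $a_1(t)=1$ (which gives the paper its slightly sharper $T/2$ prefactor), but this does not affect the stated bound.
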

\begin{proof}
    The two bandit instances differ only in the mechanism of node $1$, which is a root node. Hence, ${\rm D}_{\rm KL}(\P_{\mcG_1} \kl \P_{\mcG_1})$ can be simplified as
\begin{align}
    {\rm D}_{\rm KL}(\P_{\mcG_1} \kl \P_{\mcG_2}) = \sum_{i=1}^{N}  {\rm D}_{\rm KL}(\P_{\mcG_1}(X_i \med X_{\Pa(i)}) \kl  \P_{\mcG_2}(X_i \med X_{\Pa(i)})) = {\rm D}_{\rm KL}(\P_{\mcG_1}(X_1) \kl \P_{\mcG_2}(X_1)) \ ,
\end{align}
where the second equality holds because the noise terms for all nodes $i\geq 2$ are identical in $\mcG_1$ and $\mcG_2$. Next, we calculate ${\rm D}_{\rm KL}(\P_{\mcG_1}(X_1) \kl \P_{\mcG_1}(X_1))$ under two cases: when $a_1=0$ and $a_1=1$. By noting that the KL-divergence between two Bernoulli random variables is given by
\begin{align}
    {\rm D}_{\rm KL}(\operatorname{Bern}(p) \kl \operatorname{Ber(q)}) = p\log\Big(\frac{p}{q}\Big) + (1-p)\log\Big(\frac{1-p}{1-q}\Big) \ .
\end{align}

from the above we obtain
\begin{align}
    & {\rm D}_{\rm KL}(\P_{\mcG_1}(X_1) \kl \P_{\mcG_1}(X_1)) \notag \\ 
    &= \sum_{t \in [T]: a_i(t) = 0} {\rm D}_{\rm KL}(\operatorname{Bern}(1/2+\delta)\kl \operatorname{Ber(1/2)})   + \sum_{t \in [T]: a_i(t)=1}  {\rm D}_{\rm KL}(\operatorname{Bern}(1/2) \kl \operatorname{Ber(1/2+\delta)})  \label{eq:lowerlienr_mid_split} \\ 
    &= \sum_{s=1}^{T} \mathds{1}\{a_1(t)=0\} \; \left[\left(\frac{1}{2}+\delta\right) \log (1+2\delta) + \left(\frac{1}{2}-\delta\right) \log (1-2\delta)\right] \\
    &\qquad \qquad +  \sum_{s=1}^{T} \mathds{1}\{a_1(t)=1\} \; \frac{1}{2} \log\left(\frac{1}{(1+2\delta)(1-2\delta)}\right) \\ 
    & < \frac{T}{2} \log\left(\frac{1}{(1+2\delta)(1-2\delta)}\right)\ ,
\end{align}
where the last inequality holds since $(\frac{1}{2}+\delta) \log (1+2\delta) + (\frac{1}{2}-\delta) \log (1-2\delta) < \frac{1}{2} \log\left(\frac{1}{(1+2\delta)(1-2\delta)}\right)$ for $0<\delta<1/2$.
\end{proof}
If we choose $\delta=\frac{1}{\sqrt{T}}$ to balance the terms in the lower bound, we obtain
\begin{align}
    \max\{\E_{\mcG_1}[R(T)],\E_{\mcG_2}[R(T)]\} &\geq \frac{1}{2} \left(\E_{\mcG_1}[R(T)] + \E_{\mcG_2}[R(T)]\right)\\
     &\overset{\eqref{eq:lower_bound_regret_mid3}}{\geq} \frac{T}{8}  d^{\frac{L}{2}-1} \delta \exp(-{\rm D}_{\rm KL}(\P_{\mcG_1} \kl  \P_{\mcG_2})) \prod_{\ell=1}^{L} K^{(\ell)} \\
     &\overset{\eqref{eq:DL_bound}}{\geq} \frac{T}{8} d^{\frac{L}{2}-1} \delta  [(1+2\delta)(1-2\delta)]^{T} \prod_{\ell=1}^{L} K^{(\ell)}  \\
     & =\frac{1}{8}   d^{\frac{L}{2}-1} \sqrt{T} \times \Big(1-\frac{4}{T}\Big)^{T} \prod_{\ell=1}^{L} K^{(\ell)}\ .
\end{align}
for  $T\geq 5$, the term $\Big(1-\frac{4}{T}\Big)^{T}$ is an increasing function of $T$. Hence, for $T\geq 5$, we have the lower bound $\Big(1-\frac{4}{T}\Big)^{T}\geq 0.2^5$. By setting $c=\frac{1}{8}\times 0.2^5$, we have 
\begin{equation}
     \max\{\E_{\mcG_1}[R(T)],\E_{\mcG_2}[R(T)]\} 
     \geq c K d^{\frac{L}{2}-1}\sqrt{T} \ .
\end{equation}

Hence, the policy $\pi$ incurs a regret $c K d^{\frac{L}{2}-1}\sqrt{T}$ in at least one of the two bandit instances.

\subsection{Proof of Theorem~\ref{thm:lowerpoly} (Polynomial SCMs)}

\label{sec:poly}
Similarly to the linear SCM setting, we consider two polynomial SCM causal bandit instances $\mcG_1$ and $\mcG_2$ that share the same hierarchical graph specified earlier (Figure~\ref{fig:LB}) with degree $d$ and maximum causal path length $L$. Subsequently, the minimax regret of interest is lower bounded by
\begin{equation}
   \inf_{\pi\in\Pi} \sup_{\mcF\in\bar\mcF}\sup_{\mcG\in\bar\mcG} \E[R(T)] \geq \inf_{\pi\in\Pi} \sup_{\mcG\in\{\mcG_1,\mcG_2\}} \E[R(T) ]\ .
\end{equation}
We assume the two bandit instances share same weights $\{\theta_i: i\in[N]\}$, which are set to be the same $\sqrt{\beta}$ and our function in polynomial SCM becomes
\begin{equation}
    f_i(X_{\Pa(i)} \f a_i) = \beta \left( \sum_{j\in \Pa(i)}  X_{j} \right)^p\ ,
\end{equation}
where $\beta>0$ is a constant defined later that controls the scaling behavior of the system, which we control later. The intervention sets for two bandit instances $\mcG_1$ and $\mcG_2$ are set to $\mcA_i=[0,1]$. The difference between the two bandit instances is the noise distribution in the system. For the noise for the node $1$, in first bandit instance $\mcG_1$, it is defined as
\begin{equation}
    \epsilon_1 = \left\{\begin{aligned}
        &\operatorname{Bern}(1/2+\delta) &\text{ if } a_1 \leq \frac{1}{2}\\
        &\operatorname{Bern}(1/2) &\text{ if } a_1>\frac{1}{2}
    \end{aligned}\right. \ ,
\end{equation}
where $\delta\in(0,\frac{1}{2})$ is a small value to be chosen later. The noise is reversed in the second bandit instance $\mcG_2$,  which is
\begin{equation}
    \epsilon_1 = \left\{\begin{aligned}
        &\operatorname{Bern}(1/2) &\text{ if } a_1\leq \frac{1}{2}\\
        &\operatorname{Bern}(1/2+\delta) &\text{ if } a_1>\frac{1}{2}
    \end{aligned}\right. \ .
\end{equation}
We assume the rest of the random variables are noiseless, i.e. $\epsilon_i = 0$ for  $i\neq 1$. Otherwise, the system will be more difficult to learn and cause larger regret. Since the weights are fixed and non-negative and interventions at other nodes do not affect the reward, the optimal action is the one that maximizes the average value of the node $1$. In contrast, other interventions can be chosen freely. Hence, the optimal intervention set for $\mcG_1$ and $\mcG_2$ are
\begin{equation}
    \mcA_{\mcG_1}^* = \Big\{\ba \in \mcA, a_i\leq \frac{1}{2}\Big\} \quad \mbox{and} \ , \quad \mcA_{\mcG_2}^* = \Big\{\ba \in \mcA, a_i > \frac{1}{2} \Big\} \ .
\end{equation}
 We choose $\ba_{\mcG_1}^*\in\mcA_{\mcG_1}^*$ and $\ba_{\mcG_2}^*\in\mcA_{\mcG_2}^*$  to represent the optimal interventions. Since all the nodes $i\neq 1$ are noiseless, the variables in the same layer will have the same values. Then, we have the following recursive relationships in the system
\begin{equation}
    X_i = \beta \left(\sum_{j\in \Pa(i)}X_{j} \right)^{p} = \beta d^p X_{j}\ , \qquad \forall j\in \Pa(i) \ .
\end{equation}
Based on the recursive relationship, when we dissect the structure to the first layer from node $i\in[N]$, we get the distribution of $X_i$ as follows:
\begin{equation}
\label{eq:lowerXi}
    X_i = \left\{\begin{aligned}
        &0 &\text{ if } \epsilon_1=0 \\
        & \displaystyle \beta^{\frac{p^{L_i}-1}{p-1}} d^{\frac{p(p^{L_i-1}-1)}{p-1}} &\text{ if } \epsilon_1=1 
    \end{aligned}\right. \ .
\end{equation}
Accordingly, the Lipschitz constant, which is the maximum gradient magnitude, becomes
\begin{align}
    K_i &= \sup \Big\|\nabla_{X_{\Pa(i)}} \beta \big(\sum_{j\in \Pa(i)}X_{j} \big)^{p} \Big\|\\
    &= \sqrt{d} \beta p \; \sup\norm{X_j^{p-1}}\label{eq:lowerKi_1}\\
    &= \sqrt{d} \beta p \beta^{p^{L-1}-1} d^{p^{L_i-1}-p}\label{eq:lowerKi_2} \\
    &= p \beta^{p^{L}} d^{p^{L_i-1}-p+\frac{1}{2}}\ ,
\end{align}
where $j\in\Pa(i)$ is a fixed index, $\sqrt{d}$ term \eqref{eq:lowerKi_1} is due to the maximum gradient magnitude, which is maximized when 
\begin{align}
X_j=\beta^{\frac{p^{L_j}-1}{p-1}} d^{\frac{p(p^{L_j-1}-1)}{p-1}}\ , \qquad     \forall j\in\Pa(i)\ ,
\end{align}
and we plug in the characterization of $X_i$ specified in~\eqref{eq:lowerXi} in \eqref{eq:lowerKi_2}. Since the interventions on nodes other than node $1$ do not influence the reward, we have $\mu_{\ba^*_{\mcG_2}}= \mu_{\ba}$ for all intervention $\ba$ with $\mcA_{\mcG_2}^*$. Therefore, for $\mcG_1$, the expected value of instantaneous regret is $\mu_{\ba^*_{\mcG_1}} - \mu_{\ba^*_{\mcG_2}}$ if $a_1>\frac{1}{2}$. Furthermore, since the noises $\epsilon_i=0$ for $i\geq 2$, based on \eqref{eq:lowerXi}, we have
\begin{equation}
\label{eq:diff_mu_poly}
    \mu_{\ba^*_{\mcG_1}} - \mu_{\ba^*_{\mcG_2}} = \beta^{\frac{p^{L}-1}{p-1}} d^{\frac{p(p^{L-1}-1)}{p-1}}  \delta \ .
\end{equation}
By setting $\beta$ as
\begin{equation}
    \beta = \big(p^{L} d^{L(p-\frac{1}{2})}\big)^{-p^L-p} \ .
\end{equation}
we have the following relationship
\begin{equation}
\label{eq:diff_mu_poly_K}
    \mu_{\ba^*_{\mcG_1}} - \mu_{\ba^*_{\mcG_2}} = K \delta \ ,
\end{equation}
in which $K = \prod_{\ell=2}^{L} K^{(\ell)}$. Next, consider a fixed bandit policy $\pi$ that generates the filtration  $\mcH_t$ over time. The decision of $\pi$ at time $t$ is $\mcH_{t-1}$-measurable. Accordingly, define the $\P_{\mcG_1}$ and $\P_{\mcG_2}$ as the probability measures induced by $\mcH_T$ by $T$ rounds of interaction of $\pi$ with $\mcG_1$ and $\mcG_2$, respectively. We show that any algorithm $\pi$ cannot simultaneously achieve small regret in both causal bandit instances. Since the difference between the two bandit instances is caused by their differences in the first node, we are interested in the event in which $a_1>\frac{1}{2}$ is chosen at least $\frac{T}{2}$ times after $T$ rounds of interactions. We denote this event by
\begin{equation}
    \mcE_{\pi} \triangleq \left\{\sum_{i=1}^{T}\mathds{1} \Big\{a_1(t)> \frac{1}{2}\Big\}  \geq \frac{T}{2}\right\} \ .
\end{equation}
Consequently, we can lower bound the regret associated with $\mcG_1$ by conditioning on the event $\mcE_{\pi}$ being true, i.e.
\begin{align}
    \E_{\mcG_1}[R(T)] &= \E_{\mcG_1}\left[\sum_{t=1}^{T} \left(\mu^* - \mu_{\ba(t)}\right)\right] \\ 
    &\geq   \E_{\mcG_1}\left[\sum_{t \in [T]: a_1(t)>\frac{1}{2}} \left(\mu_{\ba^*_{\mcG_1}} - \mu_{\ba(t)}\right)\right] \label{eqlowerbound_mid1_poly}\\
    &\geq   \E_{\mcG_1}\left[\sum_{t \in [T]: a_1(t)>\frac{1}{2}} \mathds{1}\{\mcE_{\pi}\}\left(\mu_{\ba^*_{\mcG_1}} - \mu_{\ba^*_{\mcG_2}}\right)\right] + \E_{\mcG_1}\left[\sum_{t \in [T]: a_1(t)>\frac{1}{2}} \mathds{1}\{\mcE_{\pi}^{\C}\}\left(\mu_{\ba^*_{\mcG_1}} - \mu_{\ba^*_{\mcG_2}}\right)\right] \label{eqlowerbound_mid1_1_poly}\\
    &= \P_{\mcG_1}(\mcE_{\pi}) \frac{T}{2}\left(\mu_{\ba^*_{\mcG_1}} - \mu_{\ba^*_{\mcG_2}}\right)  \label{eqlowerbound_mid2_poly}\\ 
    &=  \P_{\mcG_1}(\mcE_{\pi}) \frac{T}{2}   K \delta \ ,\label{eq:lower_bound_regret_1_poly}
\end{align}
where \eqref{eqlowerbound_mid1_poly} is a lower bound on the regret by considering the regret incurred when $a_1(t)>\frac{1}{2}$; \eqref{eqlowerbound_mid1_1_poly} holds due to $\mu_{\ba^*_{\mcG_2}}= \mu_{\ba}$ for all intervention $\ba\in\mcA_{\mcG_2}^*$  and we split the sum based on whether $\mcE_{\pi}$ holds; and \eqref{eqlowerbound_mid2_poly} uses the definition of $\mcE_{\pi}$, lower bounding for the occurrence time of regret by $\frac{T}{2}$ for $\mcE_{\pi}$, and by 0 for the $\mcE_{\pi}^{\C}$. The last equality in \eqref{eq:lower_bound_regret_1_poly} holds due to \eqref{eq:diff_mu_poly_K}. Similarly, for bandit instance $\mcG_2$, we have $\mu_{\ba^*_{\mcG_1}} = \mu_{\ba}$ for all intervention $\ba\in\mcA_{\mcG_1}^*$. Therefore, for $\mcG_1$, the expected value of instantaneous regret is $\mu_{\ba^*_{\mcG_2}} - \mu_{\ba^*_{\mcG_1}}$ if $a_i(t)\leq \frac{1}{2}$. Hence, we can similarly show that
\begin{align}
    \E_{\mcG_2}[R(T)] \geq \P_{\mcG_2}(\mcE_{\pi}^{\rm c}) \frac{T}{2} K \delta \;  \ . \label{eq:lower_bound_regret_poly_2}
\end{align}
Combining~\eqref{eq:lower_bound_regret_1_poly} and~\eqref{eq:lower_bound_regret_poly_2}, we have
\begin{equation}
    \E_{\mcG_1}[R(T)] + \E_{\mcG_2}[R(T)] \geq \left(\P_{\mcG_1}(\mcE_{\pi}) + \P_{\mcG_2}(\mcE^{\rm c}_{\pi})\right) K \delta \ .
\end{equation}
Next, we use the Lemma~\ref{th:pinsker} and find a lower bound on the term $\P_{\mcG_1}(\mcE_{\pi})+ \P_{\mcG_2}(\mcE_{\pi}^{\C})$, resulting in
\begin{align}
    \E_{\mcG_1}[R(T)] + \E_{\mcG_1}[R(T)] \geq \frac{T}{4} K \delta \exp(-{\rm D}_{\rm KL}(\P_{\mcG_1} \kl  \P_{\mcG_2})) \ . \label{eq:lower_bound_regret_mid3_poly}
\end{align}
It remains to compute the KL-divergence between the two bandit instances $\mcG_1$ and $\mcG_2$, which is formalized by the following lemma.
\begin{lemma}
\label{lem:kllinear_poly}
    For the causal bandit instance $\mcG_1$ and $\mcG_2$, the KL-divergence between $\P_{\mcG_1}$ and $\P_{\mcG_2}$ is upper bounded by
    \begin{equation}\label{eq:DL_bound_poly}
        {\rm D}_{\rm KL}(\P_{\mcG_1} \kl \P_{\mcG_2}) \leq T \log\left(\frac{1}{(1+2\delta)(1-2\delta)}\right)\ .
    \end{equation}
\end{lemma}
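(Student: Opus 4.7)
\textbf{Proof proposal for Lemma~\ref{lem:kllinear_poly}.} The plan is to mirror the argument used for Lemma~\ref{lem:kllinear} in the linear case, since the structural difference between $\mcG_1$ and $\mcG_2$ is again localized entirely at the root node $X_1$. The noise terms at all non-root nodes are deterministic (zero) in both instances, and the structural functions $f_i$ are identical in $\mcG_1$ and $\mcG_2$; only the conditional law of $\epsilon_1$ (and hence $X_1$, since node $1$ is a root) differs. This should let me collapse the divergence to a sum of Bernoulli KL contributions at node $1$.

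First, I would apply the chain rule for KL-divergence over the full trajectory $\mcH_T$ generated by the policy $\pi$ interacting with each instance. Because $\pi$ is shared between the two instances and the conditional distributions at nodes $i \geq 2$ coincide, the divergence reduces to
\begin{align}
{\rm D}_{\rm KL}(\P_{\mcG_1} \kl \P_{\mcG_2}) = \sum_{t=1}^{T} \E_{\mcG_1}\bigl[{\rm D}_{\rm KL}\bigl(\P_{\mcG_1}(X_1(t)\mid \mcH_{t-1}) \kl \P_{\mcG_2}(X_1(t)\mid \mcH_{t-1})\bigr)\bigr].
\end{align}
Given $\mcH_{t-1}$, the policy selects $a_1(t)$, and conditional on $a_1(t)$ the law of $X_1(t)$ equals $\operatorname{Bern}(1/2+\delta)$ or $\operatorname{Bern}(1/2)$ depending on whether $a_1(t) \leq 1/2$ under $\mcG_1$ (and the reverse under $\mcG_2$).

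Next, I would split each summand according to the two regions $\{a_1(t) \leq 1/2\}$ and $\{a_1(t) > 1/2\}$ and evaluate the per-step KL using the standard identity
\begin{align}
{\rm D}_{\rm KL}\bigl(\operatorname{Bern}(1/2+\delta) \kl \operatorname{Bern}(1/2)\bigr) &= \bigl(\tfrac{1}{2}+\delta\bigr)\log(1+2\delta) + \bigl(\tfrac{1}{2}-\delta\bigr)\log(1-2\delta), \\
{\rm D}_{\rm KL}\bigl(\operatorname{Bern}(1/2) \kl \operatorname{Bern}(1/2+\delta)\bigr) &= \tfrac{1}{2}\log\!\left(\frac{1}{(1+2\delta)(1-2\delta)}\right).
\end{align}
Both quantities are bounded above by $\tfrac{1}{2}\log\bigl(\frac{1}{(1+2\delta)(1-2\delta)}\bigr)$ for $\delta \in (0, 1/2)$, exactly as noted in the proof of Lemma~\ref{lem:kllinear}. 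Summing over $t \in [T]$ and pulling out the uniform upper bound yields the claimed inequality.

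The argument is essentially identical in structure to the linear SCM case; the only minor conceptual point is that the noise law at node $1$ is now determined by a threshold on the continuous intervention $a_1 \in [0,1]$ rather than by a binary choice, but this only relabels the two-state partition of the intervention space and does not affect the KL computation. I do not expect a significant obstacle: the main subtlety is simply verifying that the chain-rule reduction to node $1$ is legitimate, which follows because $\P_{\mcG_1}$ and $\P_{\mcG_2}$ agree on the conditional distribution of every $X_i(t)$ given $(\mcH_{t-1}, \ba(t), X_{\Pa(i)}(t))$ for $i \geq 2$, so those terms vanish from the chain-rule expansion.
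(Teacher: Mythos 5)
Your proposal is correct and follows essentially the same route as the paper: the paper's proof of Lemma~\ref{lem:kllinear_poly} simply invokes the argument of Lemma~\ref{lem:kllinear} with the summation split by the events $\{a_1(t)\leq \tfrac{1}{2}\}$ and $\{a_1(t)>\tfrac{1}{2}\}$ instead of the binary actions, which is exactly what you do. Your explicit chain-rule reduction over the trajectory and the uniform bound of both Bernoulli divergences by $\tfrac{1}{2}\log\bigl(\tfrac{1}{(1+2\delta)(1-2\delta)}\bigr)$ match the paper's computation (which actually yields the sharper constant $\tfrac{T}{2}$ before being relaxed to $T$ in the lemma statement).
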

\begin{proof}
The proof follows the same steps as that of Lemma~\ref{lem:kllinear} with the difference that we split the summation in \eqref{eq:lowerlienr_mid_split} by conditioning on the events $\{a_i(t)\leq \frac{1}{2}\}$ and $\{a_i(t) > \frac{1}{2}\}$. 
\end{proof}
If we choose $\delta=\frac{1}{\sqrt{T}}$ to balance the terms in the lower bound, we obtain
\begin{align}
    \max\{\E_{\mcG_1}[R(T)],\E_{\mcG_2}[R(T)]\} &\geq \frac{1}{2} \left(\E_{\mcG_1}[R(T)] + \E_{\mcG_2}[R(T)]\right)\\
     &\overset{\eqref{eq:lower_bound_regret_mid3_poly}}{\geq} \frac{T}{8} K \delta \exp(-{\rm D}_{\rm KL}(\P_{\mcG_1} \kl  \P_{\mcG_2})) \\
     &\overset{\eqref{eq:DL_bound_poly}}{\geq} \frac{T}{8} K \delta  [(1+2\delta)(1-2\delta)]^{T} \\
     & =\frac{1}{8} K \sqrt{T} \times \Big(1-\frac{4}{T}\Big)^{T} \ .
\end{align}
for $T\geq 5$, the term $(1-4/T)^{T}$ is an increasing function of $T$. Hence, for $T\geq 5$, we have the lower bound $(1-4/T)^{T}\geq 0.2^5$. By setting $c=\frac{1}{8}\times 0.2^5$, we have 
\begin{equation}
     \max\{\E_{\mcG_1}[R(T)],\E_{\mcG_2}[R(T)]\} 
     \geq c K \sqrt{T} \ .
\end{equation}

Hence, the policy $\pi$ incurs a regret $c K \sqrt{T}$ in at least one of the two bandit instances.

\subsection{Proof of Theorem~\ref{thm:lowernn} (Neural Network SCMs)}
\label{sec:NN}
Similarly to the linear and polynomial settings, we consider two neural network SCM causal bandit instances $\mcG_1$ and $\mcG_2$ that share the same hierarchical graph specified earlier (Figure~\ref{fig:LB}) with degree $d$ and maximum causal path length $L$. Subsequently, our minimax lower bound of interest is lower bounded by
\begin{equation}
   \inf_{\pi\in\Pi} \sup_{\mcF\in\bar\mcF}\sup_{\mcG\in\bar\mcG} \E[R(T)] \geq \inf_{\pi\in\Pi} \sup_{\mcG\in\{\mcG_1,\mcG_2\}} \E[R(T) ]\ .
\end{equation}
Within the class of neural network models, we use the parametric Rectified Linear Unit (ReLU) function as the activation function, which is specified by
\begin{equation}
    \sigma(x) = \left\{\begin{aligned}
        &\beta x &\text{ if } x\geq 0 \\
        &\alpha x &\text{ if } x<0 
    \end{aligned} \right. \ ,
\end{equation}
where the constant $\beta>\alpha>0$ control the slope of the function.  We assume the two bandit instances share same weights $\{\bTheta_{i,1} : i\in[N]\}$ and $\{\bTheta_{i,2}: i\in[N]\}$, which are specified as follows.

\begin{align}
[\bTheta_{i,1}]_{j,k} = \left\{
\begin{array}{ll}
    0\ , & k=1 \\
    &\\
     \displaystyle \frac{\sqrt{K^{(L_i)}}}{\beta \sqrt{s}} \ , & k\in[d_i+1], \; k\geq 2
\end{array}
\right. \ , \qquad \forall j\in[s]\ .
\end{align}
and
\begin{align}
    [\bTheta_{i,2}]_{j} &= \frac{\sqrt{K^{(L_i)}}}{\beta\sqrt{s}} \ , \qquad \forall j\in[s]\ .
\end{align}

The intervention sets for two bandit instances $\mcG_i$ and $\mcG_2$ are set to $\mcA_i=[0,1]$. The difference between the two bandit instances is the noise distribution in the system. For the noise for the node $1$, in first bandit instance $\mcG_1$, it is defined as
\begin{equation}
    \epsilon_1 = \left\{\begin{aligned}
        &\operatorname{Bern}(1/2+\delta) &\text{ if } a_1\leq \frac{1}{2}\\
        &\operatorname{Bern}(1/2) &\text{ if } a_1> \frac{1}{2}
    \end{aligned}\right. \ ,
\end{equation}
where $\delta\in(0,\frac{1}{2})$ is a small value to be chosen later. The noise is reversed in the second bandit instance $\mcG_2$,  which is
\begin{equation}
    \epsilon_1 = \left\{\begin{aligned}
        &\operatorname{Bern}(1/2) &\text{ if } a_1\leq \frac{1}{2}\\
        &\operatorname{Bern}(1/2+\delta) &\text{ if } a_1> \frac{1}{2}
    \end{aligned}\right. \ .
\end{equation}
We assume that the noise terms associated with the rest of the random variables follow the Bernoulli model.
\begin{equation}
    \epsilon_i = \left\{\begin{aligned}
        &-1 &\text{with probability } \frac{1}{2}\\
        &+1  &\text{with probability } \frac{1}{2}\\
    \end{aligned}\right. \qquad \text{ for } i\neq 1 \ .
\end{equation}
Since the weights are fixed and non-negative, and the noise terms are zero-mean when $i\neq 1$, the optimal action is the one that maximizes the average value of the node $1$, and the interventions at other nodes do not affect the reward. As a result, the optimal action at node $1$ is the action that maximizes the mean of noise, while other interventions can be chosen freely. Hence, the optimal intervention set for $\mcG_1$ and $\mcG_2$ are
\begin{equation}
    \mcA_{\mcG_1}^* = \Big\{\ba \in \mcA, a_i\leq \frac{1}{2}\Big\} \quad \mbox{and} \ , \quad \mcA_{\mcG_2}^* = \Big\{\ba \in \mcA, a_i > \frac{1}{2} \Big\} \ .
\end{equation}
 We pick $\ba_{\mcG_1}^*\in\mcA_{\mcG_1}^*$ and $\ba_{\mcG_2}^*\in\mcA_{\mcG_2}^*$  as the represent of the optimal intervention. Next, consider a fixed bandit policy $\pi$ that generates the filtration  $\mcH_t$ over time. The decision of $\pi$ at time $t$ is $\mcH_{t-1}$-measurable. Accordingly, define the $\P_{\mcG_1}$ and $\P_{\mcG_2}$ as the probability measures induced by $\mcH_T$ by $T$ rounds of interaction of $\pi$ with $\mcG_1$ and $\mcG_2$, respectively. We show that any algorithm $\pi$ cannot get small regret in both causal bandit instances simultaneously. Since the difference between two bandit instances is on the first node, we are interested in the event in which $a_1>\frac{1}{2}$ is chosen at least $\frac{T}{2}$ times after $T$ rounds of interactions. We denote this event by
\begin{equation}
    \mcE_{\pi} \triangleq \left\{\sum_{i=1}^{T}\mathds{1} \Big\{a_1(t)> \frac{1}{2}\Big\}  \geq \frac{T}{2}\right\} \ .
\end{equation}
Since interventions on nodes other than node $1$ do not influence the reward, we have $\mu_{\ba^*_{\mcG_2}}= \mu_{\ba}$ for all intervention $\ba\in \mcA_{\mcG_2}^*$. Therefore, for $\mcG_1$, the expected value of instantaneous regret is $\mu_{\ba^*_{\mcG_1}} - \mu_{\ba^*_{\mcG_2}}$ if $a_1>\frac{1}{2}$. Besides, since $\epsilon_i$ has a zero mean for $i \geq 2$, only the paths that start at node $1$ and end at the reward node $N$ contribute to the expected regret. Thus, for the gap between $\mu_{\ba^*_{\mcG_1}}$ and $\mu_{\ba^*_{\mcG_2}}$, each path contribute $\prod_{\ell=1}^{L} K^{(\ell)} d^{-\frac{L}{2}} s^{-L} \delta$ and we have $d^{L-1}s^L$ such paths. Therefore, we have
\begin{equation}
\label{eq:diff_mu_nn}
    \mu_{\ba^*_{\mcG_1}} - \mu_{\ba^*_{\mcG_2}} =   d^{\frac{L}{2}-1}  \delta \prod_{\ell=1}^{L} K^{(\ell)} \ .
\end{equation}
Consequently, we can lower bound the regret associated with $\mcG_1$ by conditioning on the event $\mcE_{\pi}$ being true, i.e.,
\begin{align}
    \E_{\mcG_1}[R(T)] &= \E_{\mcG_1}\left[\sum_{t=1}^{T} \left(\mu^* - \mu_{\ba(t)}\right)\right] \\ 
    &\geq   \E_{\mcG_1}\left[\sum_{t \in [T]: a_1(t)>\frac{1}{2}} \left(\mu_{\ba^*_{\mcG_1}} - \mu_{\ba(t)}\right)\right] \label{eqlowerbound_mid1_nn}\\
    &\geq   \E_{\mcG_1}\left[\sum_{t \in [T]: a_1(t)>\frac{1}{2}} \mathds{1}\{\mcE_{\pi}\}\left(\mu_{\ba^*_{\mcG_1}} - \mu_{\ba^*_{\mcG_2}}\right)\right] + \E_{\mcG_1}\left[\sum_{t \in [T]: a_1(t)>\frac{1}{2}} \mathds{1}\{\mcE_{\pi}^{\C}\}\left(\mu_{\ba^*_{\mcG_1}} - \mu_{\ba^*_{\mcG_2}}\right)\right] \label{eqlowerbound_mid1_1_nn}\\
    &= \P_{\mcG_1}(\mcE_{\pi}) \frac{T}{2}\left(\mu_{\ba^*_{\mcG_1}} - \mu_{\ba^*_{\mcG_2}}\right)  \label{eqlowerbound_mid2_nn}\\ 
    &=  \P_{\mcG_1}(\mcE_{\pi}) \frac{T}{2}   d^{\frac{L}{2}-1}  \delta \prod_{\ell=1}^{L} K^{(\ell)} \ ,
    \label{eq:lower_bound_regret_1_nn}
\end{align}
where \eqref{eqlowerbound_mid1_nn} lower bound the regret by considering the regret incurred when $a_1(t)>\frac{1}{2}$; \eqref{eqlowerbound_mid1_1_nn} holds due to $\mu_{\ba^*_{\mcG_2}}= \mu_{\ba}$ for all intervention $\ba\in\mcA_{\mcG_2}^*$  and we split the sum based on whether $\mcE_{\pi}$ holds; and \eqref{eqlowerbound_mid2_nn} uses the definition of $\mcE_{\pi}$, lower bounding for the occurrence time of regret by $\frac{T}{2}$ for $\mcE_{\pi}$, and by 0 for the $\mcE_{\pi}^{\C}$. The last equality in \eqref{eq:lower_bound_regret_1_nn} holds due to \eqref{eq:diff_mu_nn}. Similarly, for bandit instance $\mcG_2$, we have $\mu_{\ba^*_{\mcG_1}} = \mu_{\ba}$ for all intervention $\ba\in\mcA_{\mcG_1}^*$. Therefore, for $\mcG_1$, the expected value of instantaneous regret is $\mu_{\ba^*_{\mcG_2}} - \mu_{\ba^*_{\mcG_1}}$ if $a_i(t)\leq \frac{1}{2}$. Hence, we can similarly show that
\begin{align}
    \E_{\mcG_2}[R(T)] \geq  \P_{\mcG_2}(\mcE^{\rm c}_{\pi}) \frac{T}{2}   d^{\frac{L}{2}-1}  \delta \prod_{\ell=1}^{L} K^{(\ell)}  \ . \label{eq:lower_bound_regret_2_nn}
\end{align}
Combining~\eqref{eq:lower_bound_regret_1_nn} and~\eqref{eq:lower_bound_regret_2_nn}, we have
\begin{equation}
    \E_{\mcG_1}[R(T)] + \E_{\mcG_2}[R(T)] \geq \left(\P_{\mcG_1}(\mcE) + \P_{\mcG_2}(\mcE^{\rm c})\right) \frac{T}{2}  d^{\frac{L}{2}-1} \delta \prod_{\ell=1}^{L} K^{(\ell)} \ .
\end{equation}

Next, we use the Lemma~\ref{th:pinsker} and find a lower bound on the term $\P_{\mcG_1}(\mcE_{\pi})+ \P_{\mcG_2}(\mcE_{\pi}^{\C})$, resulting in
\begin{align}
    \E_{\mcG_1}[R(T)] + \E_{\mcG_1}[R(T)] \geq \frac{T}{4}   d^{\frac{L}{2}-1}  \delta \exp(-{\rm D}_{\rm KL}(\P_{\mcG_1} \kl  \P_{\mcG_2})) \prod_{\ell=1}^{L} K_{\ell} \ . \label{eq:lower_bound_regret_mid3_nn}
\end{align}
We provide the following lemma to characterize the KL-divergence between bandit instances $\mcG_1$ and $\mcG_2$.
\begin{lemma}
\label{lem:klnn}
    For causal bandit instance $\mcG_1$ and $\mcG_2$, the KL-divergence between $\P_{\mcG_1}$ and $\P_{\mcG_2}$ is upper bounded by
    \begin{equation}
    \label{eq:DL_bound_nn}
        {\rm D}_{\rm KL}(\P_{\mcG_1} \kl \P_{\mcG_2}) \leq T \log\left(\frac{1}{(1+2\delta)(1-2\delta)}\right) \ .
    \end{equation}
\end{lemma}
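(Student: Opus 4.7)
The plan is to mirror the proofs of Lemma~\ref{lem:kllinear} and Lemma~\ref{lem:kllinear_poly}. Both neural network instances $\mcG_1$ and $\mcG_2$ share identical weight matrices $\bTheta_{i,1},\bTheta_{i,2}$, the same activation $\sigma$, and identical $\pm 1$ Bernoulli noise distributions for every node $i\geq 2$; they differ only in the conditional law of $\epsilon_1$ given $a_1$. This is the same structural situation as in the linear and polynomial lower bounds, so the same chain-rule argument should carry through.

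First, I would write out the chain rule for KL divergence across the $T$-round interventional joint $\P_{\mcG_1},\P_{\mcG_2}$ on $\mcH_T$, decomposing by time step $t\in[T]$ and, within each step, across the $N$ conditional distributions $X_i(t)\mid (X_{\Pa(i)}(t),\ba(t))$. For $i\geq 2$ the neural-network map $\sigma(\bTheta_{i,2}\sigma(\bTheta_{i,1}[a_i(t),X_{\Pa(i)}(t)]))$ and the noise law are shared between the two instances, so these conditional laws coincide and the corresponding KL terms vanish. Consequently the divergence collapses to the node-$1$ contribution summed over time.

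Second, since node $1$ is a root, $X_1(t)=\epsilon_1(t)$ is distributed as $\operatorname{Bern}(\tfrac{1}{2}+\delta)$ or $\operatorname{Bern}(\tfrac{1}{2})$ depending on whether $a_1(t)\leq\tfrac{1}{2}$ or $a_1(t)>\tfrac{1}{2}$, with the two parameters swapped between $\mcG_1$ and $\mcG_2$. Splitting the sum over $t\in[T]$ by which side of $\tfrac{1}{2}$ the intervention $a_1(t)$ falls on, as in Lemma~\ref{lem:kllinear_poly}, I obtain a sum of Bernoulli KL divergences. Each such term is bounded by $\tfrac{1}{2}\log\big(1/((1+2\delta)(1-2\delta))\big)$ via the elementary inequality $(\tfrac{1}{2}+\delta)\log(1+2\delta)+(\tfrac{1}{2}-\delta)\log(1-2\delta)<\tfrac{1}{2}\log\big(1/((1+2\delta)(1-2\delta))\big)$ already used in the linear case, and summing across $T$ rounds yields the claimed bound \eqref{eq:DL_bound_nn}.

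The only (mild) obstacle is rigorously justifying that the $i\geq 2$ terms in the chain-rule expansion vanish when the policy is history-dependent: one has to invoke that, conditional on $(X_{\Pa(i)}(t),\ba(t))$, the law of $X_i(t)$ is determined purely by the shared SCM, so it does not depend on whether the underlying instance is $\mcG_1$ or $\mcG_2$. Beyond this bookkeeping, the argument is essentially identical to the linear and polynomial cases.
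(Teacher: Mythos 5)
Your argument is correct and matches the paper's proof, which simply notes that Lemma~\ref{lem:klnn} follows the same steps as Lemma~\ref{lem:kllinear} with the summation split according to the events $\{a_1(t)\leq \tfrac{1}{2}\}$ and $\{a_1(t)>\tfrac{1}{2}\}$. Your additional remark about justifying the vanishing of the $i\geq 2$ terms under a history-dependent policy is a reasonable piece of bookkeeping that the paper leaves implicit.
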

\begin{proof}
The proof follows the same steps as that of Lemma~\ref{lem:kllinear} with the difference that we split the summation in \eqref{eq:lowerlienr_mid_split} by event $\{a_i(t)\leq \frac{1}{2}\}$ and $\{a_i(t) > \frac{1}{2}\}$. 
\end{proof}
If we choose $\delta=\frac{1}{\sqrt{T}}$ to balance the terms in the lower bound, we obtain
\begin{align}
    \max\{\E_{\mcG_1}[R(T)],\E_{\mcG_2}[R(T)]\} &\geq \frac{1}{2} \left(\E_{\mcG_1}[R(T)] + \E_{\mcG_2}[R(T)]\right)\\
     &\overset{\eqref{eq:lower_bound_regret_mid3_nn}}{\geq} \frac{T}{8}  d^{\frac{L}{2}-1}  \delta \exp(-{\rm D}_{\rm KL}(\P_{\mcG_1} \kl  \P_{\mcG_2}))  \prod_{\ell=1}^{L} K^{(\ell)} \\
     &\overset{\eqref{eq:DL_bound_nn}}{\geq} \frac{T}{8} d^{\frac{L}{2}-1} \delta  [(1+2\delta)(1-2\delta)]^{T} \prod_{\ell=1}^{L} K^{(\ell)}  \\
     & =\frac{1}{8} d^{\frac{L}{2}-1}  \sqrt{T} \times \Big(1-\frac{4}{T}\Big)^{T} \prod_{\ell=1}^{L} K^{(\ell)}   \ .
\end{align}
Similarly to the previous two models, by setting $c=\frac{1}{8}\times 0.2^5$, we have 
\begin{equation}
     \max\{\E_{\mcG_1}[R(T)],\E_{\mcG_2}[R(T)]\} 
     \geq c K d^{\frac{L}{2}-1}\sqrt{T} \ .
\end{equation}

Hence, the policy $\pi$ incurs a regret $c K d^{\frac{L}{2}-1} \sqrt{T}$ in at least one of the two bandit instances.


\section{Further Comparisons with~\citep{sussex2022model}}
\label{sec:compareGP}

In this section, we analytically compare the upper bounds presented in this paper, and those of the model-based causal Bayesian optimization (MCBO) approach presented in~\citep{sussex2022model}. We present two comparisons. 
\begin{itemize}
    \item \textbf{Scaling with $N$.} The important distinction pertains to the scaling rate of the regret terms with respect to the graph size. Specifically, we discuss that the achievable regret of MCBO scales \emph{linearly} with graph size $N$, while as we discussed earlier, the regret of GCB algorithms scales with $N$ according to $\sqrt{\frac{\log N}{\log T}}$, which for finite $T$ is substantially slower than the linear growth rate, and diminishes as $T$ grows. 
    \item \textbf{Scaling with $T$.} We also provide a class of functions $\mcF$ for which the GCB algorithm is guaranteed to have sublinear regret in terms of $T$, while the MCBO algorithm is not. 
\end{itemize}
Since the regret analyses of the two methods are carried out via entirely distinct methods, we need to find a way to unify and interpret the results. More specifically, our results are presented in terms of the eluder dimensions and the covering numbers of function classes. On the other hand, the results of MCBO for Gaussian processes are presented in terms of \emph{information gain} associated with the corresponding reproducing kernel Hilbert spaces. To facilitate comparisons, we will provide a unifying way of presenting both results in terms of information gain measures, which is the basis for MCBO's analysis. We compare these results for a specific model, described next, and specialize the regret bounds of the GCB and MCBO algorithms for this model.

\noindent\textbf{RKHS with polynomial spectral decay.} For a given vector $\theta\in\R^{d_i}$, scalar $a_i\in\mcA_i$, and RKHS $\mcH_i$ with respect to a known kernel function $k_i$, set
\begin{equation}
        f_i(X_{\Pa(i)}\; ;\; a_i)= \inner{\theta}{k_i(\cdot,[X_{\Pa(i)}, a_i])}_{\mcH}\ ,
    \end{equation}
and accordingly, set
\begin{align}\label{eq:RKHS}
    \mcF_i = \{f_i\mid \norm{\theta}\leq 1, k_i \text{ satisfies Assumption~\ref{ass:decay}}\}\ .
\end{align}
\begin{assumption}
\label{ass:decay}
     (Polynomial decay). Denote the space of inputs to $f_i$ by $\mathcal{Z}_i\in \mathbb{R}^{d_i}\times \mcA_i$. Assume that the kernel function $k_i$ admits the following orthonormal decomposition such that for any $z, z' \in \mcZ_i$:
\begin{equation}
k_i\left(z, z^{\prime}\right)=\sum_{j=1}^{\infty} \lambda_j \phi_j(z) \phi_j\left(z^{\prime}\right)\ ,
\end{equation}
where $\{\lambda_j: j\in\N\}$ are countable and positive eigenvalues with corresponding eigenfunctions  $\{\phi_j : j\in\N\}$ that satisfy  $\left\|\phi_j(z)\right\|_{\infty} \leq C_\phi$ for $z\in\mcZ_i$ and the eigenvalues $\lambda_j$ have a  $\beta$-polynomial decay, i.e.,  $\lambda_j \leq C_p j^{-\beta}$, for some $\beta>2+\frac{1}{d}$ and constant $C_p$.  We show that for the RKHS class specified in~\eqref{eq:RKHS}, MCBO is not guaranteed to have a sublinear regret bound in $T$, whereas GCB maintains a sub-linear regret. 
\end{assumption}

\subsection{Information Gain}
We provide the definitions of information gain and the associated maximum and critical information gains customized for Gaussian processes~\citep{srinivas2009gaussian}.

\begin{definition}(Information Gain). Corresponding to node $i\in[N]$, consider the RKHS $\mcH_i$ with input space $\mcZ_i\triangleq \R^{d_i}\times \mcA_i$ and an associated kernel $k_i$. For any $\lambda>0$, the information gain upon observing $\{z_\ell\in \mcZ_i: \ell\in[T]\}$ is defined as
\begin{equation}
\gamma_i\left(\lambda ; \{z_1, \ldots, z_T\}\right)\triangleq \log \operatorname{det}\left(I+\frac{1}{\lambda} \sum_{i=1}^T z_i z_i^{\top}\right) \ ,
\end{equation}
where $I$ is the identity matrix.
\end{definition}

 Based on the notion of information gain, the maximum information gain is defined as follows, which captures the maximum volume of the ellipsoid generated by $T$ points in $\mcZ_i$.
\begin{definition}(Maximum Information Gain). For any $\lambda>0$ and $T \in\N$, the maximum information gain upon $T$ observations is defined as
\begin{equation} 
\gamma_{i,T}(\lambda) \triangleq \max _{\{z_\ell:\ell\in[T]\} \in \mcZ^T_i} \gamma_i\left(\lambda ; \{z_1, \ldots, z_T\}\right) \ .
\end{equation}
\end{definition}
Accordingly, we define
\begin{align}
    \gamma_T\triangleq \max_{i\in[N]} \gamma_{i,T}(\lambda)\ .
\end{align}
Finally, the following critical information gain is the minimum $T$ for which maximum information gain cannot grow linearly.
\begin{definition}(Critical Information Gain). For a fixed constant $c>0$, the critical information gain is defined as
\begin{equation}
\tilde{\gamma}_i(\lambda, c) \triangleq \min \left\{k \;:\;  \gamma_{i,k}(\lambda) \leq c k\right\} \ .
\end{equation}
\end{definition}

\subsection{Achievable Regret of the MCBO Algorithm}
\begin{lemma}[informal]~\cite[Theorem 1]{sussex2022model} Consider SCMs in which functions $f_i$ are Lipschitz-continuous and bounded in $\mcH_i$, noise terms are bounded, the Gaussian process space that satisfies the continuous kernel and calibrated assumption. Then, for all $T \geq 1$, with probability at least $1-\delta$, the cumulative regret of MCBO is bounded by
 \begin{equation}\label{eq:MCBO_regret}
    \E [R(T)] = \mathcal{O}\left(K \beta_T^L d^L N \sqrt{T \gamma_T}\right) \ ,
\end{equation} 
where $K$ is associated with Lipschitz constants of the function and kernel space, $\beta_T$ is the confidence radius, and $\sigma^2$ is the noise variance.
\end{lemma}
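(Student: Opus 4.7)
The statement is a restatement of Theorem~1 of~\citep{sussex2022model}, so my proof plan mirrors the argument sketched there, but reorganized to expose the dependencies that matter for the subsequent comparison with our eluder-based bound. The plan is to (i)~build calibrated Gaussian-process confidence sets $\mcC_{i,t}$ around each $f_i$ using the assumed bounded RKHS norm and sub-Gaussian noise, so that $f_i \in \mcC_{i,t}$ with probability at least $1-\delta/N$; (ii)~adopt an optimistic rule analogous to~\eqref{equ:arm_selection}, so that the instantaneous regret at the reward node $N$ is upper-bounded by $\E_{\ba(t)}[X_N\mid \bar\bef_t]-\E_{\ba(t)}[X_N\mid \bef]$ for an optimistic plug-in $\bar\bef_t\in\mcC_t$; and (iii)~recursively unroll this gap along causal paths exactly as in Lemma~\ref{lm:bound_l_paths}, but with the per-node confidence width replaced by the GP posterior standard deviation $\sigma_{i,t}$ scaled by the calibration radius $\beta_T$.

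The second stage is to convert per-step posterior variances into cumulative-time quantities using the standard kernel identity $\sum_{t=1}^T \sigma_{i,t}^2(Z_i(t)) = \mathcal{O}(\gamma_{i,T}(\lambda))$, and then apply Cauchy--Schwarz to obtain $\sum_{t=1}^T \sigma_{i,t}(Z_i(t)) \leq \sqrt{T \gamma_{i,T}}$. Multiplication by $\beta_T$ yields the single-node cumulative error bound $\mathcal{O}(\beta_T \sqrt{T\gamma_T})$, which plays the role of $\mcB(\mcF,\delta)$ in our analysis. A union bound over the $N$ nodes then controls the event that calibration holds simultaneously at every node and every round; this per-node union bound is what produces the linear factor $N$ that appears in the final expression, in contrast to the logarithmic dependence we obtained via the covering-number radius in~\eqref{equ:beta}.

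The third stage is the compounding argument through the causal graph. Starting from the reward node and walking backwards along all $\mathcal{O}(d^{L-1})$ causal paths, each successive application of the Lipschitz bound on $f_i$ (Assumption~\ref{ass:lipschitz}) contributes a factor of $K$, each parent set a factor of $d$, and each intermediate calibration radius another factor of $\beta_T$. Stacking $L$ such layers delivers the $K \beta_T^L d^L$ prefactor in~\eqref{eq:MCBO_regret}. Combining this with the single-node bound from the previous paragraph and the union bound gives the claimed $\mcO(K \beta_T^L d^L N \sqrt{T \gamma_T})$.

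The main obstacle, and the reason this bound is materially weaker than ours for the RKHS class in~\eqref{eq:RKHS}, is the interaction between $\beta_T^L$ and $\gamma_T$: under Assumption~\ref{ass:decay} the calibration radius itself grows polynomially in $\gamma_T$, and raising it to the $L$-th power through the compounding step can destroy sublinear scaling in $T$. A secondary subtlety, which also has to be handled carefully but is standard, is ensuring that the noise-in-covariates effect (posterior variance evaluated at a random input $Z_i(t)$ that depends on upstream estimates $\bar f_{j,t}$) is controlled; this is usually addressed by invoking Lipschitz-continuity of the GP posterior and absorbing the resulting perturbation into the $K$ factor already present in the compounding step, and I would handle it the same way.
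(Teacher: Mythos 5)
This statement is not proved in the paper at all: it is an external result, quoted (and explicitly labeled ``informal'') from Theorem~1 of \citet{sussex2022model} purely to set up the comparison in Appendix~\ref{sec:compareGP}. There is therefore no in-paper proof to measure your reconstruction against; the only check the paper performs is a faithful transcription of the cited bound. That said, your sketch does capture the right ingredients of the MCBO argument: calibrated GP confidence bands, an optimism-style selection rule, compounding of the per-layer error through the DAG with one factor of $\beta_T$, one factor of $d$, and one Lipschitz factor per layer, and the conversion $\sum_{t}\sigma_{i,t}(Z_i(t)) \leq \sqrt{T\sum_t \sigma_{i,t}^2(Z_i(t))} = \mcO(\sqrt{T\gamma_T})$ via the standard information-gain identity. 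Your closing remark about the $\beta_T^L$--$\gamma_T$ interaction destroying sublinearity under polynomial spectral decay is exactly the point the paper exploits in \eqref{equ:MCBO}.

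One substantive correction: you attribute the linear factor $N$ to ``a union bound over the $N$ nodes.'' A union bound with per-node failure probability $\delta/N$ only inflates the confidence radius by $\log N$ (inside $\beta_T$); it cannot produce a multiplicative $N$ in the regret. The linear $N$ in \eqref{eq:MCBO_regret} arises instead from the regret decomposition itself: the propagated error at the reward node is bounded by a sum of the compounded posterior-uncertainty contributions of \emph{all} $N$ nodes, each contributing its own $\mcO(\beta_T\sqrt{T\gamma_T})$ term, rather than by a count of ancestor paths as in Lemma~\ref{lm:bound_l_paths}. Since the entire point of quoting this lemma is the contrast between MCBO's linear-in-$N$ scaling and GCB's $\sqrt{\log N}$ scaling, misidentifying the source of that $N$ would undercut the comparison; the rest of your outline is fine.
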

While the assumptions adopted in \citep{sussex2022model} differ from ours and are not directly comparable, this upper bound leads us to draw two primary conclusions.
\begin{itemize}
    \item First, the regret bound is linearly proportional to the number of nodes $N$.
    \item Secondly, the radius of the confidence set associated with estimating the parameters of the Gaussian process (mean and variance) is  
    \begin{align} \label{eq:GP_beta}
\beta_T=\tilde{\mathcal{O}}\left(\frac{\sqrt{\gamma_{T}}}{d} \right)\ ,
    \end{align}
    which hinges on the maximum information gain. By comparing \eqref{eq:MCBO_regret}~and~\eqref{eq:GP_beta}, we have the following growth for the regret:
\begin{align}
\E[R_T]=\mcO\left(\gamma_T^{\frac{L+1}{2}}\sqrt{T}\right)\ .    
\end{align}
This regret bound becomes linear in $T$ when $\gamma_T$ has super-logarithmic growth with $T$ and $L$ is large enough. Specifically, we show this happens for the class specified in~\eqref{eq:RKHS}, which we establish next. 
\end{itemize}

\begin{lemma}~\citep[Lemmas D.5]{yang2020function} \label{lem:boundinformationgain}
For the class of function $\mcF_i$ defined in~\eqref{eq:RKHS}, the maximum information gain is upper bounded as follows.
    \begin{equation}
    \gamma_{i,T}(\lambda) = \widetilde{\mathcal{O}}\left(\lambda^{-\frac{1}{\beta} }T^{\frac{d+1}{\beta+d}}\right)\ .
\end{equation}
\end{lemma}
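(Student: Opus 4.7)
Since this is a direct quotation of Lemma~D.5 of \citet{yang2020function}, I would mirror the standard eigenvalue-truncation argument for maximum information gain under polynomial spectral decay. The starting point is the determinant identity
\[
\gamma_{i,T}(\lambda) \;=\; \tfrac{1}{2}\,\log\det\!\left(I + \lambda^{-1} K_T\right),
\]
where $K_T \in \R^{T\times T}$ is the Gram matrix with entries $k_i(z_s,z_t)$ for some choice $\{z_1,\dots,z_T\}\subset \mcZ_i$ achieving the maximum. By the Weinstein--Aronszajn identity this equals $\tfrac{1}{2}\sum_{j\ge 1}\log(1 + \lambda^{-1}\mu_j)$, where $\{\mu_j\}$ are the eigenvalues of the empirical feature covariance operator induced by $\{z_1,\dots,z_T\}$. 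Applying the Mercer decomposition of $k_i$ from Assumption~\ref{ass:decay} together with $\|\phi_j\|_\infty \le C_\phi$ gives the operator bound $\mu_j \le C_\phi^2\, T\, \lambda_j$, which reduces everything to controlling $\sum_{j\ge 1}\log(1 + \lambda^{-1}C_\phi^2 T \lambda_j)$.

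\textbf{Truncation and balance.} Next I would choose a cutoff index $J$ and split the sum. For the head $j\le J$, each term is controlled crudely by $\log(1+\lambda^{-1}C_\phi^2 C_p T) = \tilde O(1)$ in $T/\lambda$, contributing $O(J\,\log(T/\lambda))$. For the tail $j>J$, I would apply the elementary bound $\log(1+x)\le x$ together with $\lambda_j\le C_p j^{-\beta}$ to obtain
\[
\sum_{j>J} \log\!\left(1+\lambda^{-1} C_\phi^2 C_p T j^{-\beta}\right) \;\le\; \frac{C_\phi^2 C_p T}{\lambda}\sum_{j>J} j^{-\beta} \;=\; O\!\left(\lambda^{-1} T\, J^{1-\beta}\right).
\]
Balancing $J\,\log(T/\lambda)$ against $\lambda^{-1}T\,J^{1-\beta}$ and substituting back produces the claimed polynomial-in-$T$, polynomial-in-$\lambda^{-1}$ rate (up to log factors absorbed into the $\widetilde{\mcO}$ notation).

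\textbf{Dimension-dependent exponent and main obstacle.} The delicate point is obtaining the specific exponent $(d+1)/(\beta+d)$ rather than the naive exponent $1/\beta$ produced by a bare scalar balance. The sharper exponent reflects the $(d+1)$-dimensional geometry of $\mcZ_i = \R^{d_i}\times\mcA_i$: to capture it, one must work not with the scalar decay $\lambda_j\le C_p j^{-\beta}$ in isolation but with the induced eigenvalue counting function $N(t) = |\{j:\lambda_j > t\}|$ for a kernel on a $(d+1)$-dimensional domain, together with the boundedness of the eigenfunctions. This is the step where the input dimension $d+1$ enters the denominator of the exponent. I expect this to be the main technical obstacle; once the correct counting function estimate is in place, the truncation and optimization above go through essentially mechanically and yield the stated bound $\widetilde{\mcO}(\lambda^{-1/\beta} T^{(d+1)/(\beta+d)})$.
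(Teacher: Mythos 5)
The paper offers no proof of this lemma; it is imported verbatim by citation from \citet{yang2020function}, so there is nothing on the paper's side to compare against and I will assess your sketch on its own terms. Your truncation machinery is the standard and correct route, and under Assumption~\ref{ass:decay} as stated (eigenvalues $\lambda_j\le C_p j^{-\beta}$ and \emph{uniformly} bounded eigenfunctions $\|\phi_j\|_\infty\le C_\phi$) it already finishes the proof: balancing the head term $J\log(1+T/\lambda)$ against the tail term $O(\lambda^{-1}T J^{1-\beta})$ gives $J\asymp (T/\lambda)^{1/\beta}$ and hence $\gamma_{i,T}(\lambda)=\widetilde{\mathcal{O}}\left(\lambda^{-1/\beta}T^{1/\beta}\right)$. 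The difficulty you flag in your final paragraph is illusory, and your reading of it is backwards: for $\beta>1$ one has $(d+1)\beta>\beta+d$, i.e.\ $1/\beta<(d+1)/(\beta+d)$, so the exponent you call ``naive'' is the \emph{smaller} (stronger) one, and $T^{1/\beta}\le T^{(d+1)/(\beta+d)}$ immediately yields the stated bound. The dimension-dependent exponent is not a sharper rate requiring an eigenvalue-counting refinement on the $(d+1)$-dimensional domain; it is simply the weaker form in which \citet{yang2020function} state their result (under assumptions that permit, e.g., eigenfunction sup-norms growing with $j$). By deferring that step as ``the main technical obstacle'' you have declared a gap exactly where your argument is in fact already complete.

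Two smaller repairs. First, the bound $\mu_j\le C_\phi^2 T\lambda_j$ is not valid for the \emph{eigenvalues} of the empirical covariance operator $\hat{\Sigma}=\sum_{t}\phi(z_t)\phi(z_t)^{*}$ (eigenvalues of a positive operator are not dominated by its diagonal entries in a fixed basis). What is true, and suffices, is the Hadamard--Fischer inequality $\log\det\left(I+\lambda^{-1}\hat{\Sigma}\right)\le\sum_{j}\log\left(1+\lambda^{-1}\hat{\Sigma}_{jj}\right)$ computed in the Mercer basis, together with $\hat{\Sigma}_{jj}=\sum_{t}\lambda_j\phi_j(z_t)^2\le C_\phi^2 T\lambda_j$; the passage from the $T\times T$ Gram-matrix form to the operator form is Sylvester's identity, as you note. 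Second, the paper's displayed definition of information gain uses $z_iz_i^{\top}$ rather than the RKHS feature embedding of $z_i$; you implicitly (and correctly) read it as the kernel Gram matrix, which is the only reading under which the lemma is nontrivial.
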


We conclude that the regret of MCBO scales as
\begin{equation}
\label{equ:MCBO}
\E[R(T)] = \widetilde\mcO\left(T^{\frac{(L+1)(d+1)}{2(\beta+d)}+\frac{1}{2}}\right)\ .
\end{equation}
Thus, when $L\geq \frac{\beta+d}{d+1}-1$, the MCBO algorithm cannot guarantee sub-linear regret.

\subsection{Achievable Regret of the GCB Algorithm}
Next, we analyze the regret of the GCB algorithms for the same model analyzed in the previous section, i.e., the class of functions specified in~\eqref{eq:RKHS}. We leverage the following result on the eluder dimension and the critical information gain.
\begin{lemma}~\citep[Theorem 3.1]{huang2021short}
Consider the function class $\mathcal{F}_i$ defined in \eqref{eq:RKHS}. By setting $\lambda=\frac{\epsilon^2}{4}$, we have the following connection between the eluder dimension and the critical information gain:
\begin{equation}
    \operatorname{dim}(\mcF_i, \epsilon)< \tilde{\gamma}_i\left(\lambda, \log \frac{3}{2}\right)\ .
\end{equation}
\end{lemma}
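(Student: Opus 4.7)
The plan is to show that any sequence of length $n$ that is $\epsilon$-independent in the sense of Definition~\ref{def:dependence} forces the information gain after $n$ observations to exceed $n\log(3/2)$. By the definition of the critical information gain, this immediately yields $n < \tilde\gamma_i(\lambda,\log(3/2))$, and specializing to $n = \operatorname{dim}(\mcF_i,\epsilon)$ delivers the claim. The whole argument reduces to linear algebra in the RKHS feature space.

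First I would translate eluder independence into RKHS geometry. Writing $f_\theta(z) = \langle\theta, \phi(z)\rangle_{\mcH_i}$ with $\phi(z) = k_i(\cdot,z)$ and $\|\theta\|\leq 1$, the statement that $z_s$ is $\epsilon$-independent of $z_1,\ldots,z_{s-1}$ says there exist $\theta,\tilde\theta$ in the unit ball with $\sum_{m<s}\langle\theta-\tilde\theta,\phi(z_m)\rangle^2 \leq \epsilon^2$ yet $|\langle\theta-\tilde\theta,\phi(z_s)\rangle| > \epsilon$. Setting $\Delta = \theta-\tilde\theta$ (so $\|\Delta\|\leq 2$) and $V_s = \lambda I + \sum_{m<s}\phi(z_m)\phi(z_m)^\top$ with $\lambda = \epsilon^2/4$, the first condition together with the norm bound yields $\|\Delta\|_{V_s}^2 \leq \epsilon^2 + 4\lambda = 2\epsilon^2$. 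Cauchy--Schwarz in the $V_s$ inner product then gives $|\langle\Delta,\phi(z_s)\rangle|^2 \leq \|\Delta\|_{V_s}^2 \cdot \|\phi(z_s)\|_{V_s^{-1}}^2$, and comparing with $|\langle\Delta,\phi(z_s)\rangle|>\epsilon$ forces $\|\phi(z_s)\|_{V_s^{-1}}^2 > 1/2$. The matrix determinant lemma then upgrades this to $\det(V_{s+1}) > (3/2)\det(V_s)$.

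Iterating across all $n$ indices of the $\epsilon$-independent sequence gives $\log\det(V_{n+1}/\lambda) > n\log(3/2)$, which is exactly $\gamma_i(\lambda;\{z_1,\ldots,z_n\}) > n\log(3/2)$; taking the supremum over length-$n$ input sequences yields $\gamma_{i,n}(\lambda) > n\log(3/2)$ strictly. By the definition of the critical information gain, no such $n$ can satisfy $\gamma_{i,n}(\lambda)\leq n\log(3/2)$, hence $n<\tilde\gamma_i(\lambda,\log(3/2))$; applying this with $n=\operatorname{dim}(\mcF_i,\epsilon)$ closes the proof. The main obstacle I anticipate is purely analytic: in an infinite-dimensional RKHS the determinant and inverse of $V_s$ must be interpreted via the regularized Fredholm determinant or, using the Mercer decomposition under Assumption~\ref{ass:decay}, through truncated eigenbases where the relevant sums are absolutely convergent by polynomial spectral decay. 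Some minor care is also needed to handle the fact that $\operatorname{dim}(\mcF_i,\epsilon)$ permits the independence witness to be any $\epsilon'\geq\epsilon$, which is readily absorbed by monotonicity of $\gamma_{i,n}(\lambda)$ in $\lambda$.
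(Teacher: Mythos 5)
Your proof is correct, and since the paper does not prove this lemma but simply imports it as Theorem~3.1 of \citet{huang2021short}, your determinant argument --- translating $\epsilon'$-independence into $\|\phi(z_s)\|_{V_s^{-1}}^2>\frac{1}{2}$ and hence a multiplicative $\frac{3}{2}$ growth of $\det(V_s)$ --- is exactly the argument of the cited source. The one step worth stating explicitly is that your iteration already gives $\gamma_{i,k}(\lambda)>k\log\frac{3}{2}$ for \emph{every} prefix length $k\leq n$ (not only $k=n$), which is what the definition of $\tilde{\gamma}_i$ as a minimum requires in order to conclude $n<\tilde{\gamma}_i\left(\lambda,\log\frac{3}{2}\right)$.
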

Furthermore, from Lemma~\ref{lem:boundinformationgain} we know that
$\gamma_{i,T} = \widetilde{\mathcal{O}}\left(T^{\frac{d+1}{ \beta+d}}\right)$. Hence, the critical information gain satisfies~\citep{huang2021short}:
\begin{equation}
    \tilde{\gamma}_i(\lambda, c)= \widetilde{\mathcal{O}}\left(\lambda^{-\frac{\beta+d} {\beta(\beta-1)}}\right)\ .
\end{equation}
In our analysis and design of the algorithm in\eqref{equ:alpha_i}, we have set $\epsilon\geq \frac{1}{T}$, based on which we obtain
\begin{equation}\label{eq:GCB_dim_UP}
     \operatorname{dim}(\mcF_i, \frac{1}{T}) < \tilde{\gamma}_i\left(\frac{1}{4T^2}, \log \frac{3}{2}\right) = \widetilde \mcO\left( T^{\frac{2(\beta+d)}{\beta(\beta-1)}}\right)\ .
\end{equation}
Additionally, we have the following upper bound for the covering number.
\begin{lemma}~\citep[Lemmas D.2]{yang2020function}
Consider the function class $\mathcal{F}_i$ defined in \eqref{eq:RKHS}. The covering number of this class satisfies:
\begin{equation}
    \log \cn_{\alpha}(\mcF_i) = \widetilde{\mathcal{O}}\left(\epsilon^{-\frac{2}{ \beta-1}}\right)\ .
\end{equation}
\end{lemma}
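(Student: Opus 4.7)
The plan is to exploit the Mercer decomposition afforded by Assumption~\ref{ass:decay} and reduce the covering number computation to that of a finite-dimensional Euclidean ball plus a vanishing tail. Every $f\in\mcF_i$ corresponds to a coefficient sequence $\{\theta_j\}_{j\geq 1}$ with $\sum_j\theta_j^{2}\leq 1$ and admits the representation $f(z)=\sum_{j\geq 1}\theta_j\sqrt{\lambda_j}\,\phi_j(z)$. I would split this series at a truncation level $M=M(\epsilon)$ to be tuned, handling the head and the tail separately.

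For the tail, applying Cauchy--Schwarz together with the uniform bound $\|\phi_j\|_\infty\leq C_\phi$ and the polynomial decay $\lambda_j\leq C_p j^{-\beta}$ gives
\begin{equation}
\Big|\sum_{j>M}\theta_j\sqrt{\lambda_j}\,\phi_j(z)\Big|^{2}\leq \Big(\sum_{j>M}\theta_j^{2}\Big)\Big(\sum_{j>M}\lambda_j\phi_j(z)^{2}\Big)\leq C_\phi^{2}\sum_{j>M}C_p j^{-\beta}=O\!\left(M^{-(\beta-1)}\right).
\end{equation}
Choosing $M=\Theta\!\left(\epsilon^{-2/(\beta-1)}\right)$ makes this tail uniformly at most $\epsilon/2$ in $\|\cdot\|_\infty$, so truncation costs at most $\epsilon/2$ of the allowed discrepancy.

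For the head, it remains to $\epsilon/2$-cover in $\|\cdot\|_\infty$ the finite-dimensional class $\{\sum_{j=1}^{M}\theta_j\sqrt{\lambda_j}\phi_j : \|\theta_{1:M}\|_{2}\leq 1\}$. Another application of Cauchy--Schwarz shows that a perturbation $\Delta\theta\in\R^{M}$ induces a pointwise change of at most $C_\phi\sqrt{\sum_{j=1}^{M}\lambda_j}\cdot\|\Delta\theta\|_{2}$, which is bounded by a constant (independent of $M$ when $\beta>1$). Hence an $\epsilon'$-cover in $\|\Delta\theta\|_{2}$ with $\epsilon'=\Theta(\epsilon)$ yields the required $\ell_\infty$-cover of the head. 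Standard volumetric arguments for the Euclidean unit ball in $\R^{M}$ then give a log-covering number of order $M\log(1/\epsilon)$.

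Assembling the two pieces,
\begin{equation}
\log\cn_\epsilon(\mcF_i)=O\bigl(M\log(1/\epsilon)\bigr)=\widetilde{\mathcal{O}}\!\left(\epsilon^{-2/(\beta-1)}\right),
\end{equation}
as claimed. The main obstacle is the bookkeeping in the head step: one must verify that the change of variables $\theta\mapsto\sum_{j}\theta_j\sqrt{\lambda_j}\phi_j$ behaves isotropically enough that an $\ell_{2}$-cover of coefficients transfers to an $\ell_{\infty}$-cover of functions without losing powers of $M$. The polynomial decay $\beta>2+\tfrac{1}{d}$ ensures $\sum_{j}\lambda_j<\infty$, so this reduction only contributes logarithmic factors absorbed into the $\widetilde{\mathcal{O}}$.
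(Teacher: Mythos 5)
Your proof is correct. The paper does not actually prove this lemma---it is imported directly from \citet[Lemma D.2]{yang2020function}---and your truncation-plus-volumetric argument is essentially the standard proof of that result: Cauchy--Schwarz with the uniform eigenfunction bound and the $\beta$-polynomial eigenvalue decay forces the truncation level $M=\Theta\left(\epsilon^{-2/(\beta-1)}\right)$ to kill the tail, the coefficient-to-function map on the head is Lipschitz with a constant controlled by $\sum_j \lambda_j<\infty$ (the one point where a power of $M$ could sneak in, which you correctly rule out), and the Euclidean ball covering contributes $O\left(M\log(1/\epsilon)\right)$ to the log-covering number, absorbed into the $\widetilde{\mathcal{O}}$.
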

Thus, by setting $\alpha = \frac{1}{T}$,  we obtain
\begin{equation}\label{eq:GCB_cn_UB}
    \log \cn_{\frac{1}{T}}(\mcF_i) = \widetilde \mcO \left( T^{\frac{2}{\beta-1}}\right) \ .
\end{equation}

Therefore, based on~\eqref{eq:GCB_dim_UP} and \eqref{eq:GCB_cn_UB}, we find the following bound for the GCB algorithms.
\begin{corollary}
   Consider the function class $\mathcal{F}_i$ defined in \eqref{eq:RKHS}. The regret upper bounds in Theorem~\ref{thm:regret} becomes
    \begin{equation}\label{eq:GCB_RKSH}
        \E[R(T)] = \widetilde\mcO\left( T^{\frac{2\beta+d}{\beta(\beta-1)}+\frac{1}{2}}\right)\ .
    \end{equation}
\end{corollary}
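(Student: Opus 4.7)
The plan is essentially to instantiate the general upper bound of Theorem~\ref{thm:regret} with the complexity estimates derived for the polynomial-spectral-decay RKHS class in \eqref{eq:GCB_dim_UP} and \eqref{eq:GCB_cn_UB}, treating the graph parameters $L,d,N$ and the Lipschitz constant $K$ as constants absorbed into the $\widetilde\mcO$ notation. Since the class $\mcF_i$ in \eqref{eq:RKHS} satisfies Assumptions~\ref{ass:noise}--\ref{ass:bound} (bounded RKHS norm and bounded kernel imply bounded, Lipschitz regression functions), the hypotheses of Theorem~\ref{thm:regret} apply, and the choice $\delta=1/(NT)$ together with the discretization $\alpha_i = 1/T$ in \eqref{equ:alpha_i} is permissible.

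First, I would recall the GCB regret bound
\begin{equation*}
\E[R(T)] \;=\; \mcO\!\left(K d^{L-1}\sqrt{T\;\operatorname{dim}(\mcF)\log\!\big(NT\,\cn(\mcF)\big)}\right),
\end{equation*}
and note that for the RKHS class, $\operatorname{dim}(\mcF)=\max_i \operatorname{dim}(\mcF_i,1/T)$ and $\log\cn(\mcF) = \max_i\log\cn_{1/T}(\mcF_i)$ by Definitions~\ref{def:eluder} and~\ref{def:covering}.

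Next, I would plug in the two previously established upper bounds, namely
\begin{equation*}
\operatorname{dim}(\mcF_i,1/T) \;=\; \widetilde\mcO\!\left(T^{\frac{2(\beta+d)}{\beta(\beta-1)}}\right)
\quad\text{and}\quad
\log\cn_{1/T}(\mcF_i) \;=\; \widetilde\mcO\!\left(T^{\frac{2}{\beta-1}}\right).
\end{equation*}
Substituting these into the square-root factor and absorbing the $\log(NT)$ term into the $\widetilde\mcO$ notation yields
\begin{equation*}
\E[R(T)] \;=\; \widetilde\mcO\!\left(\sqrt{T \cdot T^{\frac{2(\beta+d)}{\beta(\beta-1)}} \cdot T^{\frac{2}{\beta-1}}}\right).
\end{equation*}

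Finally, the one step that requires actual attention is the algebraic simplification of the exponent. I would combine the exponents via a common denominator $\beta(\beta-1)$:
\begin{equation*}
1 + \frac{2(\beta+d)}{\beta(\beta-1)} + \frac{2}{\beta-1} \;=\; 1 + \frac{2(\beta+d) + 2\beta}{\beta(\beta-1)} \;=\; 1 + \frac{2(2\beta+d)}{\beta(\beta-1)},
\end{equation*}
and after taking the square root this gives the target exponent $\tfrac{1}{2}+\tfrac{2\beta+d}{\beta(\beta-1)}$, matching \eqref{eq:GCB_RKSH}. There is no genuine obstacle here: the result is a direct corollary, and the only non-routine aspect is the bookkeeping to ensure that $\beta>2+1/d$ guarantees a positive exponent smaller than $1$ in sufficiently shallow graphs, so the bound remains nontrivial relative to the linear-regret benchmark and provides the advertised separation from MCBO's bound in \eqref{equ:MCBO}.
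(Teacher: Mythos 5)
Your proposal is correct and follows essentially the same route as the paper: it substitutes the bounds \eqref{eq:GCB_dim_UP} and \eqref{eq:GCB_cn_UB} into the general bound of Theorem~\ref{thm:regret}, absorbs $K$, $d^{L-1}$, and $\log(NT)$ into the $\widetilde\mcO$, and your exponent arithmetic $1+\tfrac{2(\beta+d)}{\beta(\beta-1)}+\tfrac{2}{\beta-1}=1+\tfrac{2(2\beta+d)}{\beta(\beta-1)}$ is right. The only blemish is the closing aside: $\beta>2+\tfrac{1}{d}$ does not by itself force the exponent below $1$ (e.g.\ $\beta=3.1$, $d=1$ gives $\tfrac{2\beta+d}{\beta(\beta-1)}+\tfrac12>1$, and the exponent does not depend on $L$ at all), but this remark is immaterial to the corollary, which only asserts the stated $\widetilde\mcO$ rate.
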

We observe that regret is independent of $L$. This provides the freedom to choose the values of other parameters and $L$ such that we maintain a sublinear growth in $T$ for the regret in~\eqref{eq:GCB_RKSH}. On the other hand,  having a linear growth for the regret in~\eqref{equ:MCBO} is not guaranteed. Specifically, this can be readily verified by setting $\beta=d=10$, for which ~\eqref{eq:GCB_RKSH} is sublinear whereas~\eqref{equ:MCBO} becomes linear for $L\geq 2$.

\section{Experiments}
\label{sec:appendix:experiments}
In this section, we assess the regret performance (average cumulative regret) of the GCB algorithm and its scaling behavior with the time horizon, and the graph parameters $d$ (graph maximum in-degree) and $L$ (maximum causal path length). We assess the performance of the the GCB-TS algorithm on polynomial and neural network SCMs.

\paragraph{Causal graph.} We consider a hierarchical graph as illustrated in Figure~\ref{fig:he_example}. This graph consists of $L+1$ layers with the first $L$ layers having $d$ nodes. The nodes between two adjacent layers are fully connected. Finally, the last layer consists of one node (reward node) that is fully connected to nodes in layer $L$.

\begin{figure}[h]
        \centering
        \includegraphics[height=1.8 in]{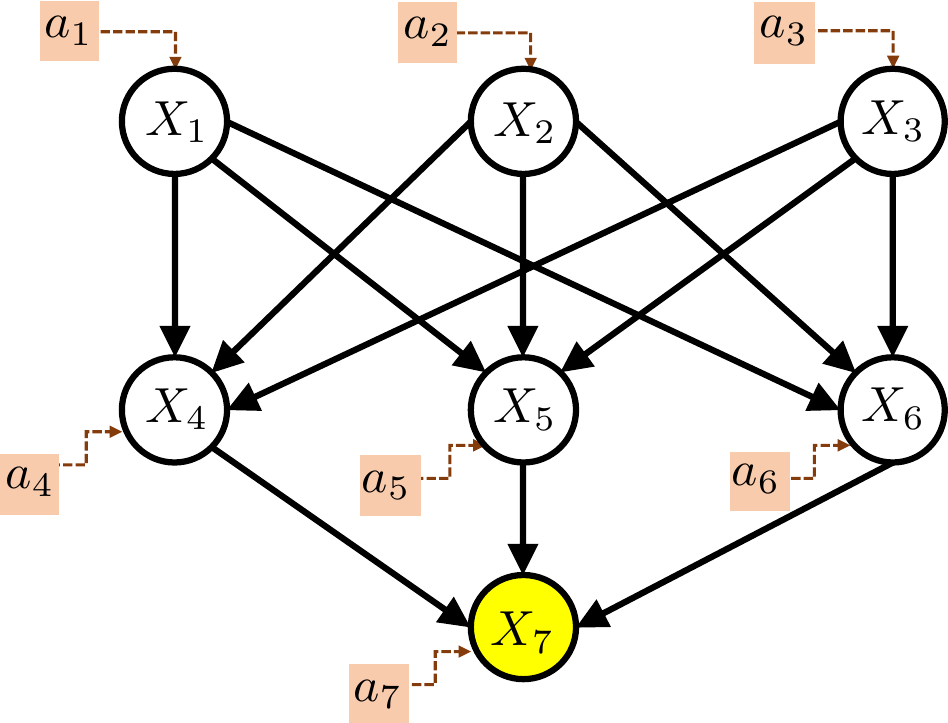}
        \caption{Hierarchical graph with  degree $d=3$ and causal length $L=2$.}
        \label{fig:he_example}
\end{figure}

\paragraph{Parameter setting.} In all experiments, we assume that the noise terms $\{\epsilon_i:i\in[N]\}$ are drawn from a standard Gaussian distribution $\mcN(0,1)$. To approximate the Bayesian, we assume that the parameters $\theta_i$ for $f_i$ have a Gaussian prior distribution and sample them from the Gaussian prior with small variance at each trail.
\begin{itemize}
    \item {\bf Polynomial SCMs.} For the causal bandit with polynomial SCMs, we consider the case when $p=2$, and set the mean of weights' Gaussian prior to $\frac{1}{d+1}$ to control the scaling behavior of the system. We set the intervention space to $\mcA_i=\{0,1\}$. 
    \item {\bf Neural network SCMs.} For the neural network SCMs, we consider the function space of a two-layer neural network with a leaky ReLU activation function with a slope of $\alpha=0.1$ and set the mean of weights' Gaussian prior to $\frac{1}{2\sqrt{d+1}}$. Furthermore, We set the intervention space as $\mcA_i=[0,1]$. 
\end{itemize}
\noindent\textbf{Algorithm settings.} For polynomial SCMs, the least square estimator $\tilde f_{i,t}$ in \eqref{equ:LS_estimates} can be solved by solving a linear system, and we sample $\bef$ from the posterior $\bar f_{i,t} \sim \mcN( \tilde f_{i,t}, \bar V_{i,t}^{-1})$, where $\bar V_{i,t}^{-1}$ is a Gram matrix related to the linear system. For neural network SCMs, we use stochastic gradient descent (SGD) to approximate the least square estimator $\tilde f_{i,t}$  defined in \eqref{equ:LS_estimates}, and we sample $\bar\bef$ from the posterior $\bar f_{i,t} \sim \mcN(\tilde f_{i,t}, V_{i,t}^{-1})$ with the Gram matrix $V_{i,t}$ defined as
\begin{equation}
    V_{i,t} = \sum_{s=1}^{t} X_{\Pa(i)}(s) X_{\Pa(i)}^{\top}(s) \ .
\end{equation}
We note that a similar sampling strategy is adopted by \citep{jun2017scalable, kveton2020randomized} for the generalized linear bandit.

\noindent\textbf{Regret performance.}
In Figure~\ref{fig:regret_poly} and Figure~\ref{fig:regret}, we present the cumulative regret of GCB-TS algorithm applied to polynomial SCMs and neural network SCMs, respectively. The underlying causal graph is set to be a hierarchical graph with maximum causal path length $L=2$. Each figure displays the cumulative regret for both $d=2$ and $d=3$.
It is noticeable that in scenarios with higher complexity, specifically the polynomial SCM of degree $d=3$, the regret scaling rates fall within the regime $\mcO(\sqrt{T})$, which is in line with our theoretical results. 

\begin{figure*}[h]
    \centering
    \begin{minipage}{.45\textwidth}
        \centering
        \includegraphics[width=6 cm]{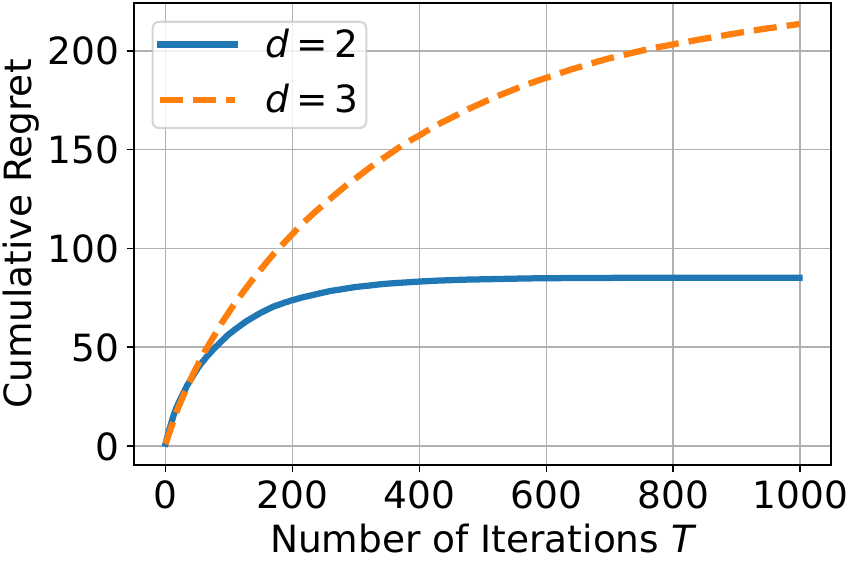}
        \caption{Cumulative regret vs $T$ (polynomial)}
        \label{fig:regret_poly}
    \end{minipage}
    \hspace{0.05\textwidth}
    \begin{minipage}{.45\textwidth}
        \centering        
        \includegraphics[width=6 cm]{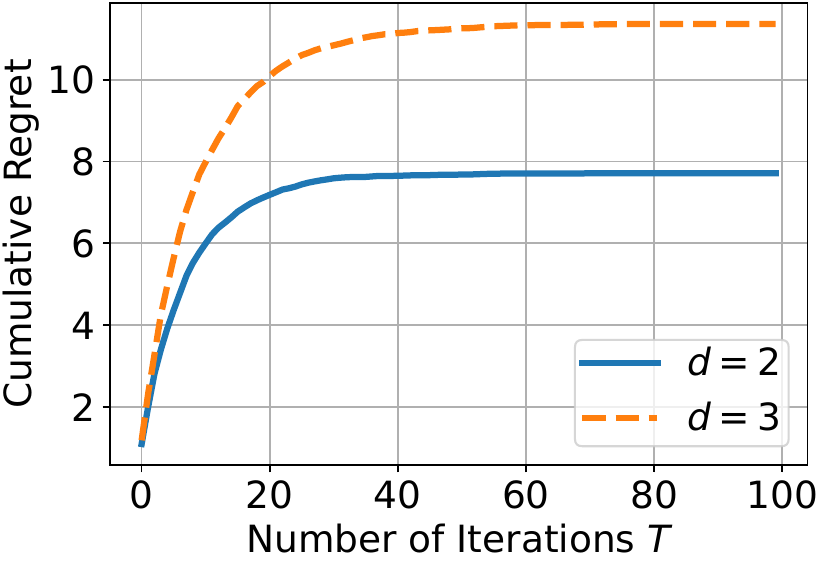}
        \caption{Cumulative regret vs $T$ (neural network)}
        \label{fig:regret}
    \end{minipage}
    \hspace{0.03\textwidth}
\end{figure*}

\noindent\textbf{Scaling behavior with graph parameters.} Figure~\ref{fig:digree_qua} and Figure~\ref{fig:digree} illustrate the variations of the cumulative regret versus the graph degree $d$ with the maximum causal path length $L=2$ under polynomial and neural network SMCs. Furthermore, Figure \ref{fig:digree_qua_1} and Figure \ref{fig:digree_1} display the corresponding variations when $L=1$. In these figures, we have also included the theoretical lower and upper bounds. In all four plots its is observed that the the scaling trend of the achievable regret are closer to the lower bound than the upper bounds. More specifically, the results in these figures indicate a polynomial behavior in $d$, which conforms to our theoretical results. 

\begin{figure*}[h]
    \centering
    \begin{minipage}{.45\textwidth}
        \centering
        \includegraphics[width=6 cm]{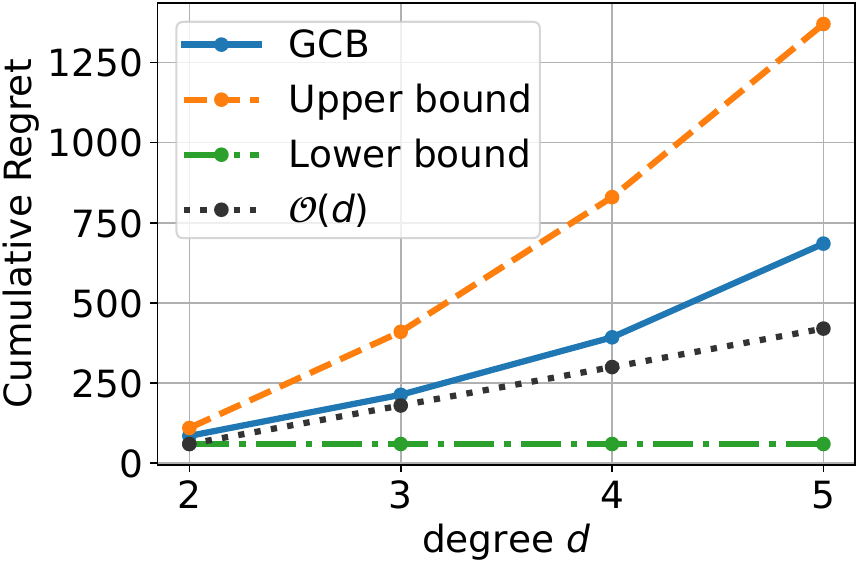}
        \caption{Cumulative regret vs degree $d$ for $L=2$ (polynomial)}
        \label{fig:digree_qua}
    \end{minipage}
    \hspace{0.05\textwidth}
      \begin{minipage}{.45\textwidth}
        \centering
        \includegraphics[width=6 cm]{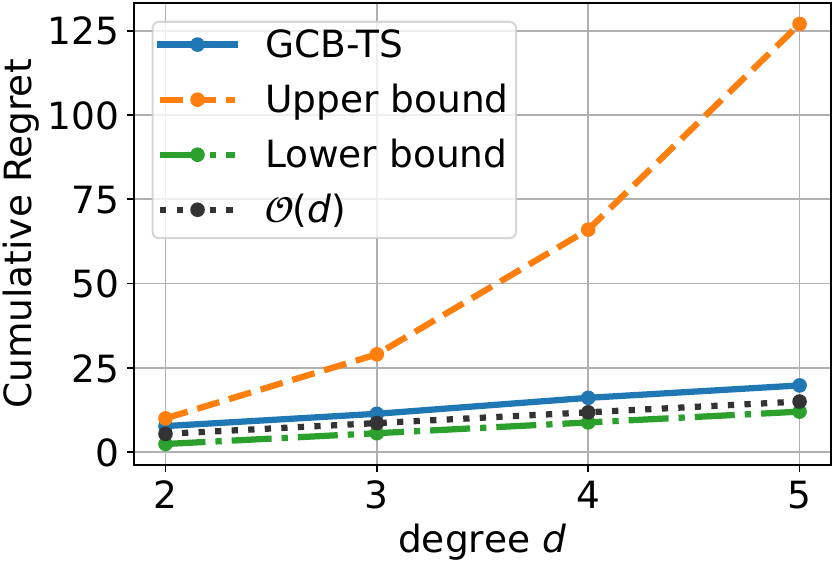}
        \caption{Cumulative regret vs degree $d$ for $L=2$ (neural network)}
        \label{fig:digree}
    \end{minipage}
\end{figure*}

\begin{figure*}[h]
    \begin{minipage}{.45\textwidth}
        \centering
        \includegraphics[width=6 cm]{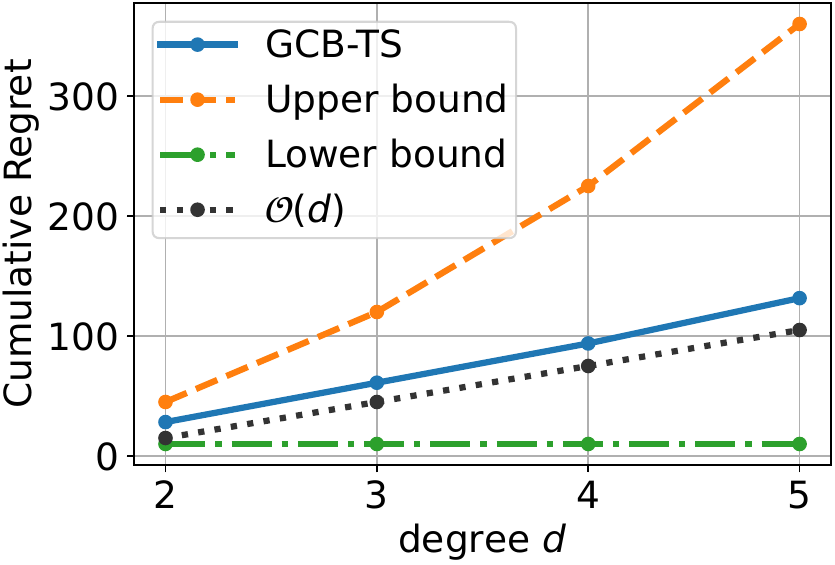}
        \caption{Cumulative regret vs degree $d$ for $L=1$ (polynomial)}
        \label{fig:digree_qua_1}
    \end{minipage}
    \hspace{0.05\textwidth}
    \begin{minipage}{.45\textwidth}
        \centering
        \includegraphics[width=6 cm]{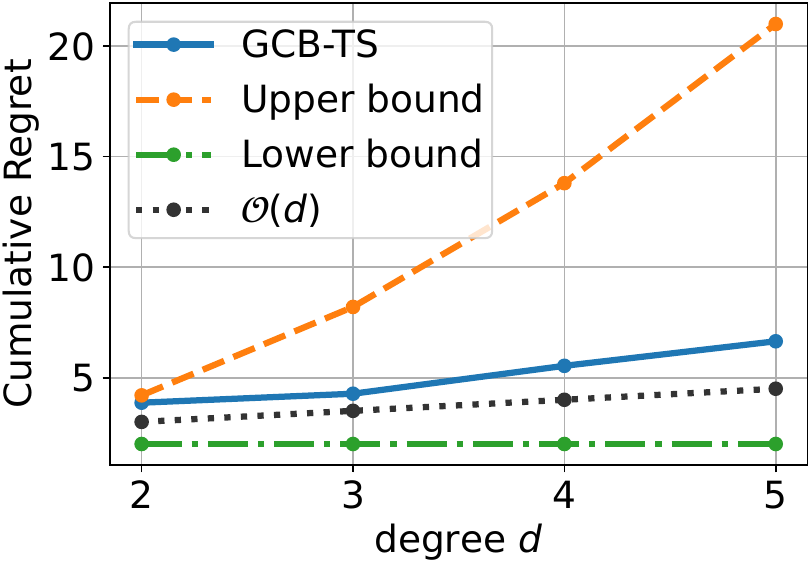}
        \caption{Cumulative regret vs degree $d$ for $L=1$ (neural network)}
        \label{fig:digree_1}
    \end{minipage}
\end{figure*}

Finally, Figure~\ref{fig:length_qua} and Figure~\ref{fig:length} elucidate the variations of the cumulative regret versus the maximum causal path length $L$, given a constant degree of $d=2$. The emerging patterns across these settings imply an exponential growth of regret with $L$, conforming with our theoretical findings. Notably, in the polynomial case, the scaling behavior of $L$ is considerably closer to that of the upper bound, attributed to the complexity of polynomial function spaces with growing value magnitude.

\begin{figure*}[h]
    \begin{minipage}{0.5\textwidth}
        \centering
    \includegraphics[width=6 cm]{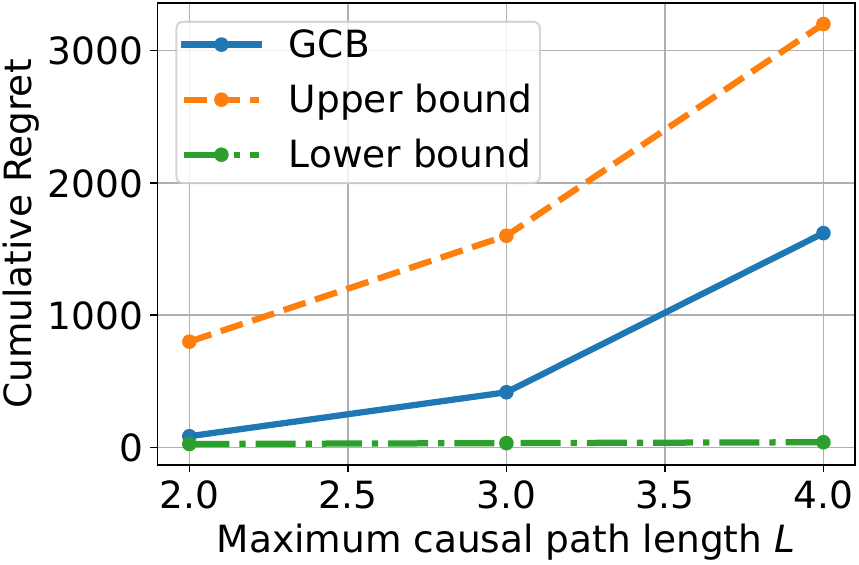}
        \caption{Cumulative regret vs $L$ for $d=2$\\ (polynomial)}
        \label{fig:length_qua}
    \end{minipage}
       \begin{minipage}{0.45\textwidth}
        \centering
    \includegraphics[width=6 cm]{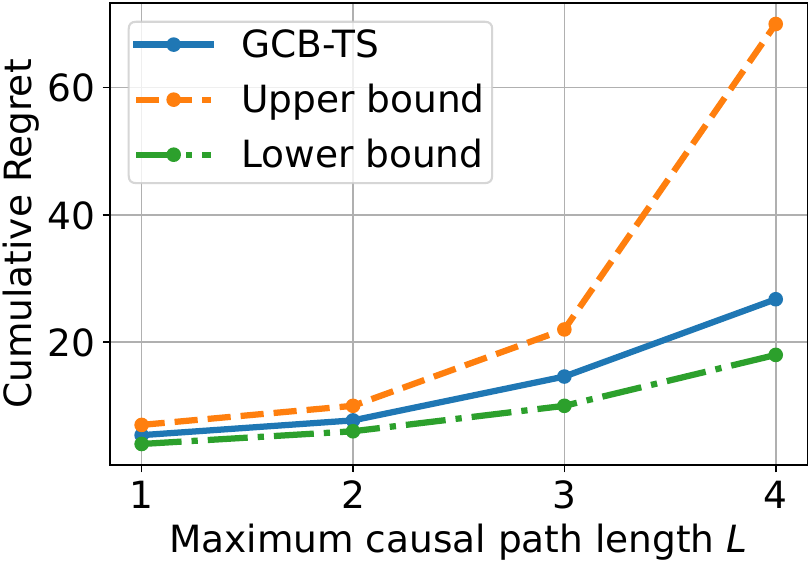}
        \caption{Cumulative regret vs $L$ for $d=2$\\ (neural network)}
        \label{fig:length}
    \end{minipage}
\end{figure*}

\begin{figure*}[h]
    \begin{minipage}{0.45\textwidth}
        \centering
    \includegraphics[width=6 cm]{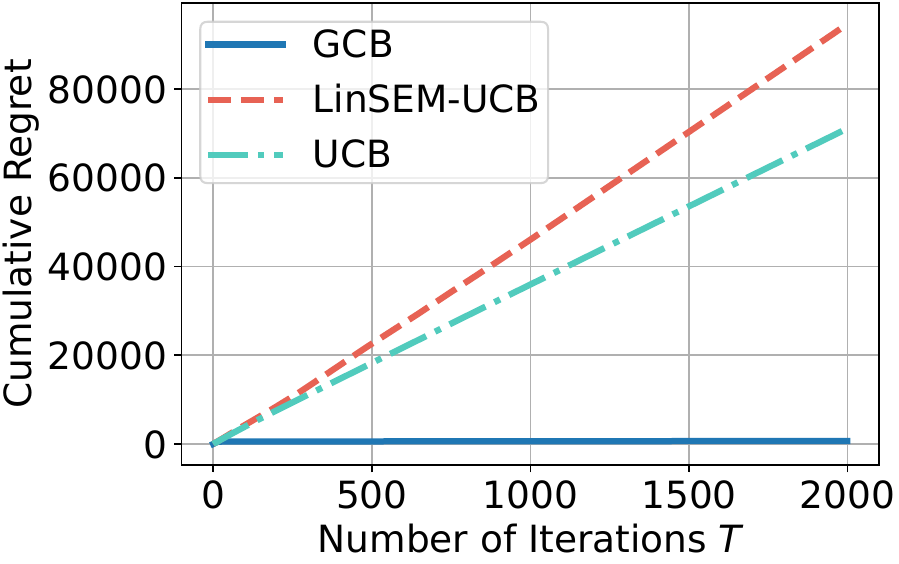}
        \caption{Cumulative regret for different \\ algorithms (polynomial)}
        \label{fig:Qua}
    \end{minipage}
       \begin{minipage}{0.45\textwidth}
        \centering
    \includegraphics[width=6 cm]{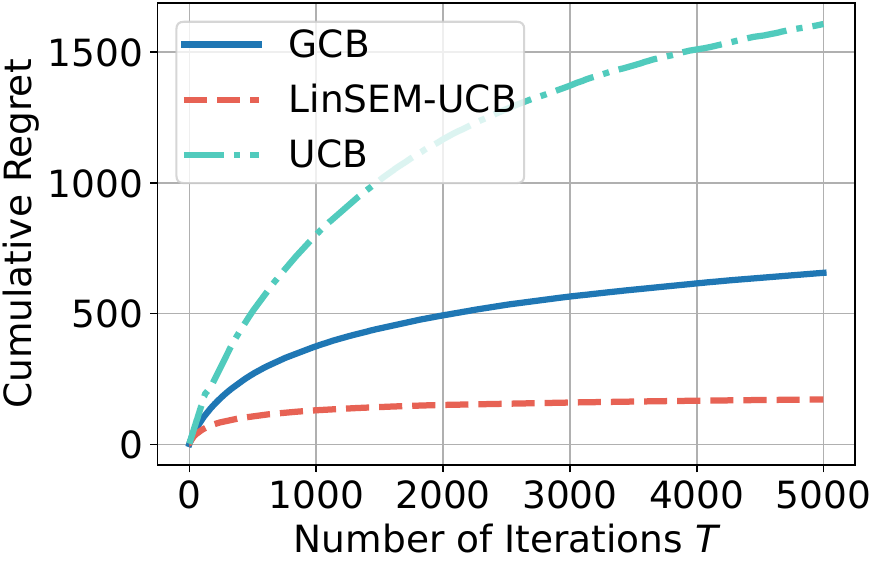}
        \caption{Cumulative regret for different \\ algorithms (linear)}
        \label{fig:Lin}
    \end{minipage}
\end{figure*}

\textbf{Comparison with baseline algorithms.} As the two most relevant existing approaches, we have compared the regret of our algorithm to those of LinSEM-UCB \citep{varici2022causal} and UCB \citep{agrawal1995sample}. In Figure~\ref{fig:Qua}, we provide the cumulative regret under a quadratic SCM. Both LinSEM-UCB and UCB algorithms yield linear regrets. LinSEM-UCB encounters failure due to a violation of the linearity of the system, and UCB fails because the reward node's distribution is heavy-tailed, which is difficult to estimate without information about the causal system. In Figure~\ref{fig:Lin},  we compare the results for linear SCMs. LinSEM-UCB, designed and optimized for this case, exhibits the lowest regret, while our algorithm shows a slightly higher regret. This discrepancy arises since our confidence radius is designed for general function classes, and its slightly weaker performance in this special setting is the cost of its universality. 


\section{Extension to $do$ interventions}
\label{sec:extenddo}
In this section, we extend the results we provided under soft interventions for $do$ interventions. Consider the same causal system as in Section~\ref{sec:GCB} with the distinction that we apply do $do$ interventions. Under applying $do$ intervention $a_i\in\mcA_i$ on node $i\in[N]$, we have $X_i=a_i$. We also use the convention of setting $a_i =0$ to generate the observational model on node $i$, under which the observation relationship is captured by $r_i(X_{\Pa(i)}) = f_i(X_{\Pa(i)}\;;\; 0) $. Hence, the SCMs under $do$ interventions can be restated as
\begin{equation}
    X_i =  \left\{
    \begin{array}{ll}
    r_i(X_{\Pa(i)})+\epsilon_i &\text{if } a_i=0 \;\; (\mbox{observational model)}\\
    a_i  &\text{if } a_i\neq 0 \;\; (\mbox{interventional model)}
    \end{array}\right. \ , \qquad \forall i\in[N] \ .
\end{equation}
where the functions $\{r_i:i\in[N]\}$ and are \emph{unknown} and $f_i$ and $r_i$ belong to a specified class $\mcF_i$  and $\mcR_i$, respectively.

\noindent\textbf{GCB Algorithms.} 
Our algorithm can be applied to this setting without modifications. The only impact of this change in the algorithms is in the estimators and confidence sets. Since the effects of the intervention are known to be constant, so we only need to estimate and construct confidence sets for the function $r_i \in \mathcal{R}_i$.

\noindent\textbf{Regret Analysis.} The regret guarantee in Theorem~\ref{thm:regret} and Corollary~\ref{cor:regret} still holds. For analyzing the regret guarantees in special SCMs, we provide the following relationships between the eluder dimensions and covering numbers of $\mcF_i$ and $\mcR_i$.
\begin{theorem}
    For a given function space $\mcF_i$ and the associated function space $\mcR_i$ defined for $do$ intervention,  we have the following relationships:
    \begin{align}
        \operatorname{dim}( \mcF_i,\alpha_i) & =  \operatorname{dim}(\mcR_i,\alpha_i) \ , \qquad 
       \mbox{and} \qquad   \cn_{\alpha_i}(\mcF_i)  =  \cn_{\alpha_i}(\mcR_i) \ .
    \end{align}
\end{theorem}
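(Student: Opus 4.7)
The plan is to establish a natural bijection between $\mcF_i$ and $\mcR_i$ in the do-intervention setup and then verify that both the sup-norm distance (relevant for the covering number) and the eluder semi-metric transfer cleanly across this bijection. The key observation is that under a $do$ intervention with $a_i \neq 0$, one has $X_i = a_i$ deterministically, which does not depend on $f_i$ at all; hence every $f_i \in \mcF_i$ collapses to $f_i(x; a) = a$ for $a \neq 0$, and all functional freedom lies in $f_i(x; 0) = r_i(x)$. This lets me define $\Phi: \mcF_i \to \mcR_i$ by $\Phi(f_i)(\cdot) \triangleq f_i(\cdot \, ; \, 0)$, which is a bijection.

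For the covering-number equality, I would write
\begin{equation*}
\|f_i - \tilde f_i\|_\infty = \sup_{(x,a)\in\mcZ_i} |f_i(x;a) - \tilde f_i(x;a)| = \sup_{x\in\R^{d_i}} |r_i(x) - \tilde r_i(x)| = \|r_i - \tilde r_i\|_\infty,
\end{equation*}
where the middle equality uses that for $a \neq 0$ both functions equal $a$ and so contribute zero to the supremum. An $\alpha$-cover of $\mcR_i$ therefore pulls back through $\Phi^{-1}$ to an $\alpha$-cover of $\mcF_i$ of the same size, and vice versa, giving $\cn_{\alpha_i}(\mcF_i) = \cn_{\alpha_i}(\mcR_i)$.

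For the eluder-dimension equality, my strategy is to argue that in any sequence $\{Z_i(m) = (X_{\Pa(i)}(m), a_i(m)) : m \in [n]\} \subset \mcZ_i$ realizing the $\epsilon$-eluder dimension of $\mcF_i$, every element with $a_i(m) \neq 0$ is redundant. Indeed, for such an $m$ and for any $f_i, \tilde f_i \in \mcF_i$, one has $f_i(Z_i(m)) = \tilde f_i(Z_i(m)) = a_i(m)$, so $Z_i(m)$ is trivially $\epsilon'$-dependent on its predecessors (no pair of functions can exhibit a gap at such a point) and can be deleted without affecting $\epsilon'$-independence of the rest. The pruned sequence consists only of points of the form $(x, 0)$, at which the $\epsilon'$-independence condition for $\mcF_i$ reads verbatim as the $\epsilon'$-independence condition for $\mcR_i$ at the corresponding points $x \in \R^{d_i}$. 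This gives $\operatorname{dim}(\mcF_i, \alpha_i) \leq \operatorname{dim}(\mcR_i, \alpha_i)$. The reverse inequality follows by lifting any eluder-dimension-witnessing sequence $\{x(m)\}$ for $\mcR_i$ to $\{(x(m), 0)\}$ in $\mcZ_i$, for which the condition is preserved by the identity $\Phi(f_i)(x) = f_i(x;0)$.

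The proof is essentially bookkeeping once the collapse-for-$a\neq 0$ observation is in hand; the main (minor) subtlety I would be careful with is checking that the pruning step in the eluder argument really preserves the ordered-independence property, i.e., that deleting a trivially dependent element does not turn a later $\epsilon'$-independent point into a dependent one. This is immediate because removing predecessors can only weaken the constraint $\sum_{k<m}(f_i(Z_i(k)) - \tilde f_i(Z_i(k)))^2 \leq (\epsilon')^2$, so the witnessing pair $(f_i, \tilde f_i)$ at $Z_i(m)$ continues to satisfy the definition after pruning.
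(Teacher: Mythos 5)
Your proof is correct and follows essentially the same route as the paper's: both rest on the observation that $f_i(\cdot\,;a_i)$ collapses to the constant $a_i$ for $a_i\neq 0$, so that points with $a_i\neq 0$ are trivially dependent (hence excluded from any eluder-witnessing sequence) and contribute nothing to the sup-norm, after which the bijection $f_i\mapsto f_i(\cdot\,;0)$ transfers both quantities verbatim. Your explicit check that pruning preserves the ordered-independence property is a slightly more careful rendering of a step the paper treats implicitly, but the argument is the same.
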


\begin{proof}
We first provide the relationship for eluder dimensions, ensured by the covering numbers.

\textbf{Eluder dimension.} 
We start the proof by showing that the longest ordered set in Definition~\ref{def:eluder} does not include the inputs $Z_i=(X_{\Pa(i)},a_i)$ with $a_i\neq 0$ when $\operatorname{dim}(\mcF_i,\alpha_i)\geq 2$. This is due to the fact that $Z_i$ is always $\alpha_i$-dependent on other inputs in $\mcZ_i$ since for any two functions $f,\tilde f \in\mcF_i$ we have
\begin{equation}
    f(X_{\Pa(i)} \f a_i) - \tilde f(X_{\Pa(i)} \f a_i) = a_i - a_i = 0 < \alpha_i \ .
\end{equation} 
Hence, it is sufficient to prove the next property. Corresponding to any given ordered set $\{Z_i(m):m\in[n]\}$, where $Z_i (m)= (X_{\Pa(i)}(m), a_i(m))$, we define the ordered set $\{\bar Z_i(m):m\in[n]\}$, where $\bar Z_i (m)= X_{\Pa(i)}(m)$. We show that the following two statements are equivalent for these two ordered sets.
\begin{itemize}
    \item \textbf{S1.} Each element  in $\{Z_i(m):m\in[n]\}$ is $\alpha_i$-independent of its predecessors with respect to $\mcF_i$.
    \item \textbf{S2.} Each element in $\{\bar Z_i(m):m\in[n]\}$ is $\alpha_i$-independent of its predecessors with respect to $\mcR_i$.
\end{itemize}
This is obvious due to the fact that for any $f\in\mcF_i$ and $r\in \mcR_i$ that satisfy $r(\cdot) = f(\cdot \f 0)$ we have
\begin{equation}
    r(\bar Z_i(m)) = f(Z_i(m)) \ .
\end{equation}
Thus, based on the above two claims, we conclude that $\operatorname{dim}(\mcF_i,\alpha_i) =  \operatorname{dim}(\mcR_i,\alpha_i)$.

\textbf{Covering number.} We prove this part by
showing that for any $\alpha_i$-cover of $\mcF_i$, there exists an $\alpha_i$-cover of $\mcR_i$ with an identical cardinality, and vice versa. For this purpose, for any $\mcC$ that is an $\alpha_i$-cover of $\mcF_i$, we construct $\bar \mcC$ as 
    \begin{equation}
        \bar\mcC = \big\{r\mid r=f(\cdot \f 0) \text{ for } f \in \mcC\big\} \ .
    \end{equation}
    Subsequently, for any $r\in \mcR_i$, we can construct $f \in \mcF_i$ as
    \begin{equation}
        f(\cdot \f a_i) = 
        \left\{
        \begin{array}{ll}
        r(\cdot)+\epsilon_i & a_i=0 \\
        a_i  & a_i\neq 0
    \end{array}\right. \ .
    \end{equation}
    Since $\mcC$ is an $\alpha_i$-cover of $\mcF_i$, we can find a function $\tilde f\in\mcC$ such that $\|f-\tilde f\|_{\infty}\leq \alpha_i$. Hence,  we find $\tilde r = \tilde f(\cdot \f 0)\in\bar\mcC$ such that
    \begin{equation}
        \norm{r-\tilde r}_{\infty} = \norm{f-\tilde f}_{\infty}  \leq \alpha \ .
    \end{equation}
        Thus, $\bar \mcC$ is an $\alpha_i$-cover of $\mcR_i$ with the same cardinality as $\mcC$ and we have
    \begin{equation}\label{eq:cn_ineq1}
        \cn_{\alpha_i}(\mcF_i) \geq \cn_{\alpha_i}(\mcR_i)  \ .
    \end{equation}
    On the other hand, for any $\bar\mcC$  that is an $\alpha_i$-cover of $\mcR_i$, if we construct $\mcC$ as 
    \begin{equation}
        \mcC = \left\{f \; \Big| \; f(\cdot,a_i)=\left\{\begin{aligned}
        &g(\cdot)+\epsilon_i &\text{if } a_i=0\\
        &a_i  &\text{Otherwise}
    \end{aligned}\right. \ , \;\;  \text{ for } r\in\bar\mcC \right\} \ .
    \end{equation}
    Then for any $f \in \mcF_i$, we can construct $g$ as $g = f(\cdot, 0)$. Since $\bar \mcC$ is an $\alpha_i$-cover of $\mcR_i$, there exists $\tilde r \in \bar \mcC$ such that $\norm{r-\tilde r}\leq \alpha_i$. Hence we find $\tilde f\in\mcC$ that satisfies
    \begin{equation}
        \tilde f(\cdot,a_i)=\left\{\begin{aligned}
        &\tilde r(\cdot)+\epsilon_i &\text{if } a_i=0\\
        &a_i  &\text{Otherwise}\end{aligned} \right.  \ .
    \end{equation}

    We know $\tilde f\in \mcC$ and
    \begin{equation}
        \norm{f-\tilde f}_{\infty} = \norm{r-\tilde r}_{\infty}  \leq \alpha_i \ .
    \end{equation}
Thus, $\mcC$ is an $\alpha_i$-cover of $\mcF_i$ with the same cardinality as $\bar \mcC$ and we have
    \begin{equation}\label{eq:cn_ineq2}
        \cn_{\alpha_i}(\mcF_i) \leq \cn_{\alpha_i}(\mcR_i) \ .
    \end{equation}
    Therefore, by comparing~\eqref{eq:cn_ineq1} and ~\eqref{eq:cn_ineq2} we conclude
    \begin{equation}
        \cn_{\alpha_i}(\mcR_i) = \cn_{\alpha_i}(\mcF_i)  \ .
    \end{equation}
\end{proof}

\end{document}